\definecolor{darkred}{RGB}{150,0,0}
\definecolor{darkgreen}{RGB}{0,150,0}
\definecolor{darkblue}{RGB}{0,0,200}
\newtheorem{theorem}{Theorem}[section]
\newtheorem{lemma}[theorem]{Lemma}
\newtheorem{proposition}[theorem]{Proposition}
\newtheorem{definition}[theorem]{Definition}
\newcommand{\eps}{\varepsilon}
\newcommand{\act}[1]{\text{ReLU}(#1)}
\newcommand{\som}[1]{}
\newcommand{\ysatt}[1]{}
\newcommand{\distas}{\overset{\text{i.i.d.}}{\sim}}
\newcommand{\beq}{\begin{equation}}
\newcommand{\eeq}{\end{equation}}
\newcommand{\fn}{\text{final}}
\newcommand{\nn}{\nonumber}
\newcommand{\la}{\lambda}
\newcommand{\bt}{\boldsymbol{\theta}}
\newcommand{\bg}{\boldsymbol{\gamma}}
\newcommand{\bz}{\boldsymbol{\zeta}}
\newcommand{\btr}{\boldsymbol{\theta}^R}
\newcommand{\bth}{\hat{\boldsymbol{\theta}}}
\newcommand{\btb}{\bar{\boldsymbol{\theta}}}
\newcommand{\bta}{\boldsymbol{\theta}_{\text{aux}}}
\newcommand{\bOt}{\bigotimes}
\newcommand{\Lcp}{{\cal{L}}_{\text{PO}}}
\newcommand{\Lca}{{\cal{L}}_{\text{AO}}}
\newcommand{\V}{{\mtx{V}}}
\newcommand{\ini}{\text{init}}
\newcommand{\Lc}{{\cal{L}}}
\newcommand{\Hc}{{\cal{H}}}
\newcommand{\Lch}{{\hat{\cal{L}}}}
\newcommand{\ix}{\Delta}
\newcommand{\ixi}{{\Delta_{\W}}}
\newcommand{\ixo}{{\Delta_{\Vb}}}
\newcommand{\ixf}{{\Delta_1}}
\newcommand{\iii}{{\Delta_i}}
\newcommand{\iib}{{\bar{\Delta}_i}}
\newcommand{\ixs}{{\Delta_2}}
\newcommand{\Dc}{{\cal{D}}}
\newcommand{\Vc}{{\cal{V}}}
\newcommand{\La}{{\boldsymbol{\Lambda}}}
\newcommand{\bSi}{{\boldsymbol{{\Sigma}}}}
\newcommand{\bmu}{{\boldsymbol{{\mu}}}}
\newcommand{\onebb}{{\mathds{1}}}
\newcommand{\Iden}{{\mtx{I}}}
\newcommand{\M}{{\mtx{M}}}
\newcommand{\z}{{\vct{z}}}
\newcommand{\tn}[1]{\|{#1}\|_{\ell_2}}
\newcommand{\tone}[1]{\|{#1}\|_{\ell_1}}
\newcommand{\tin}[1]{\|{#1}\|_{\ell_\infty}}
\newcommand{\tf}[1]{\|{#1}\|_{F}}
\newcommand{\Cc}{\mathcal{C}}
\newcommand{\bteta}{\boldsymbol{\theta}}
\newcommand{\Sc}{\mathcal{S}}
\newcommand{\Mc}{{\cal{M}}}
\newcommand{\pa}{{\partial}}
\newcommand{\Nn}{\mathcal{N}}
\newcommand{\vb}{\vct{v}}
\newcommand{\Ic}{{\mathcal{I}}}
\newcommand{\Icn}{{\mathcal{I}}^N}
\newcommand{\Xb}{\mtx{\bar{X}}}
\newcommand{\w}{\vct{w}}
\newcommand{\li}{\left<}
\newcommand{\ri}{\right>}
\newcommand{\ab}{\vct{a}}
\newcommand{\bb}{\vct{b}}
\newcommand{\h}{\vct{h}}
\newcommand{\g}{{\vct{g}}}
\newcommand{\Fc}{\mathcal{F}}
\newcommand{\st}{\star}
\newcommand{\x}{\vct{x}}
\newcommand{\y}{\vct{y}}
\newcommand{\W}{\mtx{W}}
\newcommand{\bgl}{{~\big |~}}
\newcommand{\R}{\mathds{R}}
\newcommand{\Pro}{\mathds{P}}
\newcommand{\E}{\operatorname{\mathds{E}}}
\newcommand{\grad}[1]{{\nabla\Lc(#1)}}
\newcommand{\eb}{\vct{e}}
\newcommand{\vct}[1]{\bm{#1}}
\newcommand{\mtx}[1]{\bm{#1}}
\newcommand{\Pc}{{\cal{P}}}
\newcommand{\X}{{\mtx{X}}}
\newcommand{\Vb}{{\mtx{V}}}
\numberwithin{equation}{section} 
\def \endprf{\hfill {\vrule height6pt width6pt depth0pt}\medskip}
\newenvironment{proof}{\noindent {\bf Proof} }{\endprf\par}
\newcommand{\vs}{\vspace*{0pt}}
\newcommand{\mc}[1]{}
\title{Exploring Weight Importance and Hessian Bias in Model Pruning}
\date{}
\author{%
  Mingchen Li\thanks{Email: \texttt{mli176@ucr.edu}.~~~~~Computer Science and Engineering, University of California, Riverside.} \quad\quad Yahya Sattar\thanks{Email: \texttt{ysatt001@ucr.edu}.~~Electrical and Computer Engineering, University of California, Riverside.} \quad\quad Christos Thrampoulidis\thanks{Email: \texttt{cthrampo@ucsb.edu}.~Electrical and Computer Engineering, University of California, Santa Barbara.}\quad\quad Samet Oymak\thanks{Email: \texttt{oymak@ece.ucr.edu}.~Electrical and Computer Engineering, University of California, Riverside.}
}
\begin{document}

\maketitle

\vs\begin{abstract}

Model pruning is an essential procedure for building compact and computationally-efficient machine learning models. A key feature of a good pruning algorithm is that it accurately quantifies the relative importance of the model weights. While model pruning has a rich history, we still don't have a full grasp of the pruning mechanics even for relatively simple problems involving linear models or shallow neural nets. In this work, we provide a principled exploration of pruning by building on a natural notion of importance. For linear models, we show that this notion of importance is captured by covariance scaling which connects to the well-known Hessian-based pruning. We then derive asymptotic formulas that allow us to precisely compare the performance of different pruning methods. For neural networks, we demonstrate that the importance can be at odds with larger magnitudes and proper initialization is critical for magnitude-based pruning. Specifically, we identify settings in which weights become more important despite becoming smaller, which in turn leads to a catastrophic failure of magnitude-based pruning. Our results also elucidate that implicit regularization in the form of Hessian structure has a catalytic role in identifying the important weights, which dictate the pruning performance.

\end{abstract}

\vs\vs

\section{Introduction}\vs
\som{Consider kicking this out!!!}Contemporary machine learning models such as deep neural networks often achieve good statistical accuracy at the expanse of large model sizes. On the other hand, a growing list of application domains demand compact and energy efficient machine learning models. Model pruning (i.e.~sparsification) techniques are critical for addressing the challenge of building models that are simultaneously accurate and efficient. In this work, we investigate the fundamental principles of model pruning by exploring optimization dynamics and high-dimensional behavior of pruning approaches.
\vs Pruning methods have a rich history and the literature on neural network pruning goes back to 1980's \cite{mozer1989skeletonization,lecun1990optimal,hassibi1993second}. A fundamental approach in pruning is the accurate quantification of importance of each weight (i.e.~connections) so that when a weight is removed, we can know how much the model will suffer. An intuitive approach is pruning by the weight magnitude, i.e.~removing the weights below a certain threshold. A more principled approach is developing an importance (i.e.~saliency) criteria which captures the sensitivity of the loss with respect to the weights. For instance, Optimal Brain Damage (OBD) \cite{lecun1990optimal} and Optimal Brain Surgeon \cite{hassibi1993second,hassibi1994optimal} calculate a Hessian-based importance criteria by adjusting the magnitudes. Despite its practical significance, a statistical understanding of pruning presents interesting challenges. Deep networks are often trained in an over-parameterized regime where the network size is well beyond what is necessary for achieving zero training error. Thus, network weights can interpolate the data in many ways and it is not immediately clear which weight gets the credit for learning. Pruning typically happens after training this large initial network possibly without any $\ell_1,\ell_2$ regularization. Deep nets may also converge to different solutions under different initialization or data preprocessing. These motivate a careful study of pruning mechanics: Which approach works when? What is the role of initialization? Does over-parameterization affect the outcome and if so, can it be quantified?
\vs\noindent{\bf{Contributions:}} In this work, we explore model pruning, importance quantification and the role of Hessian structure in the pruning performance. We study three different importance criteria and corresponding pruning methods:  (i) Hessian-based importance (HI) and pruning (HP), (ii) Magnitude-based importance (MI) and pruning (MP), and a third notion, which we call (iii) Natural importance (NI) and pruning (NP). For linear models and shallow neural-networks, we design a class of {\em{equivalent problems}} which enable us to assess the role of {\em{Hessian structure}} on the robustness and performance of different importance measures. Our specific contributions are as follows.

\vs\noindent$\bullet$ {\bf{Understanding covariance bias and pruning performance:}} For linear models, we introduce a class of problems where Hessian, which corresponds to the feature covariance matrix, is varied using diagonal scaling, while preserving target labels. We show that for over-determined problems HI and NI exhibit scaling invariance, whereas, MI is highly brittle. For over-parameterized problems, we show that scaling invariance no longer holds and the covariance/Hessian structure dictates the eventual pruning performance. We introduce analytical performance formulas, precisely capturing these phenomena, revealing that {\em{implicit bias {(as enforced by the Hessian structure)}  can boost HP while hurting MP}}. Our approach also allows us to quantify {\em{negative bias}} when principal covariance directions are mis-aligned with the important weights. To the best of our knowledge, this is the first work that provides {\em{exact analytical formulas for the performance of MP/HP}}.

\vs\noindent$\bullet$ {\bf{Understanding Hessian bias and the role of initialization:}} For two-layer ReLU networks, we tackle the following question: {\em{If both layers are very large and can interpolate the training data, who contributes more towards learning, who gets pruned eventually {\color{black}and at what cost}?}} We study these questions via a simple, yet insightful, network initialization model and show that the answers depend crucially on the Hessian structure which governs the training dynamics. Our empirical study reveals that: (i) HI is invariant to Hessian bias and (ii) as MI decreases, NI (which captures the training/test accuracy) increases. To explain this, we first show that magnitudes of the weights and magnitudes of their Hessians move in opposing directions and then establish a {\em{``larger Hessian wins more''}} theorem which accurately quantifies the relative contribution of different weight groups (e.g.~layers) during training in terms of the Hessian bias.

\vs\vs\vs
\subsection{Related work}\vs
\vs
Our work relates to the literature on neural net pruning, implicit regularization and over-parameterization. For analysis, we also use tools related to high-dimensional statistics \cite{thrampoulidis2015regularized,OymLAS,thrampoulidis2018precise,hastie2019surprises}. 

\vs
\noindent {\bf{Implicit bias and over-parameterization:}} Contemporary deep networks often contain many more parameters than the dataset size and there is a growing literature dedicated to understanding their optimization/generalization properties and how over-parameterization can act as a catalyst. A key observation is that gradient-based algorithms are implicitly guided by the problem structure towards certain favorable solutions \cite{arora2018optimization,neyshabur2014search}.  For linear models, implicit bias phenomena is studied for various loss functions and algorithms (e.g.~logistic loss converging to max-margin solution on separable data) \cite{ji2018risk,soudry2018implicit,nacson2019stochastic}. Recent works show that such results continue to hold for nonlinear problems \cite{gunasekar2017implicit,oymak2019overparameterized,azizan2018stochastic}. This line of works led to the more recent generalization/optimization guarantees for deep networks and their connections to random features \cite{du2019gradient,allen2019convergence,chizat2019lazy,belkin2018understand,belkin2019does,liang2018just,mei2019generalization}. A related line of work connects the benefits of over-parameterization to the double descent phenomena \cite{nakkiran2019deep,belkin2019two,belkin2019reconciling,hastie2019surprises}. 


\mc{\cite{molchanov2019importance} discusses weight import w.r.t. loss change. limitation of weight pruning. proposes pruning methods for gradient and hessian weighing} 
\vs\noindent {\bf{Neural network pruning:}} The large model sizes in deep learning led to a substantial interest in model pruning/quantization \cite{han2015deep,hassibi1993second,lecun1990optimal}. The network pruning literature is diverse and involves various architectural, algorithmic, and hardware considerations \cite{sze2017efficient,han2015learning}. Recent works \cite{han2015learning,frankle2019lottery,franklestabilizing} use magnitude-based pruning criteria and achieve stellar performance. Related to over-parameterizarion, lottery ticket hypothesis \cite{frankle2018lottery} shows that large neural networks contain a small subset of favorable weights (for pruning) which can achieve similar performance as the original network when trained from same initialization. \cite{zhou2019deconstructing,malach2020proving} demonstrates that these subsets may achieve good test performance even without any training. \cite{tian2019luck} theoretically connects lottery tickets to over-parameterization.
Various saliency-based approaches are proposed for neural net pruning \cite{hassibi1993second,hassibi1994optimal,lecun1990optimal,dong2017learning}. \cite{lee2018snip,wang2020picking} prune the network before training by the connection sensitivity or preserving the gradient flow. \cite{shunshi2019one} uses Jacobian-based pruning for recurrent networks. Furthermore, \cite{aghasi2017net,oymak2018learning,jin2016training} uses $\ell_1$ penalization for pruning and provides certain provable guarantees. 

\vs
The rest of the paper is organized as follows. Section~\ref{sec: setup} sets the notation and introduces definitions on importance and pruning. Section \ref{sec linear} studies pruning for linear models, characterizes covariance bias, and introduces analytical performance formulas. Section \ref{neural sec} explores pruning for neural network and introduces results on optimization and pruning dynamics and Section~\ref{sec:conc} provides a discussion. 

\vs\vs\vs\section{Problem Setup}\vs\vs\vs\label{sec: setup}

We first set the notation. For a vector $\vb$, we denote by $\tn{\vb}$ its Euclidean norm. $\odot$ returns the Hadamard (entrywise) product of two vectors. The $(i,j)$-th element of a matrix $\M$ is denoted by $\M_{i,j}$. The minimum singular value, spectral norm, and Frobenius norm of $\M$ is denoted by $\sigma_{\min}(\M),\|\M\|,\tf{\M}$ respectively. $\Iden_k$ is the identity matrix of size $k$. The set $\{1,\dots,p\}$ is denoted by $[p]$. Given $\ix\subset[p]$, $\bar{\ix}=[p]-\ix$ and $\bt_{\ix}$ denotes the vector obtained by setting the entries of $\bt$ over $\bar{\ix}$ to zero. $\onebb_{p}$ denotes the all ones vector in $\R^{p}$.

To proceed, we review definitions that will be discussed throughout. Our discussion will stem from the following definition which captures the impact of a set of weights on the loss function.
\vs\begin{definition}[Natural importance (NI)]\label{def: weight impo} 
Given a loss function $\Lc(\bteta)$, a reference vector $\btr$ and set of indices $\ix\subseteq[p]$, note that $\btr_{{\ix}}+\bt_{\bar{\ix}}$ is the vector obtained by replacing the entries of $\bt$ at indices $\ix$ by the corresponding entries of $\btr$. The NI of the weights of $\bt$ over $\ix$ with respect to (w.r.t)~$\Lc$ is defined as
\[
\Icn_{\ix}(\bt,\btr) = \Lc(\btr_{{\ix}}+\bt_{\bar{\ix}}) - \Lc(\bt).
\]
\end{definition}
\mc{I feel definition of $\bt_{\bar{\ix}}^R$ is not consistent with $\bt_\ix$ but they have similar form. If we use same definition of $\bt_\ix$, current $\bt_{\bar{\ix}}^R$ can be written as $\bt_\ix$+$\bt_{\bar{\ix}}^R$}
When $\btr=0$, we will use the notation $\Icn_\ix(\bt)$. NI quantifies the {\em{exact change in the loss and captures the problem-dependent nature of pruning}}.\som{motivate better?} The loss function in practice can be training (or test) loss or classification error. Here, the vector $\btr$ aims to quantify the relative benefit of the change of weights of $\bt$ with respect to a reference. For our purposes, we discuss two choices for the reference vector, which we call pruning and init-pruning, respectively.

\mc{regular can write as $\bt_\ix$ and Init-pruning is $\bt_\ix+\bt^R_{\bar{\ix}}$}
$\bullet$ {\bf{(Regular) Pruning:}} This is the standard pruning where the goal is to obtain a sparse model, thus the reference vector is $\btr=0$.\\
$\bullet$ {\bf{Init-Pruning:}} Deep network training is often initialized from nonzero weights $\bt_0$ such as random initialization or pre-trained weights. In this case, the contribution of different weights throughout the optimization can be assessed with respect to the point of initialization by choosing $\btr=\bt_0$. 

We remark that, our characterization of the weight importance is similar to the saliency criterion which is widely used in literature on model pruning/trimming~\cite{lecun1990optimal,lee2018snip,mozer1989skeletonization,sum1999kalman}. Besides Definition \ref{def: weight impo}, we also consider two other commonly-accepted importance criteria, which can be viewed as proxies for NI. To keep the discussion focused, the next two definitions only consider regular pruning i.e.~$\bt^R=0$.
\vs\begin{definition}[Magnitude- and Hessian-based Importance] Recall Def.~\ref{def: weight impo}. Suppose $\Lc$ is twice differentiable with Hessian $\Hc(\bt)=\nabla^2\Lc(\bt)$. The MI $\Ic^M_{\ix}(\bt)$ and HI $\Ic^H_{\ix}(\bt)$ are defined as 
\begin{align}
&\Ic^M_\ix(\bt)=\sum_{i\in \ix} \bt_i^2\quad\text{and}\quad\Ic^H_\ix(\bt)=\sum_{i\in \ix} \Hc(\bt)_{i,i}\bt_i^2.
\end{align}
\end{definition}
Observe that our definition of HI is based on Optimal Brain Damage (OBD) \cite{lecun1990optimal}. Next, we define pruning based on a given importance criteria. A pruning algorithm identifies a set of weights with the smallest importance and sets them to zero. 
\mc{I feel current def is counter intuitive. why don't we use $\Pi_s(\bt)=\bt_{\ix}\quad\text{where}\quad \ix=\arg\max_{|\ix|=s}\Ic_\ix(\bt).$}
\vs\begin{definition} [Pruning] Given an importance criteria $\Ic$ (e.g.~$\Ic^N,\Ic^M,\Ic^H$), a vector $\bt$, and a target sparsity $s$, the pruning algorithm returns an $s$-sparse model $\Pi_s(\bt)$ (e.g.~$\Pi^{N}_s$, $\Pi^{M}_s$, $\Pi^{H}_s$) where
\[
\Pi_s(\bt)=\bt_{\bar{\ix}}\,,\quad\text{for}\quad \ix=\arg\min_{|\ix|=p-s}\Ic_\ix(\bt). 
\]
\end{definition}
We will study and compare three different methods of pruning, namely, magnitude-based~(MP), Hessian-based~(HP) and natural pruning~(NP). While NI captures the ``true importance'', NP is a combinatorially challenging subset selection problem and HP and MP provides computationally-efficient alternatives. For MP, this definition reduces to the hard-thresholding operation. Furthermore, MP and HP coincide when the Hessian has equal diagonal entries. We will focus our attention on pruning the trained model. Thus, typically we are interested in pruning the minimizer of the empirical (or population) loss. The following sections will relate these pruning methods, compare their performances, and explore the role of implicit regularization in pruning.

\vs\vs
\section{Importance and Covariance Bias for Linear Models}\label{sec linear}\vs\vs



This section provides our results on pruning linear models and the role of feature covariance. Given a data distribution $\Dc$, we obtain a dataset $\Sc$ containing $n$ i.i.d.~samples $\Sc=(\x_i,y_i)_{i=1}^n\distas \Dc$. Let $(\x,y)\sim\Dc$ be a generic sample. We assume $(\x,y) \in (\R^p, \R)$ has finite second moments. 

\noindent {\bf{Covariance/Hessian structure:}} To understand the role of feature covariance (i.e.~Hessian) on pruning, we introduce a class of datasets where the input features are shaped by an invertible diagonal scaling matrix $\La\in\R^{p\times p}$ while output label $y$ is preserved. Here, a key motivation is modeling the properties of neural networks where the Hessian spectrum is not flat and often low-rank \cite{hastie2019surprises,sagun2017empirical,papyan2018full,mei2019generalization,arora2019fine}. The intuition is that the importance of a weight captures the contribution of the corresponding input feature and should be invariant to how the feature is scaled. Perhaps surprisingly, we will also show this intuition fails for over-parameterized problems. To proceed, given $\La$, we consider a distribution $\Dc_{\La}$, with samples $(\x^{\La},y)\sim\Dc_{\La}$ distributed as $(\La\x,y)$. Similarly, given $\Sc$, we generate a dataset $\Sc_{\La}=(\x^\La_i,y_i)_{i=1}^n$ where $\x^\La_i=\La\x_i$. We gather the data in matrix notation via \som{Consider better motivation}\som{In neural networks and kernels, design matrix may not be normalized which motivates an understanding of the impact of covariance structure, we will consider a class of problems}
\[
\X_\La=[\x^\La_1~\x^\La_2~\dots~\x^\La_n]^T\in\R^{n\times p} \quad \text{and}\quad \y=[y_1~y_2~\dots~y_n]^T \in \R^n.
\] 
To proceed, using quadratic loss, we define the empirical (training) and population (test) losses,
\begin{align}
\Lch_{\La}(\bt):=\frac{1}{n}\sum_{i=1}^n (y_i-\bt^T\x^{\La}_i)^2=\frac{1}{n}\tn{\y-\X_\La\bt}^2\quad\text{and}\quad\Lc_{\La}(\bt):=\E[(y-\bt^T\x^{\La})^2].\label{eqn: ERM linear}
\end{align}
Let $\bth^{\La}, \btb^{\La}$ be the global minima of $\Lch_{\La}$ and $\Lc_{\La}$ respectively. Let $\bSi=\E[\x\x^T]$ be the population covariance and $\bb=\E[\x y]$ be the cross-correlation. For simplicity, we assume $\bSi$ is full-rank. {\em{We will drop the subscript $\La$ when $\La=\Iden_p$}}. The solutions $\bth^{\La}, \btb^{\La}$ are given by 
\[
\bth^{\La}=\X^\dagger_{\La} \y\quad\text{and}\quad\btb^{\La}=\La^{-1}\bSi^{-1}\bb,
\] 
respectively, where $\dagger$ denotes the pseudo-inverse. The following lemma is instructive in understanding the weight importance and invariance to feature scaling for the least-squares problem above~\eqref{eqn: ERM linear}.
\vs\begin{lemma}[Pruning with Population]\label{cov importance}
	Let $\btb^{\La}$ be the minimizer of population loss and fix $\ix \subseteq [p]$. NI $\Icn_\ix(\btb^\La)$ and HI $\Ic^H_\ix(\btb^\La)$ w.r.t.~population loss $\Lc_\La$ are invariant under invertible diagonal $\La$. If the covariance $\bSi$ is also diagonal, then NI and HI are equal. In contrast, MI is $\La$ dependent via $\Ic^M_\ix(\btb^\La)=\sum_{i\in \ix}\La_{i,i}^{-2}{\btb_i}^2$ where $\btb=\btb^{\Iden_p}$ is the original model.
	
\end{lemma}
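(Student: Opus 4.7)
The plan rests on one reparametrization identity: since $(\x^\La)^T\bt = (\La\x)^T\bt = \x^T(\La\bt)$, the population loss satisfies
$$\Lc_\La(\bt) = \Lc(\La\bt),$$
and so $\btb^\La = \La^{-1}\btb$. Furthermore, because $\La$ is \emph{diagonal}, support-restriction commutes with multiplication by $\La$: for any index set $\ix$, $\La(\btb^\La_{\bar\ix}) = (\La\btb^\La)_{\bar\ix} = \btb_{\bar\ix}$. (This is precisely where diagonality enters; it would fail for a generic invertible $\La$, and is the single subtle step in the argument.)

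Given these two facts, the NI invariance is a one-liner:
$$\Icn_\ix(\btb^\La) = \Lc_\La(\btb^\La_{\bar\ix}) - \Lc_\La(\btb^\La) = \Lc(\btb_{\bar\ix}) - \Lc(\btb) = \Icn_\ix(\btb).$$
For HI, I would expand $\Lc_\La(\bt) = \E[y^2] - 2\bt^T\La\bb + \bt^T\La\bSi\La\bt$ to read off $\nabla^2\Lc_\La(\bt) = 2\La\bSi\La$, whose $(i,i)$-entry is $2\La_{i,i}^2\bSi_{i,i}$ (only diagonal entries of $\bSi$ appear because $\La$ is diagonal). Combining with $(\btb^\La)_i = \La_{i,i}^{-1}\btb_i$, the two $\La_{i,i}^2$ factors cancel:
$$\Ic^H_\ix(\btb^\La) = \sum_{i\in\ix} 2\La_{i,i}^2\bSi_{i,i}\cdot \La_{i,i}^{-2}\btb_i^2 = 2\sum_{i\in\ix}\bSi_{i,i}\btb_i^2,$$
independent of $\La$.

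For the equality of NI and HI when $\bSi$ is diagonal, the loss decouples: $\Lc(\bt) = \E[y^2] + \sum_i(\bSi_{i,i}\bt_i^2 - 2\bt_i\bb_i)$ with coordinate-wise minimizer $\btb_i = \bb_i/\bSi_{i,i}$, giving directly $\Lc(\btb_{\bar\ix}) - \Lc(\btb) = \sum_{i\in\ix}\bSi_{i,i}\btb_i^2$, which matches the HI expression above (the remaining factor of $2$ is the standard squared-loss convention and can be absorbed). Equivalently, a second-order Taylor expansion of $\Lc$ at the minimizer $\btb$ applied to the perturbation $\delta = -\btb_\ix$ yields the same formula. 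Finally, the MI formula is immediate: $\Ic^M_\ix(\btb^\La) = \sum_{i\in\ix}(\btb^\La_i)^2 = \sum_{i\in\ix}\La_{i,i}^{-2}\btb_i^2$, exhibiting the stated $\La$-dependence. The only conceptual point requiring care, as flagged above, is the commutation of support-restriction with diagonal $\La$; all else is direct calculation.
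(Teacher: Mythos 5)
Your proof is correct and follows essentially the same route as the paper's: both rest on $\btb^\La=\La^{-1}\btb$, the scaled moments $\bSi^\La=\La\bSi\La$ and $\bb^\La=\La\bb$, the diagonality of $\La$ so that coordinate restriction commutes with the scaling, and the coordinate-wise minimizer $\btb_i=\bb_i/\bSi_{i,i}$ for the diagonal-covariance case. Your NI step simply packages the paper's explicit quadratic-form cancellation as the reparametrization identity $\Lc_\La(\bt)=\Lc(\La\bt)$, and your explicit handling of the factor of $2$ in the Hessian of the squared loss is consistent with the paper's convention, so there is no gap.
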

This lemma states that NI and HI are invariant to scaling and coincide when features are uncorrelated. On the other hand, MI suffers from feature scaling. As the features get larger, the corresponding weight decreases which results in an artificial decrease in importance. This highlights a fundamental shortcoming of MP and necessity of feature normalization, which was previously discussed in the literature~\cite{santurkar2018does,ioffe2015batch,jayalakshmi2011statistical,ekenel2006norm}. In Sections \ref{train loss} and \ref{neural sec}, we will see that MP fails as soon as the problem is not well-conditioned either in terms of covariance spectrum or neural network initialization.

Invariance to feature scaling is a property of over-determined problems ($n>p$) which admit unique solution (population loss is a special case with $n=\infty$). Focusing on training loss, suppose $\X\in\R^{n\times p}$ is not rank deficient. Then, the minimum-norm solution $\bth^\La$ has the form
\begin{align}
\bth^\La=\begin{cases}\La^{-1}\bth,\quad\text{when}\quad n\geq p,\\
\La\X^T(\X\La^2\X^T)^{-1}\y,\quad\text{otherwise.}\end{cases}\label{LS soln}
\end{align}
When $n\geq p$, we trivially have $(\bth^{\La})^T\x^\La=\bth^T\x$, thus $\bth$ and $\bth^\La$ achieve the exact same test/training loss. On the other hand, for over-parameterized problems ($n<p$), which is the regime of interest for  neural network pruning, this is no longer the case, and we will see that $\La$ plays a critical role in the eventual test performance as it dictates which solution the optimization problem selects.


\vs\subsection{Characterizing Pruning Performance and Covariance Bias}\label{train loss}\vs
\mc{repeat $y=\x^\top\bt+z$ on line 175 and 176}
In this section, we provide analytical formulas which enable us to compare different pruning methods and assess implicit covariance bias when $n<p$ under a realizable dataset model. Suppose $(\x_i)_{i=1}^n\distas\Nn(0,\Iden_p)$ so that $\bSi=\Iden_p$ and $\bSi_\La:=\E[\x^\La(\x^\La)^T]=\La^2$. Given a ground-truth vector $\btb\in\R^p$ (which corresponds to the population minima), we generate the labels via $y=\x^T\btb+z$ and\vs
\[
y_i=\x_i^T\btb+z_i\quad\text{for}\quad 1\leq i\leq n,
\]
where $z,(z_i)_{i=1}^n\distas\Nn(0,\sigma^2)$ are the additive noise. We will study the minimum norm least-squares solution \eqref{LS soln} which is also the solution gradient descent converges when initialized from zero. To assess pruning performance, we need to quantify the test loss of the pruned solution $\Pi_{s}(\bth)$. 

\vs
\noindent{\bf{Connection to denoising:}} We accomplish this by {\em{relating the test loss of the pruned model to the risk of a simple denoising problem}}. In essence, this denoising question is as follows: Given noisy measurements $\bt_{\text{nsy}}=\btb+\g$ of a ground-truth vector $\btb$ with $\g\sim\Nn(0,\sigma^2\Iden_p)$, what is the pruning error $\E[\tn{\Pi_s(\bt_{\text{nsy}})-\btb}^2]$? Note that this error typically doesn't have a closed form answer as hard-thresholding is not a continuous function, however, it greatly simplifies the original problem of solving least-squares. We also note that if one uses soft-thresholding (i.e.~shrinkage) operator for pruning, closed form solution is available. The fundamental connection between denoising and linear inverse problems are studied for under-parameterized least-squares and lasso regression \cite{donoho2013accurate,donoho2009message}. Our connection to denoising is established by connecting the distribution of the $\bth^\La$ to an auxiliary distribution described below.
\mc{ Line 186, "inverse problems are studied", are-> is}








\vs\vs
\begin{definition}[Auxiliary distribution] \label{aux_def}Fix $p>n\geq 1$ and set $\kappa=p/n>1$. Given $\sigma>0$, positive definite diagonal matrix $\La$ and ground-truth vector $\btb$, define the unique non-negative terms $\Xi,\Gamma,\bz\in\R^p$ and $\bg\in\R^p$ as follows\vspace{-12pt}
\begin{align}
&\Xi>0\quad\text{is the unique solution of}\quad 1=\frac{\kappa}{p}\sum_{i=1}^p\frac{1}{1+(\Xi\La_{i,i}^2)^{-1}},\label{aux solve}\\
&\Gamma=\frac{\sigma^2+\sum_{i=1}^p\bz_i^2\btb_i^2}{\kappa(1-\frac{\kappa}{p}\sum_{i=1}^p{(1+(\Xi\La_{i,i}^2)^{-1})^{-2}})},\nn\\
&\bz_i=\frac{1}{1+\Xi\La_{i,i}^2}\quad\text{and}\quad \bg_i=\frac{\kappa\sqrt{\Gamma}}{1+(\Xi\La_{i,i}^2)^{-1}}\quad\text{for}\quad 1\leq i\leq p.\nn
\end{align}
Let $\h\sim\Nn(0,\frac{1}{p}\Iden_p)$. Define the auxiliary vector $\bta^\La$ of the ground-truth $\btb$ as
\begin{align}
\bta^\La= \La^{-1}[(\onebb_p-\bz)\odot\btb+\bg\odot\h].\label{aux dist def}
\end{align}
\end{definition}
We remark that this definition can be adapted to asymptotic setup $p\rightarrow\infty$ assuming covariance spectrum converges (e.g.~discrete sum over entries converges to an integral). In the special case of identity covariance ($\bSi=\Iden_p$), $\bta$ reduces to $\bta=\frac{1}{\kappa}\btb+\sqrt{\frac{\sigma^2}{\kappa-1}+\frac{(\kappa-1)\tn{\btb}^2}{\kappa^2}}\h$. This distribution arises from applying Convex Gaussian Min-Max Theorem (CGMT) \cite{Gor,Gor2,thrampoulidis2015regularized,OymLAS,thrampoulidis2018precise} to over-parameterized least-squares. CGMT provides a framework for predicting the asymptotic properties of optimization problems involving random matrices by connecting them to simpler auxiliary optimizations involving random vectors (some example applications \cite{montanari2019generalization,deng2020model,salehi2018precise}). Thus, based on CGMT, $\bth^\La$ and the auxiliary vector $\bta^\La$ are expected to have similar distributional properties and $\bta^\La$ can be used as a proxy to capture the properties of $\bth^\La$. In supplementary, we discuss to what extent this distributional similarity can be formalized (e.g.~for Lipschitz functions). Note that, after solving for $\bz,\bg$ in \eqref{aux solve}, we can sample from the auxiliary distribution which is a noisy version of $\btb$ which connects us to denoising. To proceed, our analytic formulas for the test error of an $s$-sparse model via MP and HP takes the following form:
\begin{align}
\text{MP loss:}~\E_{\h}[\tn{\La\Pi^M_s(\bta^\La)-\btb}^2]+\sigma^2,\quad \text{HP loss:}~\E_{\h}[\tn{\Pi^M_s(\La\bta^\La)-\btb}^2]+\sigma^2.\nn
\end{align}


Next, we verify our performance prediction and study the role of covariance structure $\La$. We generate $\btb$ with polynomially decaying entries, specifically $\btb_i=1/(1+4i/p)^2$, and then scale it to unit Euclidian norm. Recall that original covariance is identity, thus initial larger entries of $\btb$ are more important for population risk. In our experiments, we parameterize $\La$ by a scalar $\la$ and set it as 
\begin{align}
\La_{i,i}=\begin{cases}\la\quad\text{if}\quad 1\leq i\leq p/10,\\1\quad\text{if}\quad i>p/10.\end{cases}\label{linear scaling}
\end{align}
This choice modifies the most important 10\% weights of the problem. We consider $\la\in \{1/2,1,5\}$. As formalized in Thm.~\ref{lem win}, when $\la>1$, we expect a positive covariance bias since important weights are aligned with the principal directions of the covariance and are easier to learn. In Figures \ref{fig:positive} and \ref{fig:negative}, the lines are the analytical predictions based on Definition \ref{aux_def} and the markers are performance of the actual least-squares solution which nicely match for all pruning methods and $\la$. Figure \ref{fig:positive} contrasts $\la=1$ and $\la=5$. For $\la=1$, MP and HP coincide as the Hessian is identity. However when $\la=5$, HP performs much better than $\la=1$ for all sparsity levels. MP drastically fails for small sparsity levels as the initial weights of $\bth^\La$ are important but small due to the $\la$-scaling thus MP inaccurately ignores them. Decreasing magnitudes with increasing $\la$ is more clear for under-parameterized case (via \eqref{LS soln}) however $\La^{-1}$ dependence is also visible in \eqref{aux dist def}. Fig \ref{fig:negative} additionally highlights $\la=1/2$ which reduces the covariance and scales up the coefficients of the important weights. This leads to a negative bias because covariance structure guides the solution away from important weights. While both MP and HP performs worse than $\la=1$ case, HP performs worse due to additional penalization of the initial important weights. Finally, covariance bias is visualized in Figure \ref{fig:covariance} which displays the test NI (for $\btb$) and the training NI's (for $\bth^\La$) of the first $s$ weights. When $\la=5$, initial weights, which are important for test, have a larger training NI. As $\la$ gets smaller, remaining weights, which are not as important for test, have larger say during training and pruning performance degrades. Our Theorem \ref{lem win} formalizes these by quantifying the contributions of different weights during training.
\mc{I made the label of fig 1 a bit smaller}
	
\begin{figure}[t!]
	\centering
	\begin{subfigure}{2.1in}
	\begin{tikzpicture}
	\node at (0,0) {\includegraphics[scale=0.25]{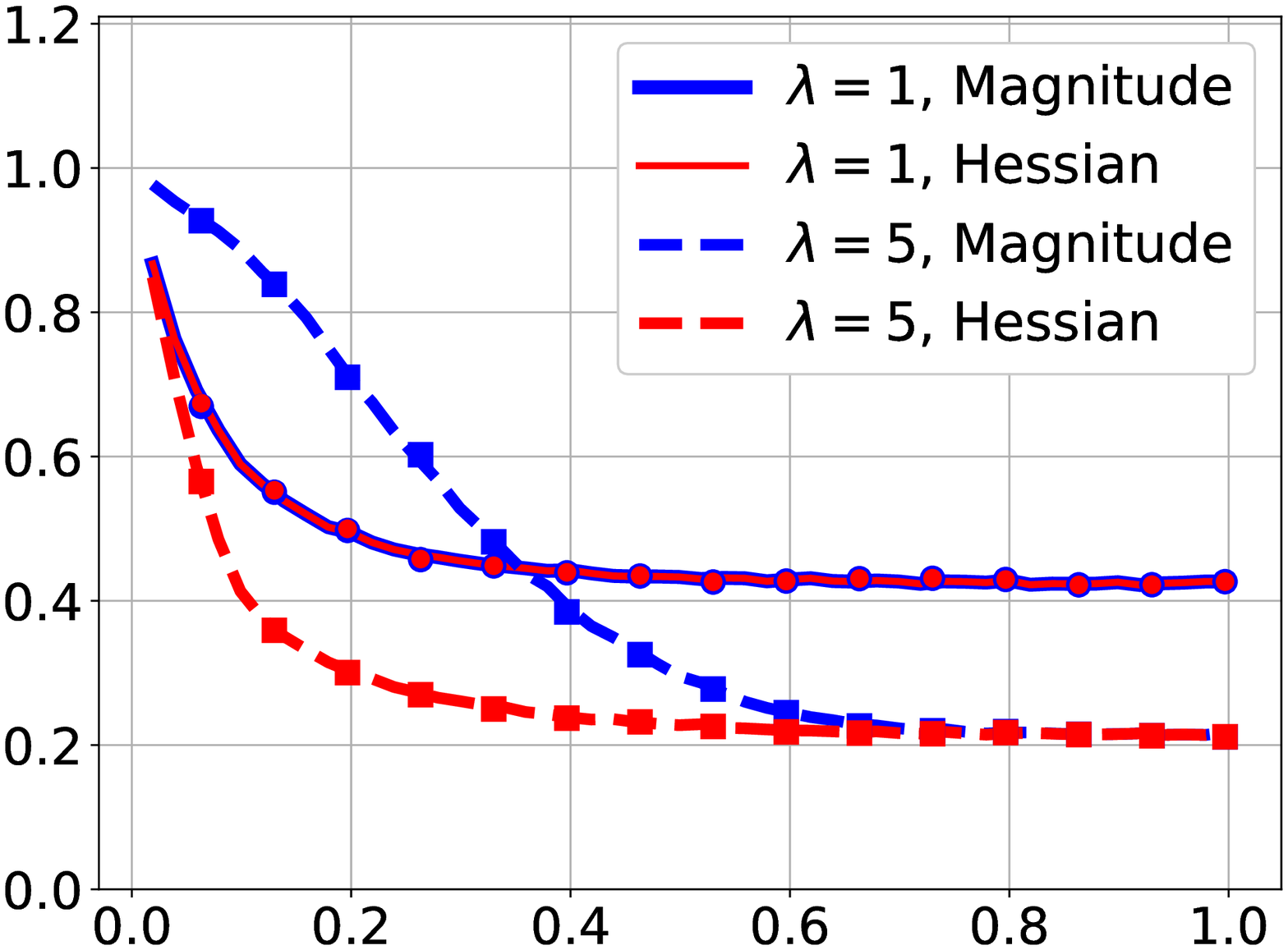}};
	\node at (-2.7,0) [rotate=90,scale=1.]{Test loss};
	\node at (0,-2.1) [scale=1.]{Fraction of non-zero ($s/p$)};
	\end{tikzpicture}\vspace{-3pt}\caption{
	\small{Pruning with $\la\in\{1,5\}$}}\label{fig:positive}\vspace{-0.2cm}
	\end{subfigure}
	\begin{subfigure}{2.1in}
	\begin{tikzpicture}
	\node at (0,0) {\includegraphics[scale=0.25]{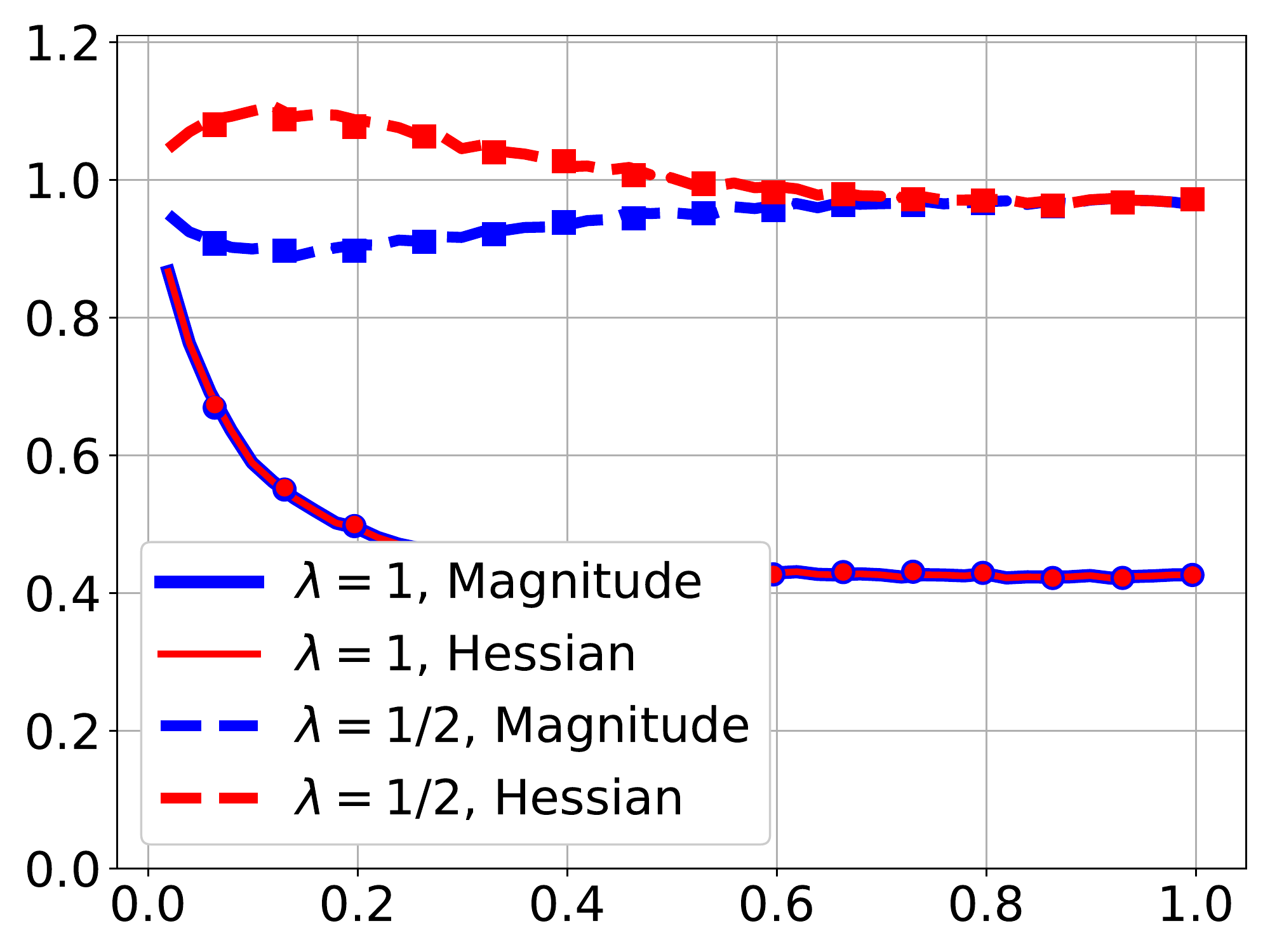}};
	\node at (-2.7,0) [rotate=90,scale=1.]{Test loss};
	\node at (0,-2.1) [scale=1.]{Fraction of non-zero ($s/p$)};
	\end{tikzpicture}\vspace{-3pt}\caption{
	\small{Pruning with $\la\in\{1,1/2\}$}}\label{fig:negative}\vspace{-0.1cm}
		\end{subfigure}
	\begin{subfigure}{2.1in}
	\begin{tikzpicture}
	\node at (0,0) {\includegraphics[scale=0.25]{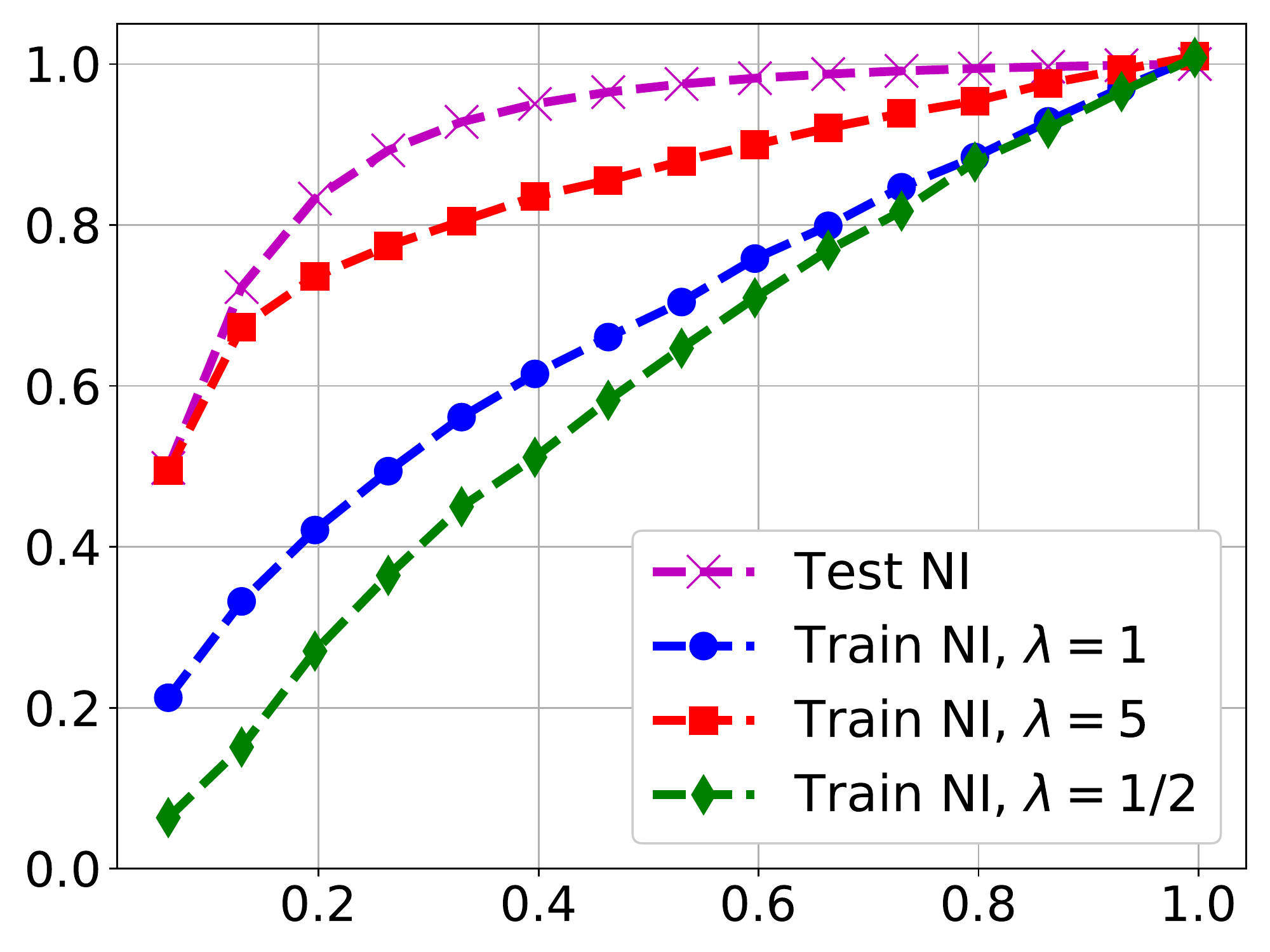}};
	\node at (-2.7,0) [rotate=90,scale=1.]{Natural Importance};
	\node at (0,-2.1) [scale=1.]{First $s/p$ entries};
	\end{tikzpicture}\vspace{-3pt}\caption{\small{NI of the first $s$ entries}}
	\label{fig:covariance}\vspace{-0.2cm}
	\end{subfigure}
		\caption{\small{In (a) and (b), the lines are the analytical prediction from Def~\ref{aux_def} and markers are the actual min-norm solution. $p=1000, \kappa=p/n = 5/3$ and $\sigma = 0.1$. (c) Natural importance associated with the first $s$ weights. For $\la=5$, (a) HP achieves better performance and (c) training and test NI have a better match.}}\vspace{-15pt}
\end{figure}

\vs\vs\section{Hessian Bias and the Role of Initialization for Neural Nets}\label{neural sec}\vs\vs
This section extends our discussion of importance and pruning to another fundamental model class: neural networks with one-hidden layer. Suppose input dimension is $d$, output dimension is $K$ and the network has $m$ hidden units. Such a network with ReLU activation is given by $f_{\bt}(\x)=\V\act{\W\x}$, where $\W\in\R^{m\times d}$ and $\V\in\R^{K\times m}$ are the input and output layers respectively and $\bt=(\W,\V)\in\R^{p=(d+K)m}$ is the vector composed of the entries of $\W,\V$. Let $\ixi$ and $\ixo$ denote the index of the entries of $\W,\V$ in $\bt$. Given a dataset $\Sc=(\x_i,y_i)_{i=1}^n$ and loss $\ell$, we minimize\vs\vs
\begin{align}
\Lc(\bt)=\frac{1}{n}\sum_{i=1}^n\ell(y_i,f_{\bt}(\x_i)).\label{loss}
\end{align}
{\bf{Equivalent networks:}} To study neural net pruning and initialization, we shall consider a class of networks $\bt^\la=(\la\W,\la^{-1}\V)$ generated from a base network $\bt^1=(\W,\V)$. Observe that all vectors $\bt^\la$ implement the same function due to the linearity of ReLU however magnitudes of layers are varying. The following lemma shows how the parameter $\la$ affects MI, HI, and Hessian.
\mc{MI,HI and partial derivatives obey --Line 236}

\begin{lemma} \label{simple hessian gradient}Consider the loss \eqref{loss} and class of networks $(\bt^\la)_{\la>0}$. For all $\la>0$, MI, HI and partial Hessians w.r.t.~input/output layer weights $\W,\V$ obey
\begin{align}
&\Ic^M_\ixi(\bt^\la)=\la^2\Ic^M_\ixi(\bt^1)\quad\text{and}\quad\Ic^M_\ixo(\bt^\la)=\la^{-2}\Ic^M_\ixo(\bt^1),\nn\\
&\Ic^H_\ixi(\bt^\la)=\Ic^H_\ixi(\bt^1)\quad\text{and}\quad\Ic^H_\ixo(\bt^\la)=\Ic^H_\ixo(\bt^1),\label{constant HI}\\
&\frac{\pa^2}{\pa^2{\W}}\Lc(\bt^\la)=\la^{-2}\frac{\pa^2}{\pa^2{\W}}\Lc(\bt^1)\quad\text{and}\quad\frac{\pa^2}{\pa^2{\V}}\Lc(\bt^\la)=\la^2\frac{\pa^2}{\pa^2{\V}}\Lc(\bt^1).\label{gradient change}
\end{align}
\end{lemma}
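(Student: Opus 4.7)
The backbone of the argument is the positive $1$-homogeneity of ReLU, i.e.\ $\act{\la z}=\la\act{z}$ for $\la>0$. This immediately gives $f_{\bt^\la}(\x)=(\la^{-1}\V)\act{\la\W\x}=\V\act{\W\x}=f_{\bt^1}(\x)$ for every $\x$ and every $\la>0$, so in particular $\Lc(\bt^\la)=\Lc(\bt^1)$. The MI claim is then immediate from the definition: the $\W$-block of $\bt^\la$ is $\la\W$, so $\Ic^M_\ixi(\bt^\la)=\sum_{i\in\ixi}(\la\bt^1_i)^2=\la^2\Ic^M_\ixi(\bt^1)$, and symmetrically the $\V$-block is $\la^{-1}\V$, yielding the $\la^{-2}$ factor for $\Ic^M_\ixo$.

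For the partial Hessians, the plan is to reduce a perturbation at $\bt^\la$ to a rescaled perturbation at $\bt^1$. Take an arbitrary perturbation $\Eb$ of the input layer. By the same homogeneity trick used above,
\begin{equation}
f_{(\la\W+\Eb,\,\la^{-1}\V)}(\x)=\la^{-1}\V\act{\la(\W+\la^{-1}\Eb)\x}=\V\act{(\W+\la^{-1}\Eb)\x}=f_{(\W+\la^{-1}\Eb,\,\V)}(\x),
\end{equation}
so $\Lc(\la\W+\Eb,\la^{-1}\V)=\Lc(\W+\la^{-1}\Eb,\V)$ as functions of $\Eb$. Differentiating twice in $\Eb$ and evaluating at $\Eb=0$ pulls out a factor of $\la^{-2}$ by the chain rule, which is exactly the first identity in \eqref{gradient change}. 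The $\V$-block identity is handled by the mirror-image calculation: a perturbation $\Fb$ of the output layer satisfies $f_{(\la\W,\,\la^{-1}\V+\Fb)}=f_{(\W,\,\V+\la\Fb)}$ (again by homogeneity of ReLU, since the outer factor is linear in $\V$), so the second derivative scales by $\la^2$.

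The HI identity then falls out by combining the MI scaling with the Hessian scaling. For indices $i\in\ixi$, the diagonal Hessian entry scales as $\la^{-2}$ while the squared weight scales as $\la^2$, and the two factors cancel term-by-term: $\Ic^H_\ixi(\bt^\la)=\sum_{i\in\ixi}\la^{-2}\Hc(\bt^1)_{i,i}\cdot\la^2(\bt^1_i)^2=\Ic^H_\ixi(\bt^1)$. The same bookkeeping for $\ixo$ (now with factors $\la^2$ and $\la^{-2}$) gives $\Ic^H_\ixo(\bt^\la)=\Ic^H_\ixo(\bt^1)$, establishing \eqref{constant HI}.

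The only subtlety worth flagging is that ReLU is nondifferentiable at the origin, so formally the Hessian exists only in a generalized or almost-everywhere sense; however, this does not obstruct the argument because the homogeneity identities above equate the loss functions \emph{as functions of the perturbation}, so any valid notion of second derivative (classical where defined, or smoothed) transfers cleanly by the chain rule. No further assumption on the loss $\ell$ beyond the existence of the derivatives appearing in the statement is needed.
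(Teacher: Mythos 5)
Your proof is correct, and for the Hessian identities \eqref{gradient change} it takes a genuinely different route from the paper's. The paper expands the partial Hessian of the per-sample loss into the decomposition $F_{2,\la}^{\W}\bOt L_1+F_{1,\la}^{\W}L_2{F_{1,\la}^{\W}}^T$ (with $L_1,L_2$ the derivatives of $\ell$ in the prediction) and tracks the $\la$-scaling of the network Jacobian and of the second-derivative tensor separately; the latter step invokes the distributional identity $\delta(x/C)=C\delta(x)$ for the second derivative of ReLU. You instead apply the exact scale-invariance $f_{(\la A,\,\la^{-1}\V)}=f_{(A,\V)}$ at the perturbed point $A=\W+\la^{-1}\Eb$, obtaining the function-level identity $\Lc(\la\W+\Eb,\la^{-1}\V)=\Lc(\W+\la^{-1}\Eb,\V)$ in the perturbation $\Eb$ (and its mirror image for the output layer), after which a single chain rule yields the $\la^{-2}$, resp.\ $\la^{2}$, scaling of the partial Hessians. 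This buys robustness: in your version the nondifferentiability of ReLU is genuinely harmless, since identical functions of the perturbation have identical (classical or generalized) second derivatives wherever they exist, whereas the paper's Dirac-delta step is only formal. What the paper's decomposition buys in exchange is the explicit Jacobian scalings $F_{1,\la}^{\W}=\la^{-1}F_{1,1}^{\W}$ and $F_{1,\la}^{\V}=\la F_{1,1}^{\V}$, which are the quantities later used to argue that the layerwise PPLS parameters scale as $\la^{\mp 2}$ in the discussion around Theorem \ref{lem win}. The MI computation and the term-by-term cancellation of the $\la^{\pm2}$ factors giving \eqref{constant HI} are the same in both proofs (noting, as you implicitly do, that the diagonal entries of the full Hessian over $\ixi$, $\ixo$ coincide with those of the corresponding partial Hessians).
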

In words, increasing $\la$ increases MI, preserves HI, and decreases the Hessian magnitude for the input layer and has the reverse effect on the output layer. Suppose we train the network from initializations $\bt^\la$ on \eqref{loss}. What happens at the end of the training as a function of $\la$? Does eventual MI and HI exhibit similar behavior to initialization? What about NI?

\mc{Don't know whether "interpolation to training data" is correct grammar or not --Line 244/245.}

To answer these, in Figure \ref{fig:Importance_layer}, we conduct an empirical study on MNIST by training a one-hidden layer network with cross-entropy loss. Here $m=1024$, $d=784$ and $K=10$. We set $\bt^{1}_{\ini}=(\W_\ini,\V_\ini)$ with \textit{He normal} initialization \cite{he2015delving}. We then train networks with $\la$-scaled initializations $\bt^\la_\ini=(\la\W_\ini,\la^{-1}\V_\ini)$. Let $\bt_\fn^{(\la)}=(\W_\fn^{(\la)},\V_\fn^{(\la)})$ be the final model obtained by training until interpolation to training data (or maximum 150 epochs). Figures~\ref{fig:Importance_layer_1} and \ref{fig:Importance_layer_2} display MI, HI, and NI for input and output layers respectively. Here, for NI, we use {\em{Init-Pruning}} and quantify importance of a layer (e.g.~$\W_\fn^{(\la)}$) with respect to its initial weights (e.g.~$\W_\ini^{\la}=\la\W_\ini$). Observe that, regular pruning is not informative as setting a layer to zero kills the network output.



\noindent{\bf{Understanding MI and HI:}}\mc{I changed legend's position and text for fig 2c. change w,v to input NI, output NI} Figures~\ref{fig:Importance_layer_1} and \ref{fig:Importance_layer_2} show that initial and final MI exhibit a near perfect match. The initial HI stays constant as predicted by Lemma~\ref{simple hessian gradient}. Final HI increases with $\la$ for both layers, however it can be verified that the ratio of HI between input and output layers is approximately preserved. Perhaps surprisingly, Lemma~\ref{simple hessian gradient} seems to predict not only the initial importance but also the MI/HI of the final network. Fortunately, this can be mostly explained by the optimization dynamics of wide and large networks where gradient descent finds a global optima close to initialization and {\em{final weights (and Hessian) do not deviate much from initial ones}} \cite{chizat2019lazy,arora2019fine,oymak2020towards,du2018gradient,jacot2018neural,allen2019convergence,li2019gradient}.\mc{\cite{lee2019wide} also about wide network and linearization -Line 256}

\mc{No more $10^5$HI, but "we draw the blue line as $\sqrt{MI}$ which is proportional to $\la$" --Line 249/250}

\vs
In Figures~\ref{fig:compare_purining_C10_1} and \ref{fig:compare_purining_C10_2}, we first prune $\bt_\fn^{(\la)}$ to a fixed nonzero fraction and then retrain the pruned weights from the same initialization (i.e.~\cite{frankle2019lottery}). MP is only competitive with HP when $\la=1$ where input and output layer entries have similar magnitude due to He initialization. In Fig.~\ref{fig:compare_purining_C10_1}, as $\la$ grows output layer becomes small and gets fully pruned. As $\la$ gets smaller, eventually input layer is fully pruned. Here, what is rather remarkable is the robustness of HP for full range of $\la$ choices which arises from \eqref{constant HI}. Arguably, HI being invariant to $\la$ makes it more attractive than NI as it avoids the issue of {\em{layer death}} i.e.~all of the weights in a layer getting pruned. Figure \ref{fig:compare_purining_C10_3} visualizes the fraction of unpruned weights in input and output layers for various $\la$. HP (solid) curves are stable whereas MP (dotted) curves are highly volatile and easily hit zero except a narrow region. We note that, an alternative way of avoiding layer death is pruning layers individually. Supplementary provides further experiments on this for completeness.
\mc{didn't find layer death reference -Line 263}


 
 \vs
\noindent{\bf{Understanding NI and optimization dynamics:}} If our shallow network is sufficiently wide, each layer (or large groups of weights) can individually fit the training dataset. This can be viewed as a competition between the layers and a natural question is how much a layer contributes to the learning. This question is answered by NI. In Figure \ref{fig:Importance_layer_1} orange line displays the change in input layer NI (with $\Lc$ of Def.~\ref{def: weight impo} is training loss) which demonstrates that NI is decreasing function of $\la$ and moves in the opposite direction to MI. Figure \ref{fig:Importance_layer_test} verifies the same NI behavior for test loss and test error. Specifically, for large $\la$, input layer is responsible for most of the test accuracy and for small $\la$, it is the output layer. \mc{"it is the output layer" seems a bit strange, can we change to "output layer takes the role" --Line 273/274} Our key technical contribution in this section is {\em{providing a theoretical explanation to this NI behavior and relating it to optimization dynamics}}. In essence, we will connect NI to the only feature in Lemma \ref{simple hessian gradient} that exhibit similar behavior, the Hessian. Below we state our result on the Hessian and NI relation in terms of Polyak-Lojasiewicz (PL) condition \cite{karimi2016linear}.





\begin{figure}[t!]
	\begin{subfigure}{2.2in}\vspace{-5pt}
		\centering
		\begin{tikzpicture}
		\node at (0,0) {\includegraphics[scale=0.365]{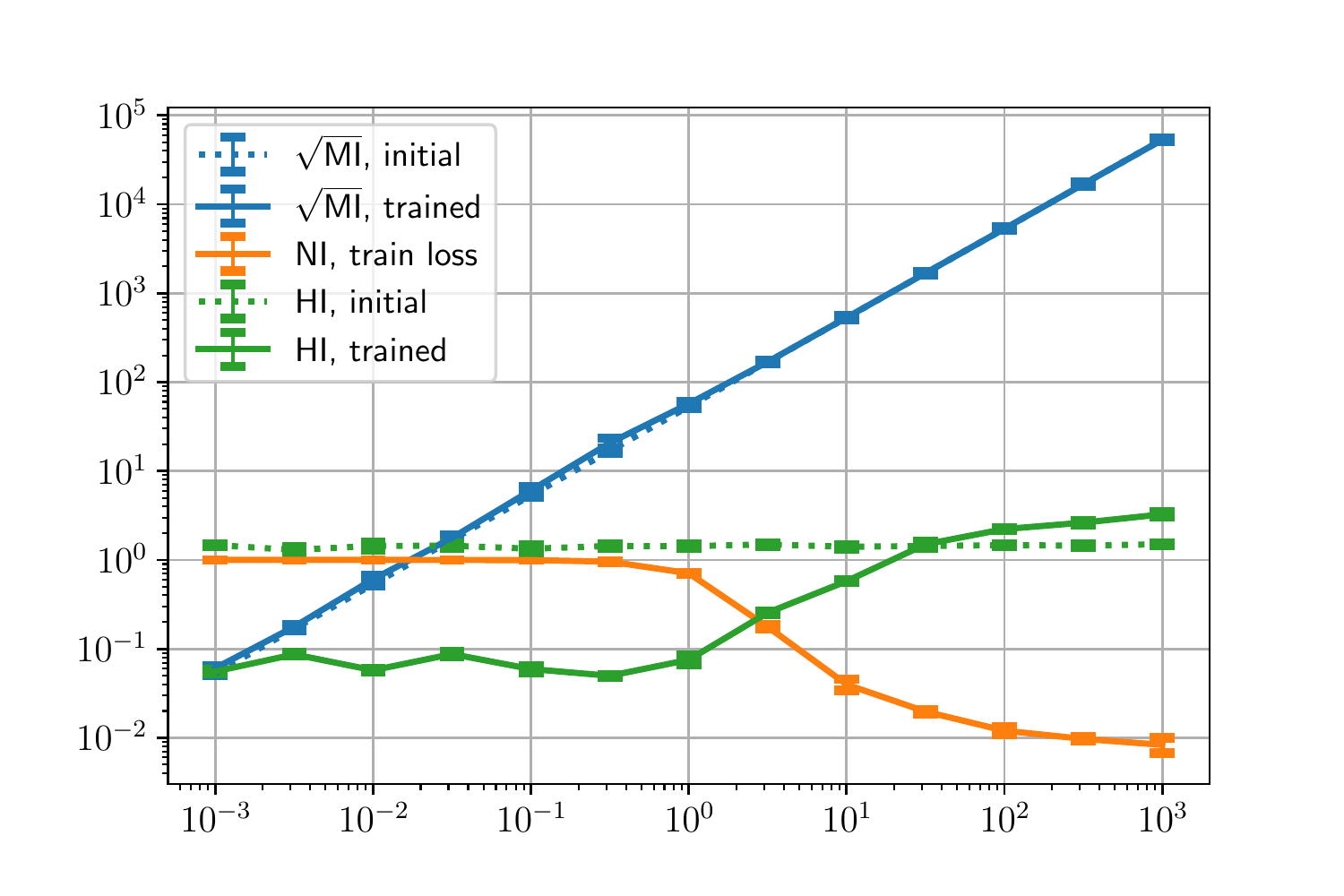}};
		\node at (-2.6,0) [rotate=90,scale=.9]{Importance};
		\node at (0,-1.8) [scale=.9]{$\lambda$};
		\end{tikzpicture}\caption{\small{Importance of input layer $\W_\fn^\la$}}\label{fig:Importance_layer_1}\vspace{-0.2cm}
	\end{subfigure}~\begin{subfigure}{2.2in}\vspace{-5pt}
	\centering
	\begin{tikzpicture}
	\node at (0,0) {\includegraphics[scale=0.365]{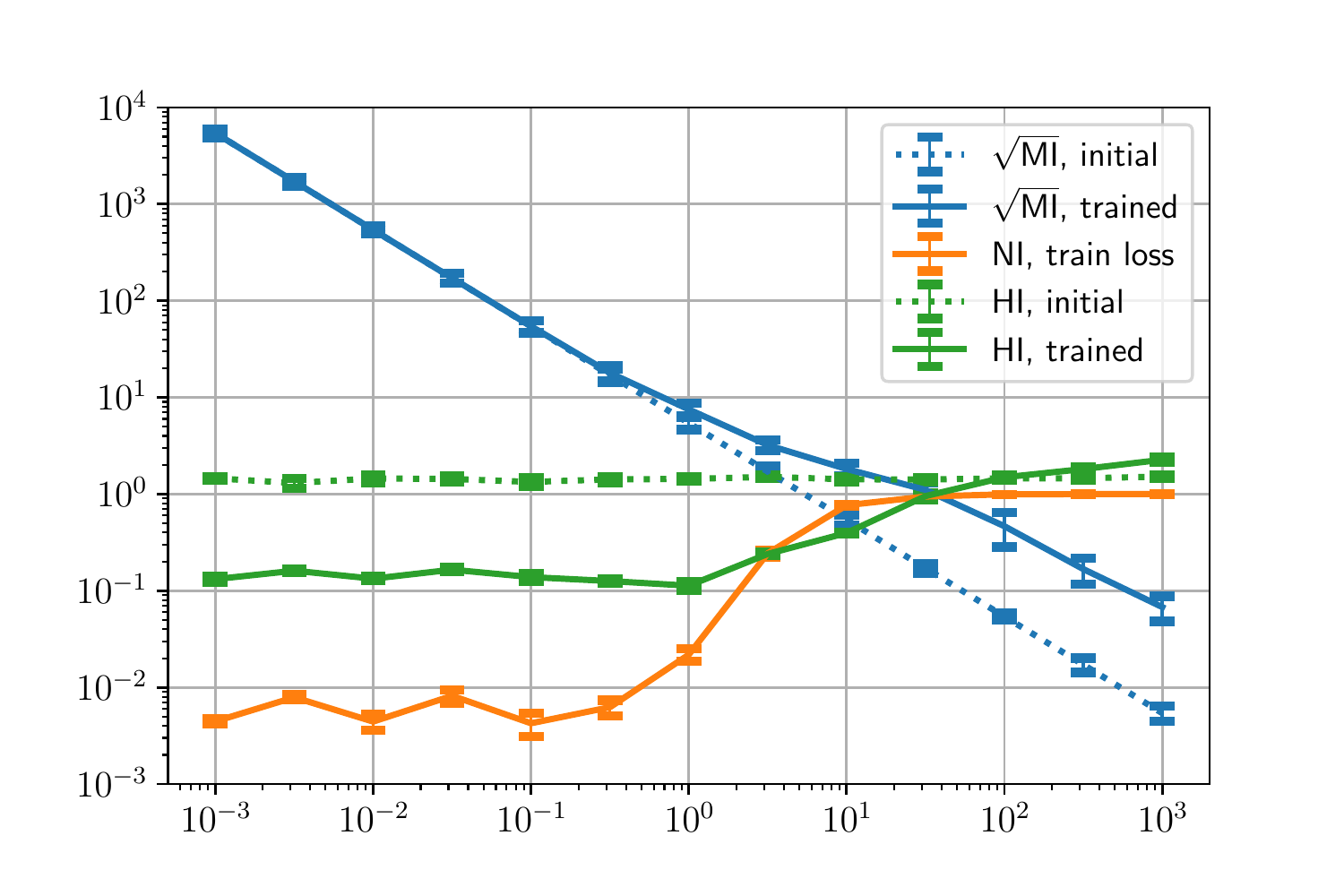}};
	\node at (0,-1.8) [scale=.9]{$\lambda$};
	\end{tikzpicture}\caption{\small{Importance of output layer $\V_\fn^\la$}}\label{fig:Importance_layer_2}\vspace{-0.2cm}
\end{subfigure}\begin{subfigure}{2.2in}\vspace{-5pt}
\centering
	\begin{tikzpicture}
	\node at (0,0) {\includegraphics[scale=0.365]{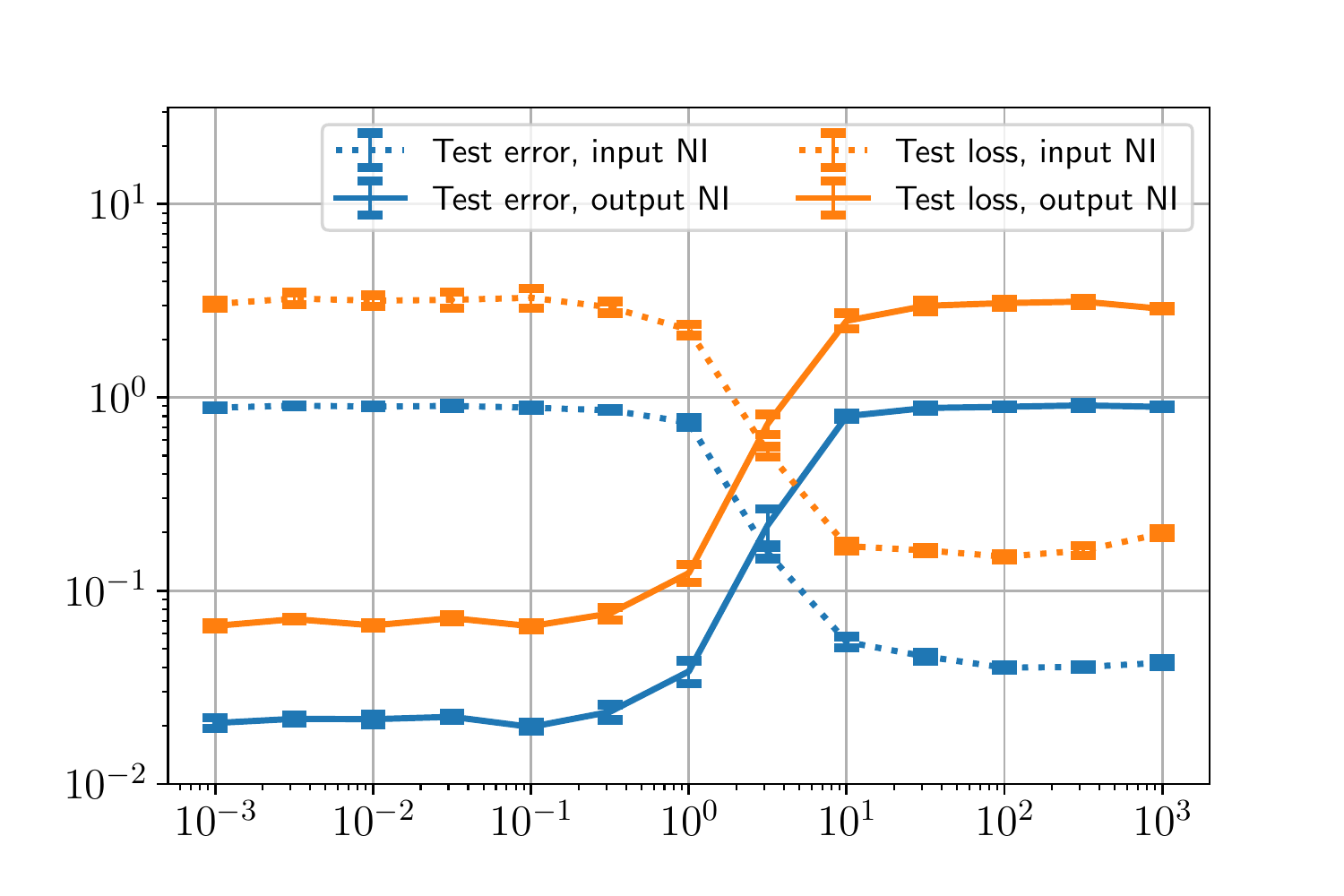}};
	\node at (0,-1.8) [scale=.9]{$\lambda$};
	\end{tikzpicture}\caption{\small{Natural importances w.r.t.~test}}\label{fig:Importance_layer_test}\vspace{-0.2cm}
\end{subfigure}\caption{\small{(a) and (b) show the comparison of importance criterias for input and output layers when training with a shallow network with cross entropy and with initializations $\bt_\ini^\la=(\la\W_\ini,\la^{-1}\V_\ini)$. (c) shows the NI w.r.t.~test classification error and loss obtained by setting one of the layers to its initialization. }}\label{fig:Importance_layer}\vspace{-15pt}
\end{figure}

\vs\begin{definition}[Partial PL and Smoothness (PPLS)] Let $\Lc(\bt)$ be a loss function satisfying $\min_{\bt}\Lc(\bt)=0$. Given an index set $\Delta\subset[p]$, we say that PPLS holds with parameter $L\geq \mu\geq 0$ if partial derivative $\frac{\pa }{\pa \bt_{\Delta}}\Lc(\bt)$ is $L$-Lipschitz function of $\bt_{\Delta}$ and obeys $\tn{\frac{\pa }{\pa \bt_{\Delta}}\Lc(\bt)}^2\geq 2\mu\Lc(\bt)$.
\end{definition}\vs
While PL allows for non-convex optimization, when specialized to strong convexity, Partial PL condition provides a lower bound on the submatrix of Hessian induced by the set $\ix$. Regular PL condition guarantees global convergence of gradient descent, thus if PPLS holds over $\ix$, training only over $\ix$ is sufficient to achieve zero loss. A good example of PPLS is linear regression with two feature sets $\X_1\in\R^{n\times p_1}$ and $\X_2\in\R^{n\times p_2}$ with $p_1,p_2\geq n$ where we fit \mc{ to be consist, dim of X1 and X1 should be d1 d2? I feel both are ok but which is better?}
\begin{align}
\Lc(\bt)=\min_{\bt=[\bt_1~\bt_2]}0.5\tn{\y-\X_1\bt_1-\X_2\bt_2}^2.\label{lin reg}
\end{align}
$\Lc$ satisfies PPLS over $[p_1]=\{1,\dots,p_1\}$ with parameters $L_1=\|\X_1\|^2$ and $\mu_1=\sigma_{\min}(\X_1)^2$. For randomly initialized over-parameterized networks, each layer solves a kernel regression and would satisfy PPLS under mild conditions on the dataset \cite{chizat2019lazy,du2018gradient,jacot2018neural,allen2019convergence,oymak2020towards} \mc{also \cite{lee2019wide}} . Specifically, linearized neural network dynamics on $\bt=(\W,\V)$ connects \mc{connects-> connect} to the regression task \eqref{lin reg} via the Taylor expansion around initialization where input and output layers have linearized features arising from the Jacobian map given by $\X_{\W}=[\frac{\pa f(\x_1)}{\pa \W}~\dots~\frac{\pa f(\x_n)}{\pa \W}]^T\in\R^{n\times md}$ and $\X_{\V}=[\frac{\pa f(\x_1)}{\pa \V}~\dots~\frac{\pa f(\x_n)}{\pa \V}]^T\in\R^{n\times Km}$. The following theorem provides a theoretical explanation of NI behavior via PPLS by quantifying relative contributions of different sets of weights.
\vs
\mc{ added arrow for fig 3a, adjusted size of 3c} 
\begin{theorem}[Larger Hessian Wins More]\label{lem win}
Suppose the entries of $\bt\in\R^p$ are union of $D$ non-intersecting sets $(\Delta_i)_{i=1}^D\subset [p]$ and PPLS holds over $\ix_i$ with parameters $L_i\geq \mu_i\geq 0$ for all $i$. Set $\mu=\sum_{i=1}^D\mu_i$ and $L=\sum_{i=1}^DL_i$. Starting from a point $\bt_0$, and using a learning rate $\eta\leq 1/L$, run gradient iterations $\bt_{\tau+1}=\bt_\tau-\eta \grad{\bt_\tau}$. For all iterates $\tau$, the loss obeys $\Lc(\bt_\tau)\leq (1-\eta\mu)^\tau \Lc(\bt_0)$. Furthermore, setting $\kappa=L_i/\mu$, the following bounds hold for $\iii$ and $\iib=[p]-\iii$ for all $\tau\geq 0$
\begin{align}
&\tn{\bt_{\iii,\tau}-\bt_{\iii,0}}^2\leq 8{\kappa}\Lc(\bt_0)/\mu,\label{dist bound}\\
&{\Icn_{\iii}(\bt_\tau,\bt_0)}/{\Lc(\bt_0)}\leq 8\kappa^2+4{\kappa}(1-\eta\mu)^{\tau/2},\label{bad2}\\
&{\Icn_{\iib}(\bt_\tau,\bt_0)}/{\Lc(\bt_0)}\geq 1-8\kappa^2-4{\kappa}-(1-\eta\mu)^\tau\label{good2}.
\end{align}
\end{theorem}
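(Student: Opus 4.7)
I would prove the four claims sequentially. For the loss decay, summing the partial PL inequality over the partition gives
\begin{equation*}
\tn{\nabla\Lc(\bt)}^2 \;=\; \sum_{i=1}^D \tn{\tfrac{\pa}{\pa\bt_{\iii}}\Lc(\bt)}^2 \;\geq\; 2\mu\,\Lc(\bt),
\end{equation*}
so $\Lc$ enjoys a global PL inequality with parameter $\mu=\sum_i\mu_i$. Assuming $\Lc$ is globally $L$-smooth with $L=\sum_i L_i$ (see the obstacle below), the standard descent lemma at $\eta\leq 1/L$ gives $\Lc(\bt_{\tau+1})\leq \Lc(\bt_\tau)-\tfrac{\eta}{2}\tn{\nabla\Lc(\bt_\tau)}^2\leq (1-\eta\mu)\Lc(\bt_\tau)$, which iterates to the stated exponential decay.

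For the distance bound \eqref{dist bound}, I would first derive the block analogue $\tn{\tfrac{\pa}{\pa\bt_\iii}\Lc(\bt)}^2 \leq 2L_i\Lc(\bt)$. This comes from applying the partial descent lemma with step $1/L_i$ inside block $\iii$ and using $\min\Lc=0$:
\begin{equation*}
0 \;\leq\; \Lc\bigl(\bt-\tfrac{1}{L_i}\tfrac{\pa}{\pa\bt_{\iii}}\Lc(\bt)\,\eb_{\iii}\bigr) \;\leq\; \Lc(\bt)-\tfrac{1}{2L_i}\tn{\tfrac{\pa}{\pa\bt_{\iii}}\Lc(\bt)}^2 .
\end{equation*}
Writing $\bt_{\iii,\tau}-\bt_{\iii,0}=-\eta\sum_{t=0}^{\tau-1}\tfrac{\pa}{\pa\bt_\iii}\Lc(\bt_t)$ and applying triangle inequality together with the exponential loss decay yields
\begin{equation*}
\tn{\bt_{\iii,\tau}-\bt_{\iii,0}} \;\leq\; \eta\sqrt{2L_i\Lc(\bt_0)}\sum_{t=0}^{\tau-1}(1-\eta\mu)^{t/2} \;\leq\; \frac{2\sqrt{2L_i\Lc(\bt_0)}}{\mu},
\end{equation*}
where the second step uses $1-\sqrt{1-\eta\mu}\geq \eta\mu/2$. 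Squaring and substituting $\kappa=L_i/\mu$ recovers \eqref{dist bound}.

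The two NI bounds \eqref{bad2} and \eqref{good2} then follow from two applications of partial Taylor expansion with constant $L_i$. For \eqref{bad2}, let $\tilde\bt=\bt_{0,\iii}+\bt_{\tau,\iib}$ which differs from $\bt_\tau$ only on $\iii$; partial smoothness gives
\begin{equation*}
\Icn_{\iii}(\bt_\tau,\bt_0) \;=\; \Lc(\tilde\bt)-\Lc(\bt_\tau) \;\leq\; \iprod{\tfrac{\pa}{\pa\bt_\iii}\Lc(\bt_\tau)}{\bt_{0,\iii}-\bt_{\tau,\iii}} + \tfrac{L_i}{2}\tn{\bt_{0,\iii}-\bt_{\tau,\iii}}^2 .
\end{equation*}
Cauchy--Schwarz together with $\tn{\tfrac{\pa}{\pa\bt_\iii}\Lc(\bt_\tau)}\leq\sqrt{2L_i(1-\eta\mu)^\tau\Lc(\bt_0)}$ and the distance bound from the previous step controls the linear term by $O(\kappa(1-\eta\mu)^{\tau/2})\Lc(\bt_0)$ and the quadratic term by $O(\kappa^2)\Lc(\bt_0)$, yielding \eqref{bad2}. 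For \eqref{good2}, expand $\Lc(\bt_{0,\iib}+\bt_{\tau,\iii})$ around $\bt_0$ using the two-sided form of partial smoothness to get $\Lc(\bt_{0,\iib}+\bt_{\tau,\iii})\geq \Lc(\bt_0)$ minus two correction terms of the same order, then subtract $\Lc(\bt_\tau)\leq(1-\eta\mu)^\tau\Lc(\bt_0)$.

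\textbf{Main obstacle.} The delicate point is the smoothness bookkeeping: the partial $L_i$-Lipschitzness of $\tfrac{\pa}{\pa\bt_\iii}\Lc$ only controls the diagonal blocks of the Hessian, so a global smoothness constant $L=\sum_iL_i$ does not follow automatically. For quadratic losses such as \eqref{lin reg} it does hold via $\|\sum_i\X_i\X_i^\top\|\leq\sum_i\|\X_i\|^2$, and the same reasoning applies to the linearized wide-network regime emphasized in the text. Up to absolute constants (the gap between $8$ and $4$ in the stated bounds is absorbed by the loose Cauchy--Schwarz and geometric-series steps), the remainder is routine PL calculus.
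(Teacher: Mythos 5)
Your proposal is correct and recovers the stated constants, but it takes a mildly different route from the paper at two points. For the distance bound \eqref{dist bound}, the paper controls $\tn{\bt_{\iii,\tau}-\bt_{\iii,0}}$ through a Lyapunov function $\sqrt{\Lc(\bt_\tau)}+\tfrac{\mu}{\sqrt{8L_i}}\tn{\bt_{\iii,\tau}-\bt_{\iii,0}}$, shown non-increasing by combining the ratio bound $\tn{\tfrac{\pa}{\pa\bt_{\iii}}\Lc(\bt_\tau)}\le\sqrt{L_i/\mu}\,\tn{\nabla\Lc(\bt_\tau)}$ with the per-step decrease of $\sqrt{\Lc}$; you instead telescope the iterates, bound each partial gradient by $\sqrt{2L_i\Lc(\bt_t)}\le\sqrt{2L_i\Lc(\bt_0)}(1-\eta\mu)^{t/2}$, and sum the geometric series via $1-\sqrt{1-\eta\mu}\ge\eta\mu/2$ — both yield exactly $8\kappa\Lc(\bt_0)/\mu$. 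For \eqref{bad2} and \eqref{good2} the paper compares the same ablated vectors you do, but bounds the loss change by line integration, i.e.\ the supremum of the partial gradient along the segment ($\le\tn{\tfrac{\pa}{\pa\bt_{\iii}}\Lc(\bt_\tau)}+L_iR$) times the distance $R$, whereas you use the quadratic (descent-lemma) form of partial smoothness plus Cauchy--Schwarz; your version is if anything slightly tighter (it gives $4\kappa^2$ rather than $8\kappa^2$ for the quadratic term), so it sits comfortably inside the stated bounds. The shared backbone — aggregating the partial PL inequalities into a global PL constant $\mu$, the blockwise analogue of Lemma \ref{bound grad loss}, and the three-step structure (loss decay, distance, then importance of the two ablations) — is identical.

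Your ``main obstacle'' is genuine, but it is one the paper shares rather than resolves cleanly: the paper's proof asserts $L=\sum_iL_i$-smoothness ``using positive-semidefiniteness of Hessian and upper bounds on its block diagonals.'' For a PSD Hessian this is a valid deduction (for a PSD block matrix $H$ one has $x^THx\le\bigl(\sum_i\sqrt{x_i^TH_{ii}x_i}\bigr)^2\le\bigl(\sum_i\|H_{ii}\|\bigr)\tn{x}^2$ by the generalized Cauchy--Schwarz inequality), but PSD-ness is not among the stated hypotheses of Theorem \ref{lem win}, so global $L$-smoothness is effectively an implicit extra assumption there too. Your remark that it holds for quadratic losses such as \eqref{lin reg} and for the linearized wide-network regime is exactly the setting the paper has in mind, so treating it as a standing assumption, as you do, matches the paper's own level of rigor.
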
\vs
In words, this theorem captures the NI of a subset of weight\mc{weight->weights} throughout the training via the upper and lower bounds \eqref{bad2} and \eqref{good2}. For the experiments in Fig.~\ref{fig:Importance_layer}, based on \eqref{gradient change} of Lemma \ref{simple hessian gradient}, PPLS parameters $(\mu_{\W},L_{\W})$ of the input layer decay as $\la^{-2}$ and output layer parameters grow as $\la^2$. Thus, assuming $\la\geq 1$, for output layer we have $\kappa=L_{\W}/(\mu_{\W}+\mu_{\V})\sim \la^{-4}$ and, using \eqref{bad2} with $\tau=\infty$, NI is expected to decay as $\kappa^2\sim\la^{-8}$ (e.g.~for quadratic loss). Similarly, NI of the input layer is lower bounded via \eqref{good2} which grows as $1-{\cal{O}}(\la^{-4})$. Finally, for small $\la$, we have the reversed upper/lower bounds. In summary, our Theorem \ref{lem win} successfully explains the empirical NI behavior in Fig.~\ref{fig:Importance_layer}.

\vs
\eqref{dist bound} generalizes the ``short distance from initialization'' results of \cite{oymak2019overparameterized,gupta2019path} by controlling individual subsets of weights and also provides a bound on MI when $\bt_0=0$. As explained in supplementary, {\em{this theorem is tight up to local ($L_i/\mu_i$) and global ($L/\mu$) condition numbers and accurately captures the relative contributions of the weights $(\bt_{\iii})_{i=1}^D$}}. Observe that this theorem considers the Init-Pruning (w.r.t.~$\bt_0$) which is better suited for assessing optimization dynamics.

Note that the bounds of Thm \ref{lem win} greatly simplify at the global minima ($\tau\rightarrow\infty$). As mentioned earlier, training NI of Figure \ref{fig:covariance} can be explained by Thm~\ref{lem win}. In essence, scaling up a set of features increase their covariance (and PPLS parameter $\mu$) increasing the NI w.r.t.~training loss.

\begin{figure}[t!]
	\begin{subfigure}{2.2in}
		\centering
		\begin{tikzpicture}[scale=1,every node/.style={scale=1}]
		\node at (0,0) {\includegraphics[scale=0.36]{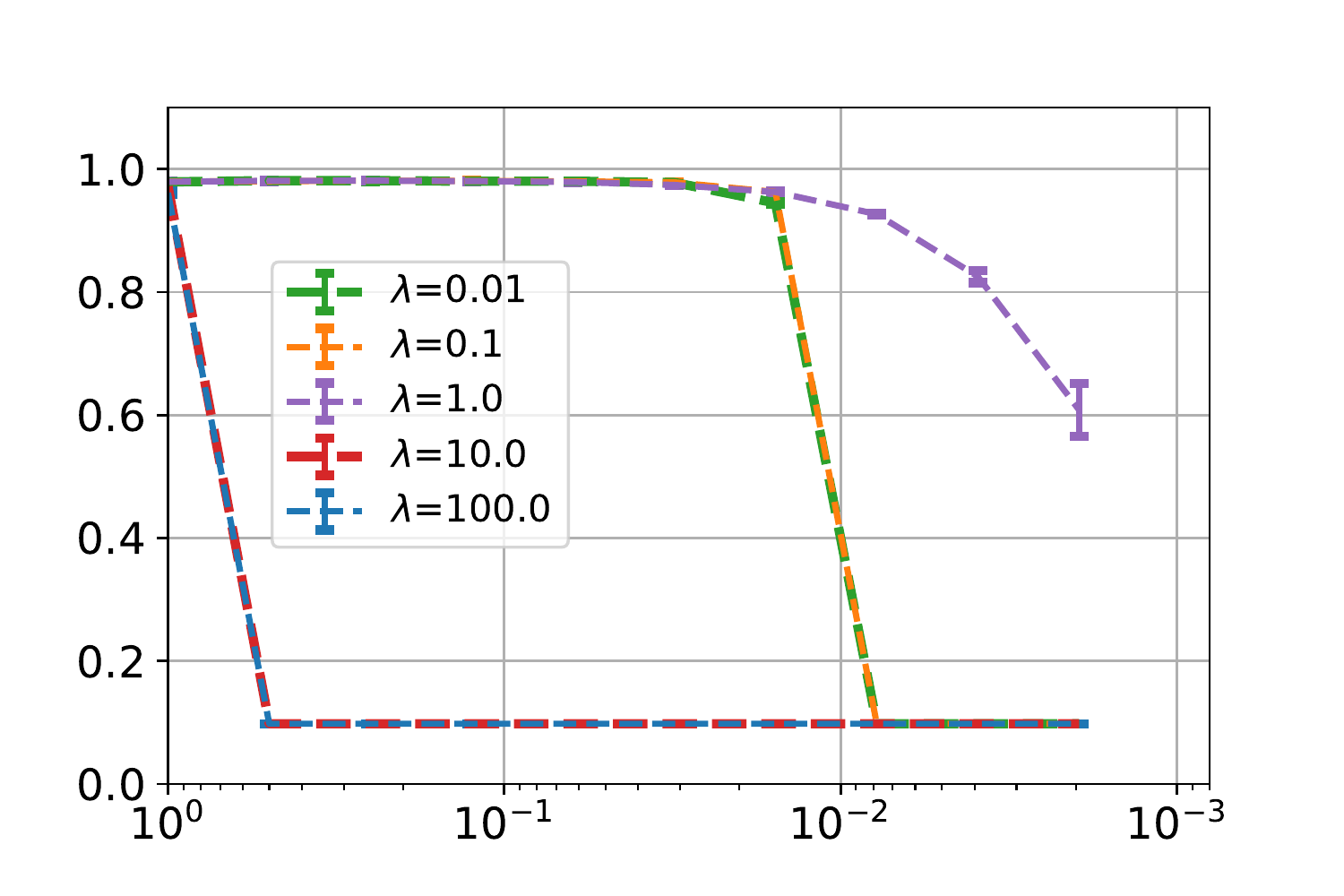}};
		\node at (-2.6,0) [rotate=90,scale=1.]{Test accuracy};
		\node at (0,-1.8) [scale=1.]{Fraction of non-zero};
		\node at (-1.1,-0.7) [scale=.6] {$\V=0$};
		\node at (1.1,-0.7) [scale=.6] {$\W=0$};
		\draw [->,>=stealth] (-1.2,-0.8) -- (-1.45,-0.95);
		\draw [->,>=stealth] (1.1,-0.8) -- (0.85,-0.95);
		\end{tikzpicture}\caption{\small{Test accuracy using MP. }}\label{fig:compare_purining_C10_1}
	\end{subfigure}\begin{subfigure}{2.2in}
		\centering
		\begin{tikzpicture}[scale=1,every node/.style={scale=1}]
		\node at (0,0) {\includegraphics[scale=0.36]{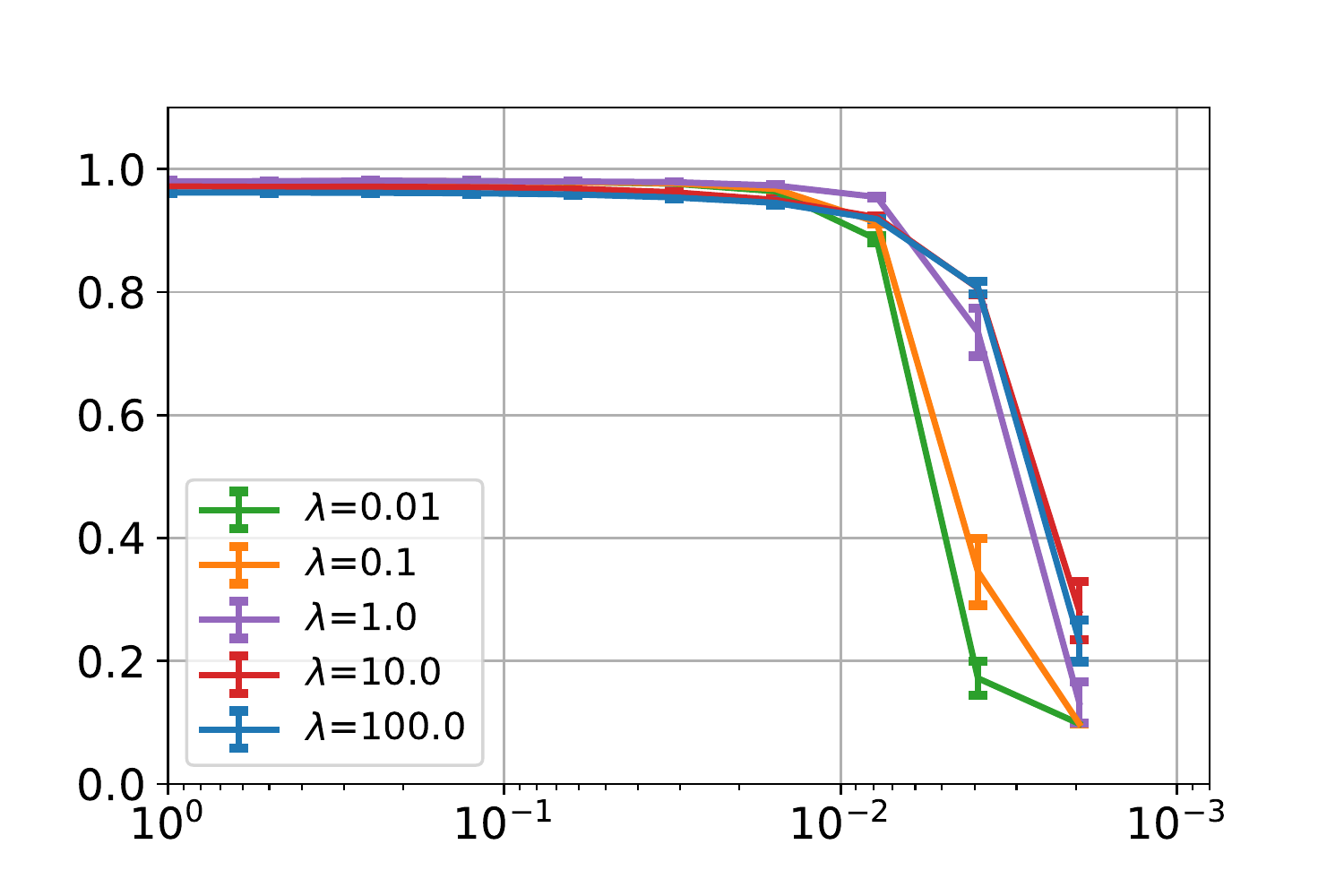}};
		\node at (-2.6,0) [rotate=90,scale=1.]{Test accuracy};
		\node at (0,-1.8) [scale=1.]{Fraction of non-zero};
		
		\end{tikzpicture}\caption{\small{Test accuracy using HP.}}\label{fig:compare_purining_C10_2} 
	\end{subfigure}\begin{subfigure}{2.2in}
		\centering
		\begin{tikzpicture}[scale=1,every node/.style={scale=1}]
		\node at (0,0) {\includegraphics[scale=0.36]{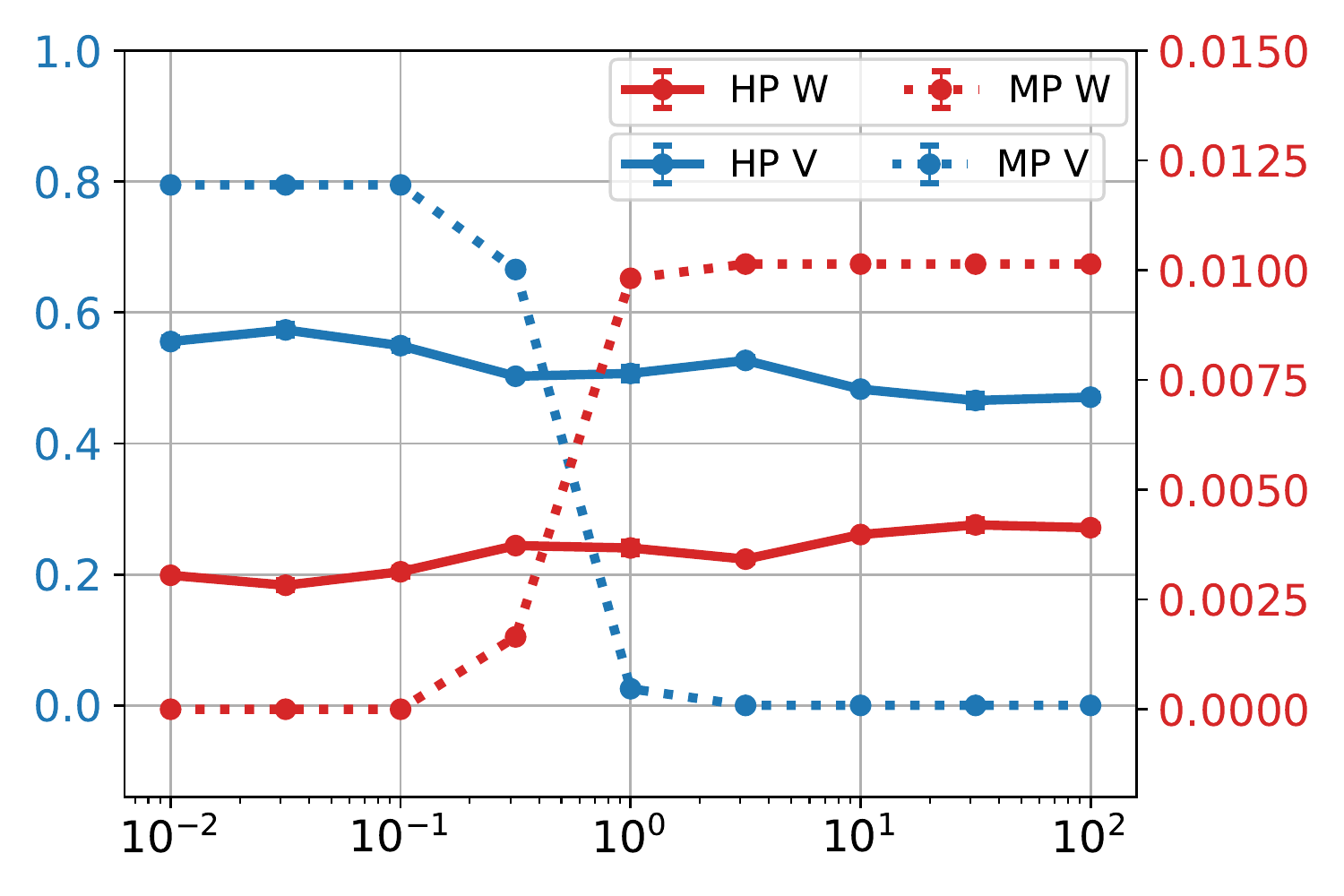}};
		\node at (-2.8,0) [rotate=90,scale=0.9]{Remaining output layer };
		\node at (2.8,0) [rotate=90,scale=0.9]{Remaining input layer };
		\node at (0,-1.6) [scale=1.]{$\lambda$};
		\end{tikzpicture}\caption{\small{Remaining nonzeros for $\V,\W$}}\label{fig:compare_purining_C10_3} 
	\end{subfigure}\caption{\small{In (a) and (b), we first apply MP and HP on the network weights $\bt_\fn^{(\la)}$ for varying pruning levels $s/p$ where $p=(K+d)m$. We then retrain the pruned network from same initial nonzeros (lottery initialization of \cite{frankle2019lottery}) and display the test accuracy. HP is more stable compared to MP under $\la$-scaled initializations. (c) Visualization of the remaining fractions of nonzero weights in input (red) and output (blue) layers after pruning the network to 1\% sparsity. Nonzero counts in both layers are stable under HP but rapidly change in MP as a function of $\la$.}}\label{fig:compare_pruning_C10}\vspace{-12pt}
\end{figure}

\vs\vs\vs\section{Conclusion}\label{sec:conc}\vs\vs\vs
We provided a principled exploration of model pruning for linear models and shallow networks. Our work reveals and formalizes the importance of Hessian/covariance structure for pruning over-parameterized models. We found that magnitude-based pruning is very brittle and requires good normalization whereas Hessian-based pruning is robust to problem structure. We also derived the first analytical performance formulas exactly capturing pruning for linear models which enabled us to do a thorough comparison between different methods. There are several interesting open directions. Can we derive similar sharp performance bounds for pruning random features or neural networks? What are the optimal initialization strategies for deep nets to enable ideal pruning performance?


{
	\bibliography{Bibfiles}
	\bibliographystyle{acm}
}
\clearpage
\appendix
\section*{Organization of the Supplementary Material}
Supplementary material is organized as follows.
\begin{enumerate}
\item Appendix \ref{app cgmt} derives Auxiliary Distribution (Definition \ref{aux_def}). We also provide the relevant background and supporting results on Convex Gaussian Min-Max Theorem (CGMT) and discuss how distributional similarity based on Def.~\ref{aux_def} can be formalized.
\item Appendix \ref{sec winner} proves Theorem \ref{lem win}. In Appendix \ref{subsec tight} (see Proposition \ref{tight}), we also provide theoretical results proving the tightness of the bounds provided in Theorem \ref{lem win}.
\item Appendix \ref{app lemma proof} proves Lemmas \ref{cov importance} and Lemma \ref{simple hessian gradient}.
\item Appendix \ref{app layerwise} provides further numerical results on Section \ref{neural sec}. Appendix \ref{app layerwise} provides results on layer-wise pruning, where pruning is done on each layer individually, and compares to Section \ref{neural sec} which uses standard pruning.
\item Appendix \ref{app further} provides further technical results supporting Appendix \ref{app cgmt}.
\end{enumerate}
\section{Auxiliary Distribution for Pruning Linear Models}\label{app cgmt}
\subsection{Technical Background on Convex Gaussian Min-Max Theorem}
CGMT framework is proposed by \cite{thrampoulidis2015regularized} and allows for accurate analysis of a large class of optimization problems involving random matrices. The key idea is relating the original problem (Primary Optimization PO) to an Auxiliary Optimization (AO) problem. Given compact convex set $\Sc\in\R^p$, regularization parameter $\la>0$ and continuous convex function $\psi(\cdot):\R^p\rightarrow\R$, define the functions
\begin{align}
\Phi_\la(\X)&=\min_{\w\in\Sc}\max_{\tn{\ab}\leq \la}\ab^T\X\w+\psi(\w)=\min_{\w\in\Sc}\la\tn{\X\w}+\psi(\w)\label{primary}\\
\phi_\la(\g,\h)&=\min_{\w\in\Sc}\max_{\tn{\ab}\leq \la}\tn{\w}\g^T\ab-\tn{\ab}\h^T\w+\psi(\w)\\
&=\min_{\w\in\Sc}\la(\tn{\w}\tn{\g}-\h^T\w)_++\psi(\w)\label{auxil}
\end{align}
Suppose $\X\in\R^{n\times p},\g\in\R^n,\h\in\R^p\distas\Nn(0,1)$. Then, CGMT yields the following inequality for any $\mu\in\R,t>0$, 
\begin{align}
\Pro(|\Phi_\la(\X)-\mu|> t)\leq 2\Pro(|\phi_\la(\g,\h)-\mu|> t).\label{CGMT main}
\end{align}
In words, the right and left-hand side objectives are probabilistically equal.

\noindent {\bf{Relation to ridge regression:}} Observe that \eqref{primary} can easily be related to ridge regression which solves
\begin{align}
\min_{\bt}\Lc_\la(\bt)=\min_{\bt}\la \tn{\y-\X\bt}+\tn{\bt}.\label{ridge reg}
\end{align}
Recalling $\y=\X\btb+\sigma \z$ with $\z\distas\Nn(0,1)$ and applying the change of variable $\w=\btb-\bt$, we find
\[
\Lc_\la(\w)=\la \tn{[\X~\z]\begin{bmatrix}\w\\\sigma\end{bmatrix}}+\tn{\btb-\w}.
\]
Observe that $\X'=[\X~\z]\in\R^{n\times (p+1)}\distas\Nn(0,1)$ thus setting $\psi(\w)=\tn{\btb-\w}$, minimization over $\Lc(\w)$ has the exact same form as \eqref{primary} and CGMT is applicable with
\[
\Phi_\la(\X')=\min_{\w}\la \tn{\X'\begin{bmatrix}\w\\\sigma\end{bmatrix}}+\tn{\btb-\w}
\]
Covariance on the design matrix can be handled as well as described in Appendix \ref{derive formula}. 

\noindent {\bf{Over-parameterized Least-Squares:}} In Section \ref{train loss} we study over-parameterized least-squares which interpolates the training labels perfectly rather than using ridge regularization. Specifically, we solve the min Euclidian norm problem
\[
\arg\min_{\bt}\tn{\bt}\quad\text{subject to}\quad \y=\X\bt.
\]
Note that this corresponds to solving \eqref{ridge reg} with $\la\rightarrow\infty$. Using the same change of variable, we end up with the primary optimization
\[
\Phi_\infty(\X')=\min_{\w}\tn{\btb-\w}\quad\text{subject to}\quad \X'\begin{bmatrix}\w\\\sigma\end{bmatrix}=0.
\]
Unfortunately, CGMT framework for our scenario has two drawbacks due to technical issues. First, it only handles the regularization term and doesn't allow for random matrix constraints. Secondly, as mentioned earlierin \eqref{primary}, $\w$ has to lie on a compact set $\Sc$. Even $\w\in\R^p$ has to be addressed with care. We first have the following theorem which circumvents these issues. The following result is a corollary of Theorem \ref{thm closed} and allows for equality constraints on $\X$ and replaces compactness on $\Sc$ with closedness.
\begin{theorem}[CGMT with constraints] \label{lem cgmt constrained}Given a closed $\Sc$ and a continuous function $\psi$ satisfying $\lim_{\tn{\vb}\rightarrow\infty}\psi(\vb)=\infty$, define the PO and AO problems
\begin{align}
&\Phi_\infty(\X)=\min_{\w\in\Sc,\X\w=0}\psi(\w)\label{con prim}\\
&\phi_\infty(\g,\h)=\min_{\w\in\Sc,\tn{\w}\tn{\g}\leq \h^T\w}\psi(\w)\label{con aux}.
\end{align}
Suppose $\X\in\R^{n\times p},\g\in\R^{n},\h\in\R^{p}\distas\Nn(0,1)$. Then, for any $t>0$ and $\mu\in\R$, we have that
\begin{itemize}
\item $\Pro(\Phi_\infty(\X) < t)\leq2\Pro(\phi_\infty(\g,\h)\leq t)$.
\item If $\Sc$ is convex, we additionally have $\Pro(\Phi_\infty(\X)> t)\leq2\Pro(\phi_\infty(\g,\h)\geq t)$.
\end{itemize}
\end{theorem}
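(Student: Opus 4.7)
The plan is to obtain Theorem \ref{lem cgmt constrained} as the $\la \to \infty$ limit of the penalized CGMT in Theorem \ref{thm closed}, exploiting the fact that the hard constraints $\X\w = 0$ and $\tn{\w}\tn{\g} \leq \h^T\w$ appearing in \eqref{con prim}--\eqref{con aux} are precisely the $\la \to \infty$ limits of the penalty terms $\la\tn{\X\w}$ and $\la(\tn{\w}\tn{\g} - \h^T\w)_+$ in the finite-$\la$ formulations \eqref{primary}--\eqref{auxil}. So the skeleton is: apply the closed-set penalized CGMT at each finite $\la$, then pass to the limit.

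First I would establish the pointwise monotone convergence $\Phi_\la(\X)\uparrow\Phi_\infty(\X)$ and $\phi_\la(\g,\h)\uparrow\phi_\infty(\g,\h)$ for every fixed realization. The coercivity hypothesis $\psi(\vb)\to\infty$ as $\tn{\vb}\to\infty$ is indispensable here: it forces the sublevel sets of the penalized objectives to lie in a common compact ball $B_R\cap\Sc$ uniformly for all $\la\geq 1$, so that the infima are attained and their minimizers accumulate on the constrained feasible region as $\la\to\infty$. On the event that the constrained feasible set is empty I would adopt the convention $\inf_\emptyset = +\infty$, which is consistent with $\Phi_\la\to+\infty$ or $\phi_\la\to+\infty$ on that event.

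Second, I would apply Theorem \ref{thm closed} at each fixed $\la$ to obtain $\Pro(\Phi_\la(\X)\leq s)\leq 2\Pro(\phi_\la(\g,\h)\leq s)$, and, when $\Sc$ is convex, the reverse bound $\Pro(\Phi_\la(\X)\geq s)\leq 2\Pro(\phi_\la(\g,\h)\geq s)$. Because $\Phi_\la$ is nondecreasing in $\la$, the events $\{\Phi_\la\leq s\}$ form a decreasing sequence with intersection $\{\Phi_\infty\leq s\}$; continuity of probability from above then gives $\Pro(\Phi_\infty\leq s)=\lim_\la \Pro(\Phi_\la\leq s)$, and similarly for $\phi$. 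Passing to the limit in the finite-$\la$ CGMT inequalities yields $\Pro(\Phi_\infty\leq s)\leq 2\Pro(\phi_\infty\leq s)$. Finally, letting $s\uparrow t$ together with the monotonicity $\Pro(\phi_\infty\leq s)\leq\Pro(\phi_\infty\leq t)$ delivers the first inequality, and under convexity the analogous argument with $s\downarrow t$ delivers the second.

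The main obstacle will be the careful justification of the penalty limit: specifically, verifying that the penalized infimum actually converges to the constrained infimum rather than to some smaller value, which is where coercivity of $\psi$ and closedness of $\Sc$ enter. A secondary delicate point is that the AO feasible set $\{\w\in\Sc:\tn{\w}\tn{\g}\leq\h^T\w\}$ requires $\h^T\w\geq 0$ and can be empty on a set of positive probability (e.g.\ if $\tn{\g}$ is large relative to $\h$); I would handle this by splitting into the nonempty-feasibility event (where the monotone-limit argument applies verbatim) and its complement (where $\phi_\infty=+\infty$ and the stated inequality holds trivially). A final bookkeeping issue is that Theorem \ref{thm closed} is stated for closed $\Sc$, so I would need to check that the coercivity-induced truncations $\Sc\cap B_R$ remain closed (which they do) and that its conclusion survives intersection with the compact ball without loss.
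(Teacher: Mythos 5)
Your proposal is correct and is essentially the paper's own route: the paper deduces this theorem directly from the flexible CGMT of Theorem \ref{thm closed} (whose statement already covers $\la=\infty$), and the penalization-to-constraint limit you carry out---coercivity of $\psi$ confining the penalized minimizers to a compact sublevel set so that $\Phi_\la\uparrow\Phi_\infty$ and $\phi_\la\uparrow\phi_\infty$ pointwise (with value $+\infty$ on infeasible realizations), followed by passing the finite-$\la$ inequalities to the limit---is exactly the argument the paper runs inside the proofs of Theorem \ref{thm closed} and Lemmas \ref{lem convex}, \ref{lem general constraint}, \ref{lem continuous limit}, the only difference being that the paper passes to the limit via dominated convergence on indicator functions while you use monotonicity in $\la$ together with continuity of measure and the $s\uparrow t$, $s\downarrow t$ bookkeeping. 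One small imprecision: on the event that the AO feasible set is empty the claimed inequality is not ``trivial''---what is actually needed there (and what your coercivity/compactness argument does deliver) is $\phi_\la\to+\infty$ on that event, so the monotone-events step already handles it without any case split.
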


\subsection{Using CGMT to Infer the Properties of the Solution}

In this section, we provide a discussion of how CGMT can be used to infer the properties of the solution of \eqref{primary} by studying the solution of \eqref{auxil}. This is already the topic of several interesting papers on random matrix theory and high-dimensional statistics \cite{thrampoulidis2015lasso,thrampoulidis2015regularized,thrampoulidis2018precise}. Below, we formalize the distributional similarity of the solution of the primary problem \eqref{primary} and auxiliary problem \eqref{auxil} in terms of subsets of $\R^p$ for which auxiliary solution concentrates on.

\begin{lemma} [AO solution to PO solution] \label{dist cont lem}Let $\X\in\R^{n\times p},\g\in\R^n,\h\in\R^p\distas\Nn(0,1)$. Suppose we have two loss functions $\Lc_{PO}(\w;\X)$ and $\Lc_{AO}(\w;\g,\h)$ as a function of $\w$\footnote{$\Lc(\w,\ab)$ can account for additional set constraints of type $\w\in\Cc$ by adding the indicator penalty $\max_{\la\geq 0}\la 1_{\w\not\in\Cc}$.}. Given a set $\Sc$, define the objectives 
\begin{align}
\Phi_{\Sc}(\X)=\min_{\w\in\Sc}\Lc_{PO}(\w;\X)\quad\text{and}\quad\phi_{\Sc}(\g,\h)=\min_{\w\in\Sc}\Lc_{AO}(\w;\g,\h).\label{general phi}
\end{align}
Suppose $\Phi$ and $\phi$ satisfies the following conditions for any closed set $\Sc$ and $t$
\begin{itemize}
\item $\Pro(\Phi_{\Sc}(\X) < t)\leq2\Pro(\phi_{\Sc}(\g,\h)\leq t)$.
\item Furthermore, if $\Sc$ is convex, $\Pro(\Phi_{\Sc}(\X) > t)\leq2\Pro(\phi_{\Sc}(\g,\h)\geq t)$.
\end{itemize}
Define the set of global minima $\Mc=\{\w\bgl \Lc(\w;\X)=\Phi(\X)\}$. For any closed set $\Sc$, we have that
\begin{align}
\Pro(\Mc\in\Sc^c)\geq 1-2\min_{t}(\Pro(\phi_{\R^p}(\g,\h)\geq t)+\Pro(\phi_{\Sc}(\g,\h)\leq t)).\label{min inside}
\end{align}
\end{lemma}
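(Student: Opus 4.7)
The event I want to control is $\{\Mc\subseteq\Sc^c\}$, equivalently $\{\Mc\cap\Sc=\emptyset\}$. The plan is to upper bound the complementary event $\{\Mc\cap\Sc\neq\emptyset\}$ deterministically by a comparison of two PO minima, $\Phi_{\Sc}(\X)$ versus $\Phi_{\R^p}(\X)$, and then invoke the two hypothesized CGMT-type inequalities, one applied with the closed set $\Sc$ and the other with the closed and convex set $\R^p$, to transfer this comparison onto the AO side.

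\textbf{Deterministic reduction and thresholding.} Because $\Sc\subseteq\R^p$, one always has $\Phi_{\Sc}(\X)\geq\Phi_{\R^p}(\X)$. Conversely, if some $\w^\star\in\Mc\cap\Sc$ exists, then $\Phi_{\Sc}(\X)\leq\Lc_{PO}(\w^\star;\X)=\Phi_{\R^p}(\X)$, forcing equality; in particular $\{\Mc\cap\Sc\neq\emptyset\}\subseteq\{\Phi_{\Sc}(\X)\leq\Phi_{\R^p}(\X)\}$. For any threshold $t\in\R$ the elementary dichotomy ``if $\Phi_{\Sc}(\X)>t$ and $\Phi_{\Sc}(\X)\leq\Phi_{\R^p}(\X)$ then $\Phi_{\R^p}(\X)>t$'' combined with a union bound gives
\[
\Pro(\Mc\cap\Sc\neq\emptyset)\leq \Pro(\Phi_{\Sc}(\X)\leq t)+\Pro(\Phi_{\R^p}(\X)>t).
\]

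\textbf{Invoking CGMT.} Since $\Sc$ is closed, the first hypothesized bound yields $\Pro(\Phi_{\Sc}(\X)<s)\leq 2\Pro(\phi_{\Sc}(\g,\h)\leq s)$ for every $s$; taking $s=t+\eps$ and sending $\eps\downarrow 0$ along a sequence avoiding the at most countably many atoms of $\phi_{\Sc}(\g,\h)$ gives $\Pro(\Phi_{\Sc}(\X)\leq t)\leq 2\Pro(\phi_{\Sc}(\g,\h)\leq t)$. Since $\R^p$ is closed and convex, the second hypothesized bound applies and directly gives $\Pro(\Phi_{\R^p}(\X)>t)\leq 2\Pro(\phi_{\R^p}(\g,\h)\geq t)$. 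Plugging these into the display above, minimizing the resulting right-hand side over $t\in\R$, and passing to the complementary event produces \eqref{min inside}.

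\textbf{Expected obstacle.} The deterministic domination and union bound are elementary, so the only delicate point is the strict-versus-non-strict passage for $\Phi_{\Sc}$; this is what the atom-avoiding limit handles, and since we ultimately optimize over $t$, restricting $t$ to the complement of a countable set costs nothing. A secondary concern is verifying that the convex-case CGMT inequality genuinely applies to the unbounded ambient set $\R^p$, but this is precisely what Theorem~\ref{lem cgmt constrained} enables (via the coercivity condition on $\psi$ that guarantees existence of PO/AO minimizers). No concentration or tail estimates beyond these two structural inputs are required.
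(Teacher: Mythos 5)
Your proposal is correct and follows essentially the same route as the paper's proof: the same deterministic threshold comparison between $\Phi_{\R^p}(\X)$ and $\Phi_{\Sc}(\X)$ (stated via the complementary event $\Mc\cap\Sc\neq\emptyset$), the same union bound, the same application of the closed-set and convex-set CGMT hypotheses to $\Sc$ and $\R^p$ respectively, and the same right-limit handling of the strict-versus-non-strict inequality before minimizing over $t$. The only cosmetic difference is that your atom-avoiding subsequence is unnecessary, since right-continuity of the distribution function of $\phi_{\Sc}(\g,\h)$ already gives $\lim_{\eps\downarrow 0}\Pro(\phi_{\Sc}(\g,\h)\leq t+\eps)=\Pro(\phi_{\Sc}(\g,\h)\leq t)$, which is exactly the limit the paper takes.
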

\begin{proof} Let $\w^*\in\Mc$. Suppose the events $\Phi_{\R^p}(\g,\h)\leq t$ and $\Phi_{\Sc}(\g,\h)> t$ hold. These two imply that $\w^*\not\in\Sc$ hence $\Mc\subseteq\Sc^c$. To proceed, for any choice of $t$
\begin{align}
\Pro(\Mc\in\Sc^c)&\geq \Pro(\{\Phi_{\R^p}(\g,\h)\leq t\}\cap \{\Phi_{\Sc}(\g,\h)> t\})\\
&\geq 1-\Pro(\Phi_{\R^p}(\g,\h)> t)-\Pro(\Phi_{\Sc}(\g,\h)\leq t)\\
&\geq 1-\Pro(\Phi_{\R^p}(\g,\h)> t)-\lim_{t'\rightarrow t^+}\Pro(\Phi_{\Sc}(\g,\h)<t')\\
&\geq 1-2\Pro(\phi_{\R^p}(\g,\h)\geq t)-2\lim_{t'\rightarrow t^+}\Pro(\phi_{\Sc}(\g,\h)\leq t').
\end{align}
Since this holds for all $t$ and cumulative distribution function is continuous, we get the advertised bound \eqref{min inside}.
\end{proof}

Note that assumptions of this lemma on the loss functions \eqref{general phi} holds for over-parameterized least-squares based on Theorem \ref{lem cgmt constrained}. In words, this lemma states that, if we can identify a set $\Sc$ such that $\Sc$-constrained auxiliary cost $\phi_{\Sc}(\g,\h)$ is larger than the unconstrained cost $\phi_{\R^p}(\g,\h)$, then, the solution of the primary problem provably lies on the complement $\Sc^c$.

Then, if we wish to prove the global minima $\Mc$ of the primary problem satisfies some property $\Pc$, the line of attack is as follows.
\begin{itemize}
\item Let $\Sc$ be the set of vectors not satisfying $\Pc$.
\item Show that $\phi_{\Sc}(\g,\h)>\phi_{\R^p}(\g,\h)$ with high probability.
\end{itemize}
In our application, we wish to argue that pruned auxiliary distribution $\Pi^M_s(\bta)$ achieves the same test error as the pruned primary solution $\Pi^M_s(\bth)$. Thus, the undesired set $\Sc$ can be defined as the set of vectors whose test error after pruning does not deviate much from the expected test error of pruned auxiliary solution $\bta=\btb-\w_{\text{aux}}$ i.e. (assuming $\bSi=\Iden$, the test error simplifies to Euclidian distance to the ground-truth $\btb$)
\[
\Sc=\{\w\bgl |\tn{\Pi^M_s(\btb-\w)-\btb}-\E\tn{\Pi^M_s(\bta)-\btb}|\leq \eps\},
\]
where $\eps>0$ is a knob which can approach $0$ asymptotically. Setting $\gamma=\E\tn{\Pi^M_s(\bta)-\btb}$ and $f(\w)=\tn{\Pi^M_s(\btb-\w)-\btb}$, this can be simplified to
\[
\Sc=\{\w\bgl |f(\w)-\gamma|\leq \eps\},
\]
{\bf{Technical Challenge in Pruning Analysis:}} Here, the technical challenge is analyzing the auxiliary problem over $\Sc$ which is a highly non-convex set due to the hard-thresholding operator. Even showing the concentration of the auxiliary error $\tn{\Pi^M_s(\bta)-\btb}$ around its expectation $\gamma$ is not trivial. If $f(\w)$ is a Lipschitz function of $\w$, $\Sc$ is a more manageable set and it is typically relatively easy to show that its elements are bounded away from zero (in Euclidian norm). Once $\Sc$ is bounded away from zero, what remains is showing optimization over $\Sc$ leads to a strictly larger loss since the set doesn't include global minima in it with high probability. We again remark that using soft-thresholding based pruning would be an easier path to theoretical guarantees and fully formalizing the pruning formulas as the soft-thresholding operator $\text{shrink}_T(x)=\max(x-T,0)$ is Lipschitz.

Finally, the next subsection derives the auxiliary distribution of Definition \ref{aux_def} by solving the auxiliary problem associated with the over-parameterized least-squares.

\subsection{Deriving the Auxiliary Distribution (Definition \ref{aux_def})}\label{derive formula}
\subsubsection{Over-parameterized Least-Squares with Diagonal Covariance}
Let us first set the exact problem we are analyzing. Let $\X\in\R^{n\times p}$ have zero-mean and normally distributed rows with a diagonal covariance matrix $\bSi=\E[\x\x^T]$. Given ground-truth vector $\bt$ and labels $\y=\X\bt+\sigma \z$, we consider the least-squares problem subject to the minimum Euclidian norm constraint (as $\kappa=p/n>1$) given by
\begin{align}
\min_{\bt'}\tn{\bt'}\quad\text{subject to}\quad \y=\X\bt'.\label{this problem}
\end{align}
Next subsection \ref{next subsec} will adapt the analysis of this subsection to obtain Def.~\ref{aux_def}. Using change of variable $\bt'=\bt-\w$, optimization problem \eqref{this problem} leads to
\begin{align}
\Phi(\X)=\min_{\w} \tn{\bt-\w}\quad\text{subject to}\quad \X\w+\sigma \z=0.
\end{align}
Write $\X=\Xb\sqrt{\bSi}$ where $\Xb\distas\Nn(0,1)$. Noticing $\tn{\X\w+\sigma \z}=\tn{\Xb\sqrt{\bSi}\w+\sigma \z}$, and recalling the constrained CGMT forms \eqref{con prim} and \eqref{con aux}, the auxiliary problem takes the form
\begin{align}
\phi(\g,\h)=\min_{\w} \tn{\bt-\w}\quad\text{subject to}\quad \tn{\g}\tn{\sqrt{\bSi}\w~{\sigma}}\leq \h^T\sqrt{\bSi}\w+\sigma h.\label{aux prob}
\end{align}
where $\g\sim\Nn(0,\Iden_n)$, $\h\sim\Nn(0,\Iden_p)$, $h\sim \Nn(0,1)$. Set $\bar{\h}=\h/\sqrt{p}$. Letting $p\rightarrow\infty$ and setting $\kappa=p/n$ a constant, observe that $h/\tn{\g}\rightarrow 0$, $h/\tn{\g}=\sqrt{\kappa}\bar{h}$, and we have pointwise convergence (over $\w$) to the problem
\begin{align}
\phi(\g,\h)=\min_{\w} \tn{\bt-\w}\quad\text{subject to}\quad \tn{\sqrt{\bSi}\w~{\sigma}}\leq \sqrt{\kappa}\bar{\h}^T\sqrt{\bSi}\w.
\end{align}
Taking the squares of both sides, we find the equivalent optimization (which preserves the minima)
\begin{align}
\phi(\g,\h)=\min_{\w} \tn{\bt-\w}^2\quad\text{subject to}\quad \tn{\sqrt{\bSi}\w~{\sigma}}^2\leq \kappa(\bar{\h}^T\sqrt{\bSi}\w)^2,
\end{align}
Set $S(\w)=\bar{\h}^T\sqrt{\bSi}\w=\sum_{i=1}^p\bar{\h}_i\w_i\sqrt{\bSi_{i,i}}$. The optimization above can alternatively be written in the entrywise decomposed form
\begin{align}
\phi(\g,\h)=\min_{\w} \sum_{i=1}^p(\bt_i-\w_i)^2\quad\text{subject to}\quad \sigma^2+\sum_{i=1}^p\bSi_{i,i}\w_i^2\leq \kappa S(\w)^2.
\end{align}
Considering the Lagrangian form, we find
\begin{align}
\phi(\g,\h)=\min_{\w}\max_{\Xi\geq 0} \sum_{i=1}^p(\bt_i-\w_i)^2+\Xi[\sigma^2+\sum_{i=1}^p\bSi_{i,i}\w_i^2- \kappa S(\w)^2].\label{lagrange form}
\end{align}
We will decompose entries of $\w_i$ as a term dependent on $\bar{\h}_i$ and an independent bias term via 
\begin{align}
\w_i=\frac{\gamma_i}{\sqrt{\bSi_{i,i}}}\bar{\h}_i+\zeta_i\bt_i.\label{w form}
\end{align}
Also set the variable
\[
\Gamma=(\frac{1}{p}\sum_{i=1}^p\gamma_i)^2.
\]
Using Law of Large Numbers, we have 
\[
\lim_{p\rightarrow\infty}S(\w)=\E[\bar{\h}^T\sqrt{\bSi}\w]=\E[\sum_{i=1}^p\gamma_i\bar{\h}_i^2]=\sqrt{\Gamma},
\] 
and
\[
\lim_{p\rightarrow\infty}\sum_{i=1}^p(\bt_i-\w_i)^2=\E[\sum_{i=1}^p(\bt_i-\w_i)^2]=\sum_{i=1}^p (1-\zeta_i)^2\bt_i^2+\frac{\gamma_i^2}{p\bSi_{i,i}},
\]
and
\[
\lim_{p\rightarrow\infty}\sum_{i=1}^p\bSi_{i,i}\w_i^2=\E[\sum_{i=1}^p\bSi_{i,i}\w_i^2]=\sum_{i=1}^p\bSi_{i,i}\zeta_i^2\bt_i^2+\frac{\gamma_i^2}{p}.
\]
Thus, we rewrite the problem \eqref{lagrange form} as
\begin{align}
\phi(\g,\h)=\min_{\bz,\bg}\max_{\Xi\geq 0} \sum_{i=1}^p(1-\zeta_i)^2\bt_i^2+\frac{\gamma_i^2}{p\bSi_{i,i}}+\Xi[\sigma^2+\sum_{i=1}^p\bSi_{i,i}\zeta_i^2\bt_i^2+\frac{\gamma_i^2}{p}- \kappa \Gamma].
\end{align}
Differentiating with respect to $\gamma_i$ and $\zeta_i$, and recalling the definition of $\Gamma$, we obtain the equations
\begin{align}
\frac{\gamma_i}{p\bSi_{i,i}}+\Xi (\frac{\gamma_i}{p}-\frac{\kappa\sqrt{\Gamma}}{p})=0&\iff \gamma_i=\frac{\kappa\sqrt{\Gamma}}{1+(\Xi\bSi_{i,i})^{-1}}\label{gamma con}\\
(\zeta_i-1)\bt_i^2+\Xi \bSi_{i,i}\bt_i^2\zeta_i=0&\iff \zeta_i=\frac{1}{1+\Xi \bSi_{i,i}}.\label{zeta con}
\end{align}
Using the definition of $\Gamma$, we find that, $\Xi>0$ has to satisfy
\begin{align}
&\sqrt{\Gamma}=\frac{1}{p}\sum_{i=1}^p\gamma_i=\frac{1}{p}\sum_{i=1}^p\frac{\kappa\sqrt{\Gamma}}{1+(\Xi\bSi_{i,i})^{-1}}\iff \\
&1=\frac{\kappa}{p}\sum_{i=1}^p\frac{1}{1+(\Xi\bSi_{i,i})^{-1}}.\label{xi const}
\end{align}
Finally, since $\Xi>0$, we need to satisfy the complementary slackness i.e.~the term multiplying $\Xi$ has to be zero. This implies the equality
\begin{align}
\sigma^2+\sum_{i=1}^p \frac{\gamma_i^2}{p}+\bSi_{i,i}\zeta_i^2\bt_i^2=\kappa\Gamma.\label{comp slack}
\end{align}
In summary, following \eqref{w form}, we found that the solution to auxiliary problem \eqref{aux prob} has the form
\begin{align}
\w(\g,\h)=\bz\odot\bt+\bSi^{-1/2}\bg\odot\bar{\h},\nn
\end{align}
where $\bg,\bz\in\R^p$ are given by solving the following equations.
\begin{itemize}
\item $\Xi$ satisfies \eqref{xi const}. Note that there is a unique positive $\Xi$ solving this equation because when $\Xi=0$ right side is $p/n$ which is larger than one and the right side is strictly decreasing function of $\Xi$ thus mean-value theorem implies unique solution,
\item $\zeta_i$ satisfies \eqref{zeta con},
\item $\gamma_i$ satisfies \eqref{gamma con},
\item Finally $\Gamma$ satisfies \eqref{comp slack} which leads to (after substituting $\gamma_i$ definition)
\begin{align}
&\sigma^2+\sum_{i=1}^p \frac{\kappa^2\Gamma}{p(1+(\Xi\bSi_{i,i})^{-1})^2}+\bSi_{i,i}\zeta_i^2\bt_i^2=\kappa\Gamma\iff \sigma^2+\sum_{i=1}^p \bSi_{i,i}\zeta_i^2\bt_i^2=\kappa\Gamma(1-\frac{\kappa}{p}\sum_{i=1}^p(1+(\Xi\bSi_{i,i})^{-1})^{-2}),\nn
\end{align}
which yields
\begin{align}
\Gamma=\frac{\sigma^2+\sum_{i=1}^p \bSi_{i,i}\zeta_i^2\bt_i^2}{\kappa(1-\frac{\kappa}{p}\sum_{i=1}^p(1+(\Xi\bSi_{i,i})^{-1})^{-2})}.\label{Gam con}
\end{align}
Finally, the parameter distribution of the axuiliary problem is given by reversing the change of variable i.e.~
\begin{align}
\bta=\bt-\w(\g,\h)=(\onebb_p-\bz)\odot\bt-\bSi^{-1/2}\bg\odot\bar{\h},\label{aux form eq}
\end{align}
where $\bar{\h}\sim \Nn(0,\Iden_p/p)$.
\end{itemize}

\subsubsection{Obtaining the Auxiliary Distribution of Definition \ref{aux_def}}\label{next subsec}

The setup in Section \ref{sec linear} can be mapped to the previous section as follows.
\begin{itemize}
\item Feature covariance is $\bSi=\La^2$ for some diagonal matrix $\La$,
\item The ground-truth vector is $\bt=\La^{-1}\btb$ (as $\La^{-1}\btb$ is the population minima of $\Lc_\La$).
\end{itemize}
Plugging these into \eqref{zeta con}, \eqref{gamma con}, \eqref{xi const}, \eqref{Gam con} and finally the equation of the auxiliary solution \eqref{aux form eq} leads to Definition \ref{aux_def}. Specifically, the terms are stated in terms of $\La$ rather than $\bSi$ and we also remark that $\Gamma,\bta$ terms slightly differ due to the ground-truth vector mapping $\bt\leftrightarrow\La^{-1}\btb$.


\section{Larger Hessian Wins More}\label{sec winner}
This section proves Theorem \ref{lem win} and explains the tightness of its bounds. The following lemma is a standard result under smoothness (Lipschitz gradient) condition.
\begin{lemma} \label{bound grad loss}Suppose $\Lc$ has $L$-Lipschitz gradients and $\min_{\bt'}\Lc(\bt')\geq 0$. Then, we have that
\[
\tn{\nabla\Lc(\bt)}\leq  \sqrt{2L\Lc(\bt)}.
\]
\end{lemma}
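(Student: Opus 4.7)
The plan is to invoke the descent lemma (a direct consequence of $L$-Lipschitz gradients) and apply it at a carefully chosen point to produce the desired inequality. Concretely, smoothness yields for all $\bt, \bt' \in \R^p$ the quadratic upper bound
\[
\Lc(\bt') \leq \Lc(\bt) + \langle \nabla\Lc(\bt), \bt' - \bt \rangle + \tfrac{L}{2}\tn{\bt' - \bt}^2.
\]
The natural move is to minimize the right-hand side over $\bt'$, which is a simple quadratic in $\bt'$. The minimizer is $\bt' = \bt - \tfrac{1}{L}\nabla\Lc(\bt)$, i.e.\ a single gradient step with stepsize $1/L$.

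Substituting this choice collapses the linear and quadratic terms into $-\tfrac{1}{2L}\tn{\nabla\Lc(\bt)}^2$, so the bound becomes
\[
\Lc(\bt') \leq \Lc(\bt) - \tfrac{1}{2L}\tn{\nabla\Lc(\bt)}^2.
\]
Now I use the assumption $\min_{\bt'}\Lc(\bt') \geq 0$, which applies in particular to this specific $\bt'$. This gives $0 \leq \Lc(\bt) - \tfrac{1}{2L}\tn{\nabla\Lc(\bt)}^2$, i.e. $\tn{\nabla\Lc(\bt)}^2 \leq 2L\Lc(\bt)$, and taking square roots on both sides (both sides are nonnegative) yields the claimed bound $\tn{\nabla\Lc(\bt)} \leq \sqrt{2L\Lc(\bt)}$.

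There is no real obstacle here: the only ingredients are the descent lemma (which is the standard consequence of Lipschitz-continuous gradients and is established by integrating $\nabla\Lc$ along the segment from $\bt$ to $\bt'$ and applying Cauchy--Schwarz together with the Lipschitz bound), and the global nonnegativity of $\Lc$. Both are given by the hypotheses, so the proof is essentially one line once the descent lemma is stated. The only subtlety worth flagging is that we do \emph{not} need $\Lc$ to actually attain its minimum; we only need the lower bound $\Lc(\bt') \geq 0$ at the single point produced by the gradient step, which is immediate from the infimum hypothesis.
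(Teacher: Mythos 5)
Your proof is correct and follows essentially the same route as the paper: apply the descent lemma at the gradient-step point $\bt - \tfrac{1}{L}\nabla\Lc(\bt)$ and invoke nonnegativity of $\Lc$ to conclude $\tn{\nabla\Lc(\bt)}^2 \leq 2L\Lc(\bt)$. No gaps.
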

\begin{proof} $L$-smoothness of the function implies
\[
\Lc(\ab)\leq \Lc(\bb)+\li\nabla \Lc(\bb),\ab-\bb\ri+\frac{L}{2}\tn{\ab-\bb}^2.
\]
Setting $\ab=\bb-\nabla \Lc(\bb)/L$, we find the desired result via
\[
\frac{\tn{\nabla \Lc(\bb)}^2}{2L}\leq \Lc(\bb)-\Lc(\ab)\leq \Lc(\bb)-\min_{\bt'}\Lc(\bt')\leq \Lc(\bb).
\]
\end{proof}

\subsection{Proof of Theorem \ref{lem win}}

\begin{proof} {\bf{Step 1: Proving \eqref{dist bound}}}: Our proof will be accomplished by carefully keeping track of the gradient descent dynamics for both parameters. Observe that if PPLS holds, then the full gradient satisfies PL condition with parameter $\mu=\sum_{i=1}^D\mu_i$ since
\[
\tn{\nabla\Lc(\bt)}^2\sum_{i=1}^D\tn{\frac{\pa }{\pa \bt_{\iii}}\Lc(\bt)}^2\geq 2\sum_{i=1}^D\mu_i\Lc(\bt)=2\mu\Lc(\bt).
\]
With this observation, the statement 
\begin{align}
\Lc(\bt_\tau)\leq (1-\eta\mu)^\tau \Lc(\bt_0)\label{standard polyak2}
\end{align} on linear convergence is standard knowledge on PL inequality. Denote the $i$th partial derivative via $\nabla_i\Lc(\bt_\tau)$. Using properties of Hessian and $L_i$-Lipschitzness of partial gradient with respect to $\bt_{\iii}$, note that overall function is $L=\sum_{i=1}^DL_i$-smooth using positive-semidefiniteness of Hessian and upper bounds on its block diagonals. Secondly using PL condition and Lemma \ref{bound grad loss}, we have that
\ysatt{sqrt}
\[
\frac{\tn{\nabla_i\Lc(\bt_\tau)}}{\tn{\nabla\Lc(\bt_\tau)}}\leq  \frac{\sqrt{L_i\Lc(\bt_\tau)}}{\sqrt{\mu\Lc(\bt_\tau)}}\leq \sqrt{L_i/\mu}.
\] 
Thus, we can write
\begin{align}
\tn{\bt_{\iii,\tau+1}-\bt_{\iii,0}}&\leq \tn{\bt_{\iii,\tau}-\bt_{\iii,0}}+\eta\tn{\nabla_i\Lc(\bt_\tau)}\\
&\leq \tn{\bt_{\iii,\tau}-\bt_{\iii,0}}+\eta\sqrt{L_i/\mu}\tn{\nabla\Lc(\bt_\tau)}.
\end{align}
For any $\eta\leq 1/L$, $L$-smoothness and PL condition guarantees
\begin{align}
&\Lc(\bt_{\tau+1})\leq \Lc(\bt_\tau)-\frac{\eta}{2}\tn{\nabla\Lc(\bt_\tau)}^2\implies\\
&\sqrt{\Lc(\bt_{\tau+1})}\leq \sqrt{\Lc(\bt_\tau)}-\frac{\eta}{4 \sqrt{\Lc(\bt_\tau)}}\tn{\nabla\Lc(\bt_\tau)}^2\\
&\sqrt{\Lc(\bt_{\tau+1})}\leq \sqrt{\Lc(\bt_\tau)}-\eta\sqrt{\mu/8}\tn{\nabla\Lc(\bt_\tau)}.
\end{align}
Define the Lyapunov function 
\[
\Vc_\tau=\sqrt{\Lc(\bt_\tau)}+\max_{1\leq i\leq D}C_i\tn{\bt_{\iii,\tau}-\bt_{\iii,0}}.
\]
We will find proper $C_i$'s such that $\Vc_\tau$ is non-increasing. Observe that
\[
\Vc_{\tau+1}-\Vc_\tau\leq C_i\eta\sqrt{L_i/\mu}\tn{\nabla\Lc(\bt_\tau)}-\eta \sqrt{\mu/8}\tn{\nabla\Lc(\bt_\tau)}\leq 0,
\]
when $C_i=\mu/\sqrt{8L_i}$. Thus we pick
\[
\Vc_\tau=\sqrt{\Lc(\bt_\tau)}+\max_{1\leq i\leq D}\frac{\mu}{\sqrt{8L_i}}\tn{\bt_{\iii,\tau}-\bt_{\iii,0}}.
\]
Since Lyapunov function is non-increasing, for all $\tau\geq0$, we are guaranteed to have
\[
\tn{\bt_{\iii,\tau}-\bt_{\iii,0}}^2\leq \frac{8L_i}{\mu^2}\Lc(\bt_0).
\]
What remains is upper bounding the contribution of $\bt_i$ to the objective function which is addressed next.

{\bf{Step 2: Proving \eqref{bad2}}}: 
Using the bound on $\Lc(\bt_\tau)$ and $L_i$-smoothness of the partial derivative with respect to $\bt_{\iii}$ and Lemma \ref{bound grad loss}, we find
\begin{align}
\tn{\frac{\pa}{\pa \bt_{\iii}}\Lc(\bt_\tau)}\leq \sqrt{2L_i (1-\eta\mu)^\tau \Lc(\bt_0)}.\label{state t}
\end{align}
At iteration $\tau$, define $\bt(t)=t\bt_\tau+(1-t)\bar{\bt}_\tau$ for $0\leq t\leq 1$. Observe that, via line integration, we can bound 
\begin{align}
|\Lc(\bt_\tau)-\Lc(\bar{\bt}_\tau)|\leq \sup_{0\leq t\leq 1}\tn{\frac{\pa}{\pa \bt_{\iii}} \Lc(\bt(t))}\tn{\bt_{\iii,\tau}-\bt_{\iii,0}}.\label{line int}
\end{align}
For the right-hand side, we use the earlier upper bound
\[
\tn{\bt_{\iii,\tau}-\bt_{\iii,0}}\leq  R.
\] 
Next, using \eqref{state t} and $L_i$-smoothness again, we also bound the gradient norm via
\begin{align}
\tn{\frac{\pa}{\pa \bt_{\iii}} \Lc(\bt(t))}&\leq \tn{\frac{\pa}{\pa \bt_{\iii}} \Lc(\bt_\tau)}+L_i\tn{\bt_\tau-\bar{\bt}_\tau}\\
&\leq RL_i+\sqrt{2L_i (1-\eta\mu)^\tau \Lc(\bt_0)}.
\end{align}
Recalling \eqref{line int} and substituting $R$, we find
\begin{align}
|\Lc(\bt_\tau)-\Lc(\bar{\bt}_\tau)|&\leq R^2L_i+R\sqrt{2L_i (1-\eta\mu)^\tau \Lc(\bt_0)}\\
&\leq \Lc(\bt_0)(8L_i^2/\mu^2+4(L_i/\mu)(1-\eta\mu)^{\tau/2})\\
&\leq \Lc(\bt_0)(8\kappa^2+4\kappa(1-\eta\mu)^{\tau/2}).
\end{align}
This yields our bound \eqref{bad2}.
\end{proof}

{\bf{Step 3: Proving \eqref{good2}}}: Throughout the remaining proof, let $\tilde{\bt}_\tau=[\bt_{\iib,0}~\bt_{\iii,\tau}]$ be the $\bt_{\iib}$-ablated vector which sets the entries $\bt_{\iib,\tau}$ of the $\tau$'th iterate to their initial state $\bt_{\iib,0}$. Similarly, let $\bar{\bt}_\tau=[\bt_{\iib,\tau}~\bt_{\iii,0}]$ be the $\bt_{\iii}$ ablated vector. By construction
\[
{\Icn_{\iib}(\bt_\tau,\bt_0)}=\Lc(\tilde{\bt}_\tau)-\Lc(\bt_\tau),\quad{\Icn_{\iii}(\bt_\tau,\bt_0)}=\Lc(\bar{\bt}_\tau)-\Lc(\bt_\tau).
\]
Set the distance parameter $R=\sqrt{8L_i\Lc(\bt_0)}/\mu\geq \tn{\bt_{\iii,\tau}}$ as a short hand notation. 

Applying Lemma \ref{bound grad loss} on $\bt_{\iii}$, for any $\bt(t)=t\bt_0+(1-t)\tilde{\bt}_\tau$, we have that
\[
\tn{\frac{\pa}{\pa \bt_{\iii}}\Lc(\bt(t))}\leq \sqrt{2L_i\Lc(\bt_0)}+RL_i.
\]
Consequently, using line integration bound and $\tn{\bt_0-\tilde{\bt}_\tau}=\tn{\bt_{\iii,\tau}}\leq R$, we get
\begin{align}
\Lc(\bt_0)-\Lc(\tilde{\bt}_\tau)&\leq |\Lc(\bt_0)-\Lc(\tilde{\bt}_\tau)|\leq R(\sqrt{2L_i\Lc(\bt_0)}+RL_i)\\
&\leq \Lc(\bt_0)(4L_i/\mu+8{L_i^2/\mu^2})\\
&\leq \Lc(\bt_0)(8\kappa^2+4\kappa).
\end{align}
Combining this with \eqref{standard polyak2}, we obtain the second bound \eqref{good2} via
\[
\frac{\Lc(\tilde{\bt}_\tau)-\Lc(\bt_\tau)}{\Lc(\bt_0)}\geq 1-8\kappa^2-4\kappa-(1-\eta\mu)^\tau.
\]

\subsection{Theorem \ref{lem win} is Tight} \label{subsec tight}
To demonstrate the tightness of Theorem \ref{lem win}, we consider an over-parameterized linear regression setup similar to \eqref{lin reg}. Consider $D$ feature sets $(\X_i)_{i=1}^D\in\R^{n\times p_i}$ with $p_i\geq n$ where we fit 
\begin{align}
\Lc(\bt)=\min_{\bt=(\bt_i)_{i=1}^D}0.5\tn{\y-\sum_{i=1}^D\X_i\bt_i}^2.\label{lin reg2}
\end{align}
Let $\Delta_i$ be the set of entries corresponding to $\bt_i$. PPLS holds over $\Delta_i$ with parameters $\mu_i=\sigma_{\min}(\X_i)^2$ and $L_i=\|\X_i\|^2$. The overall problem is a regression with the design matrix $\X=[\X_1~\dots~\X_D]\in\R^{n\times p}$ where $p=\sum_{i=1}^Dp_i$ and $\X$ satisfies the PL and smoothness bounds with $\mu=\sum_{i=1}^D\mu_i$ and $L=\sum_{i=1}^DL_i$. To proceed, we have the following proposition that proves the tightness of Theorem \ref{lem win} up to condition numbers $L_i/\mu_i$ and $L/\mu$. Specifically, this proposition provides bounds sharply complementing Theorem \ref{lem win} by using the properties of the minimum $\ell_2$ norm solution to \eqref{lin reg2} which is the solution gradient descent converges to starting from zero initialization.
\begin{proposition}\label{tight} Let $\bt^\st=(\bt^\st_i)_{i=1}^D$ be the solution found by gradient descent on the loss \eqref{lin reg2} starting from an initialization $\bt_0$ (with learning rate $\eta\leq 1/L$). Set $\tilde{\kappa}=\mu_i/L$. Then, $\bt^\st_i$ satisfies the following bounds
\begin{align}
&\tn{\bt_{\iii,\tau}-\bt_{\iii,0}}^2\geq 2{\tilde{\kappa}}\Lc(\bt_0)/L,\label{dist boundL}\\
&{\Icn_{\iii}(\bt_\tau,\bt_0)}/{\Lc(\bt_0)}\geq \tilde{\kappa}^2,\label{bad2L}\\
&{\Icn_{\iib}(\bt_\tau,\bt_0)}/{\Lc(\bt_0)}\leq 1-\tilde{\kappa}^2-2{\tilde{\kappa}}\quad\text{when}\quad n=1\label{good2L}.
\end{align}
\end{proposition}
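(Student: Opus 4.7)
The plan is to exploit the explicit min-$\ell_2$-norm characterization of the gradient descent limit for overparameterized least squares, then reduce all three bounds to linear-algebraic inequalities involving $\sigma_{\min}(\X_i)$ and $\|\X\|$. Since the loss \eqref{lin reg2} is a convex quadratic with $\X = [\X_1~\cdots~\X_D]$ of full row rank, gradient descent started from $\bt_0$ with $\eta \le 1/L$ (where $L = \|\X\|^2$) is well known to converge to the unique interpolator closest to $\bt_0$ in the row space of $\X$. Setting $\rb := \y - \X\bt_0$ (so that $\Lc(\bt_0) = \tfrac{1}{2}\tn{\rb}^2$), this gives the closed form
\begin{equation*}
\bt^\st - \bt_0 \;=\; \X^T(\X\X^T)^{-1}\rb, \qquad \bt_i^\st - \bt_{i,0} \;=\; \X_i^T(\X\X^T)^{-1}\rb.
\end{equation*}
This identity is the single technical fact I would lean on throughout.

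For \eqref{dist boundL}, I would write $\tn{\bt_{\iii}^\st - \bt_{\iii,0}}^2 = \rb^T(\X\X^T)^{-1}\X_i\X_i^T(\X\X^T)^{-1}\rb$. Since $\X_i\X_i^T \succeq \sigma_{\min}(\X_i)^2\, \Iden_n = \mu_i \Iden_n$ (using $p_i \ge n$, so $\X_i$ has full row rank) and $(\X\X^T)^{-2} \succeq \|\X\X^T\|^{-2}\Iden_n = L^{-2}\Iden_n$, the quadratic form is lower bounded by $\mu_i L^{-2}\tn{\rb}^2 = \tilde\kappa\,\tn{\rb}^2/L = 2\tilde\kappa\,\Lc(\bt_0)/L$, delivering the claim.

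For \eqref{bad2L}, since $\bt^\st$ interpolates ($\Lc(\bt^\st)=0$) and ablating the $\iii$-block to $\bt_{\iii,0}$ changes the prediction by $\X_i(\bt_i^\st - \bt_{i,0})$, we get
\begin{equation*}
\Icn_{\iii}(\bt^\st,\bt_0) \;=\; \tfrac{1}{2}\tn{\X_i\X_i^T(\X\X^T)^{-1}\rb}^2.
\end{equation*}
Bounding the operator $\X_i\X_i^T$ below by $\mu_i \Iden_n$ and $(\X\X^T)^{-1}$ below by $L^{-1}\Iden_n$ yields $\Icn_{\iii}(\bt^\st,\bt_0) \ge \tfrac{1}{2}\mu_i^2 L^{-2}\tn{\rb}^2 = \tilde\kappa^2\,\Lc(\bt_0)$, as required.

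For \eqref{good2L} with $n=1$, the matrices collapse to scalars: $\X\X^T = L$, $\X_i\X_i^T = \tn{\X_i}^2 = L_i = \mu_i$, so $\tilde\kappa = L_i/L$ and $\X_i(\bt_i^\st - \bt_{i,0}) = \tilde\kappa\,\rb$. The $\iib$-ablated residual is $\rb - \X_i(\bt_i^\st - \bt_{i,0}) = (1-\tilde\kappa)\rb$, giving $\Icn_{\iib}(\bt^\st,\bt_0)/\Lc(\bt_0) = (1-\tilde\kappa)^2$. The main obstacle here is matching this clean equality with the printed upper bound $1-\tilde\kappa^2-2\tilde\kappa$: taking the bound at face value requires using $(1-\tilde\kappa)^2 \le 1 - 2\tilde\kappa + \tilde\kappa^2$ together with an additional slack argument (or, more likely, recognising a typo for $1 - 2\tilde\kappa + \tilde\kappa^2$), so I would present the exact equality $(1-\tilde\kappa)^2$ and then reconcile with the stated form. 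Together the three calculations show that the condition-number factors in Theorem \ref{lem win} cannot be removed beyond constants, establishing tightness.
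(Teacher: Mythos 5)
Your proof is correct and follows essentially the same route as the paper's: both use the closed-form limit of gradient descent, $\bt^\st-\bt_0=\X^T(\X\X^T)^{-1}(\y-\X\bt_0)$, and then bound the resulting quadratic forms via $\X_i\X_i^T\succeq\mu_i\Iden_n$ and the spectral bound $\|\X\X^T\|\leq L$. Your remark on \eqref{good2L} is also on point: the paper's own proof only establishes the bound $(1-\tilde{\kappa})^2=1-2\tilde{\kappa}+\tilde{\kappa}^2$, so the sign of the $\tilde{\kappa}^2$ term in the printed statement is indeed a typo, which your exact $n=1$ computation makes explicit.
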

In short, the bounds of this proposition perfectly complements the bounds of Theorem \ref{lem win} after accounting for the local condition number $L_i/\mu_i$ and global condition number $L/\mu$ associated with PL condition and smoothness. Specifically, we simply replace $\kappa=L_i/\mu$ with $\tilde{\kappa}=\mu_i/L$ and the converse bounds hold on $\tilde{\kappa}$ up to very small constants. We remark that \eqref{dist boundL} and \eqref{bad2L} holds generally whereas we show \eqref{good2L} for the special case of $n=1$. Note that $\frac{\kappa}{\tilde{\kappa}}=\frac{L_i}{\mu_i}\frac{L}{\mu}$ which is the multiplication of the local and global condition numbers. Thus, Theorem \ref{lem win} is tight up to these condition numbers and very small constants as claimed in the main body.

\begin{proof} Let $\bt^\dagger$ be the pseudo-inverse solution given by 
\[
\bt^\dagger=\X^\dagger\y=\X^T(\X\X^T)^{-1}\tilde{\y}
\]
where $\tilde{\y}=\y-\y_0$ and $\y_0=\X\bt_0$. Gradient descent solution on linear least-squares converges to minimum Euclidian distance solution given by $\bt^\st=\bt_0+\bt^\dagger$. Observe that $\Lc(\bt_0)=\tn{\tilde{\y}}^2/2$ and 
\[
\tn{\bt^\dagger_\ixi}^2=\tn{\X_i^T(\X\X^T)^{-1}\tilde{\y}}^2\geq \frac{\sigma_{\min}(\X_i)^2}{\|\X\|^4}\tn{\tilde{\y}^2}\geq \tilde{\kappa}(2\Lc(\bt_0))/L.
\]
This proves the first statement of \eqref{dist boundL}. To show the second statement, note that at $\bt^\st$, the loss is equal to zero thus, the $\ix_i$ pruned vector $\bt^p=\bt^\st_{\bar{\ix}_i}+\bt_{0,{\ix}_i}$ achieves a loss of 
\[
\Lc(\bt^p)=0.5\tn{\X_i\bt^\dagger_{\ix_i}}^2\geq 0.5\tn{\X_i\X_i^T(\X\X^T)^{-1}\tilde{\y}}^2\geq \tilde{\kappa}^2\Lc(\bt_0),
\]
yielding \eqref{bad2L}. Finally, we look at the $\bar{\ix}_i$ pruned vector $\bt^p=\bt^\st_{\ix_i}+\bt_{0,\bar{\ix}_i}$. In this case, we wish to show that loss function $\Lc(\bt^p)$ is upper bounded. We have that
\begin{align}
2\Lc(\bt^p)&=\tn{\tilde{\y}-\X_i\bt^\dagger_{\ix_i}}^2\\
&=\tn{\tilde{\y}-\X_i\X_i^T(\X\X^T)^{-1}\tilde{\y}}^2\\
&=\tn{(\Iden_n-\X_i\X_i^T(\X\X^T)^{-1})\tilde{\y}}^2.
\end{align}
To proceed, note that, when $n=1$, $\Iden_n\succeq \X_i\X_i^T(\X\X^T)^{-1}\succeq (\mu_i/L)\Iden_n=\tilde{\kappa}\Iden_n$. Consequently, 
\[
2\Lc(\bt^p)\leq (1-\tilde{\kappa})^2\tn{\tilde{\y}}^2=2(1-\tilde{\kappa})^2\Lc(\bt_0),
\]
concluding the proof of \eqref{good2L}.
\end{proof}

\section{Proofs of Lemmas \ref{cov importance} and \ref{simple hessian gradient}}\label{app lemma proof}
\subsection{Proof of Lemma \ref{cov importance}}
\begin{proof} The least-squares loss evaluated at a point $\bt$ with design covariance $\bSi$ is given by
	\[
	\E[(y-\x^T\bt)^2]=\E[y^2]-2\bb^T\bt+\bt^T\bSi\bt.
	\]
	We first show that HI and NI is invariant to the scaling $\La$ regardless of the covariance $\bSi$. Observe that the covariance of $\x^\La$ is $\bSi^{\La}=\La\bSi\La$ and $\btb^\La=\La^{-1}\btb$. Consequently, we find that
	\[
	\Ic_\ix^H(\btb^{\La})=\sum_{i\in \ix}\bSi^\La_{i,i}(\btb^\La_i)^2=\sum_{i\in \ix}\La_{i,i}^2\bSi_{i,i}(\La^{-1}\btb)_i^2=\Ic_\ix^H(\btb).
	\]
	For NI, observing $\bb^\La=\La\bb$ and accounting for the $\La$ cancellations, we similarly have
	\begin{align}
	\Lc_\La(\btb^\La_{\bar{\ix}})-\Lc_\La(\btb^\La)&=[-2{\bb^\La}^T\btb^\La_{\bar{\ix}}+(\btb^\La_{\bar{\ix}})^T\bSi^{\La}\btb^\La_{\bar{\ix}}]-[-2{\bb^\La}^T\btb^\La+(\btb^\La)^T\bSi^{\La}\btb^\La]\\
	&=[-2{\bb}^T\btb_{\bar{\ix}}+(\btb_{\bar{\ix}})^T\bSi\btb_{\bar{\ix}}]-[-2\bb^T\btb+(\btb)^T\bSi\btb]\\
	&=2{\bb}^T\btb_{\ix}+\btb_{\bar{\ix}}^T\bSi\btb_{\bar{\ix}}-\btb^T\bSi\btb,
	\end{align}
	which is independent of $\La$. To proceed, we focus on diagonal covariance matrix $\bSi$. For HI/NI, we only need to show the result for $\La=\Iden_p$ and establish $\Icn_\ix(\btb) = \Ic_\ix^H(\btb)$. We can then apply the $\La$ invariance result  above. The least-squares loss for diagonal covariance evaluated at a point $\bt$ can be written as
	\[
	\E[(y-\x^T\bt)^2]=\E[y^2]-2\sum_{i=1}^p\bb_i\bt_i+\bSi_{i,i}\bt_i^2.
	\]
	Note that $\btb_i=\bb_i/\bSi_{i,i}$. Thus, recalling the definition of $\btb_{\bar{\ix}}$, we establish the desired HI equal to NI bound as follows
	\[
	\Icn_\ix(\btb)=\Lc(\btb_{\bar{\ix}})-\Lc(\btb)=2\sum_{i\in \ix}\bb_i\btb_i-\bSi_{i,i}{\btb_i}^2=\sum_{i\in \ix}\frac{\bb_i^2}{\bSi_{i,i}}=\sum_{i\in \ix}{\bSi_{i,i}}{{\btb_i}^2}=\Ic^H_\ix(\btb).
	\]
	
	Finally, magnitude-based importance with diagonal covariance is simply given by $\Ic_\ix^M(\btb^{\La})=\sum_{i\in\ix}(\btb^\La_i)^2=\sum_{i\in\ix}(\La^{-1}\btb)_i^2=\sum_{i\in \ix}\La_{i,i}^{-2}{\btb_i}^2$.
\end{proof}

\subsection{Proof of Lemma \ref{simple hessian gradient}}

\begin{proof} The first statement on MI immediately follows from the definition of MI and the construction of $\bt^\la$. For the remaining statements, we analyze the gradient and Hessian as a function of $\la$. Since Hessian and gradient are linear, we can focus on a single example $(\x,y)$. To prevent notational confusion, let us denote the point of evaluation by $(\W_0,\V_0)$ and the input/output layer variables by $(\W,\V)$. Thus, suppose $\bt^1=(\W_0,\V_0)$ and $\bt^\la=(\la \W_0,\la^{-1}\V_0)$. Use shorthand $f=f_{\bt^\la}(\x)$ which is invariant to $\la$. Let $L_1=\nabla_f\ell(y,f)\in\R^{K}$ and $L_2=\nabla^2_f\ell(y,f)\in\R^{K\times K}$. Let input layer have $p_I=m\times d$ parameters and output layer has $p_O=K\times m$ parameters. Also denote the partial first and second order derivatives of input layer w.r.t.~prediction $f$ via $F_{1,\la}^{\W}=\nabla_{\W}f_{\bt^\la}(\x)\in\R^{p_I\times K}$ and $F_{2,\la}^{\W}=\nabla^2_{\W}f_{\bt^\la}(\x)\in\R^{p_I\times p_I\times K}$. Similarly denote the partial derivatives of the output layer by $F_{1,\la}^{\V}=\nabla_{\V}f_{\bt^\la}(\x)\in\R^{p_O\times K}$ and $F_{2,\la}^{\V}=\nabla^2_{\V}f_{\bt^\la}(\x)\in\R^{p_O\times p_O\times K}$. First, focusing on gradient (of the vectorized input/output layers), we have the size $p_I,p_O$ partial gradients
\begin{align}
&\nabla_{\W}\ell(y,f_{\bt^\la}(\x))=F_{1,\la}^{\W}L_1\\
&\nabla_{\V}\ell(y,f_{\bt^\la}(\x))=F_{1,\la}^{\V}L_1.
\end{align}
Let $\mu(\cdot)$ be the step function which will correspond to the activation pattern. To proceed, observe that $\act{\la\W_0\x}=\la\act{\W_0\x}$ and $\mu(\la\W_0\x)=\mu(\W_0\x)$.
\begin{align}
F_{1,\la}^{\W}=\nabla_{\W=\la\W_0}(\la^{-1}\V_0\act{\W\x})&=\nabla_{\W=\W_0}(\la^{-1}\V_0\act{\W\x})\\
&=\la^{-1}\nabla_{\W=\W_0}(\V_0\act{\W\x})\\
&=\la^{-1}F_{1,1}^{\W}\\
F_{1,\la}^{\V}=\nabla_{\V=\la^{-1}\V_0}(\V\act{\la\W_0\x})&=\nabla_{\V=\V_0}(\V\act{\la\W_0\x})\\
&=\la\nabla_{\V=\V_0}(\V\act{\W_0\x})\\
&=\la F_{1,1}^{\V}.
\end{align}
which are the advertised results on gradient.

We next proceed with the Hessian analysis and show similar behavior to gradient. Let us use $\bOt$ to denote the tensor-vector multiplication which multiplies an $a\times b\times c$ tensor with a size $c$ vector along the third mode to find an $a\times b$ matrix. Note that
\begin{align}
\nabla^2_{\W}\ell(y,f_{\bt^\la}(\x))&=F_{2,\la}^{\W}\bOt L_1+F_{1,\la}^{\W}L_2{F_{1,\la}^{\W}}^T\\
&=F_{2,\la}^{\W}\bOt L_1+\la^{-2}F_{1,1}^{\W}L_2{F_{1,1}^{\W}}^T\\
\nabla^2_{\V}\ell(y,f_{\bt^\la}(\x))&=F_{2,\la}^{\V}\bOt L_1+F_{1,\la}^{\V} L_2{F_{1,\la}^{\V}}^T\\
&=F_{2,\la}^{\V}\bOt L_1+\la^2F_{1,1}^{\V} L_2{F_{1,1}^{\V}}^T.
\end{align}
Thus, to conclude with the proof of \eqref{gradient change}, we will show that $F_{2,\la}^{\V}=0$ and $F_{2,\la}^{\W}=\la^{-2}F_{2,1}^{\W}$. For the input layer, we use the fact that second derivative of ReLU is the Dirac $\delta$ function which satisfies $\delta(x/C)=C\delta(x)$ for $C>0$. Thus, we find
\begin{align}
F_{2,\la}^{\W}&=\nabla^2_{\W=\la\W_0}(\la^{-1}\V_0\act{\W\x})\\
&=\la^{-1}\nabla^2_{\W=\la\W_0}(\V_0\act{\W\x})\\
&=\la^{-1}\nabla^2_{\W=\W_0}(\la^{-1}\V_0\act{\W\x})\\
&=\la^{-2}\nabla^2_{\W=\W_0}(\V_0\act{\W\x})=\la^{-2}F_{2,1}^{\W}.
\end{align}
Similarly, $f_{\bt^\la}$ is a linear function of the output layer thus
\[
F_{2,\la}^{\V}=\nabla^2_{\V=\la^{-1}\V_0}(\V\act{\la\W_0\x})=0.
\]
This proves that Hessian exhibits the advertised behavior \eqref{gradient change}. Finally, \eqref{constant HI} follows from the fact that the diagonal entries of the Hessian of the input layer decays with $\la^2$ whereas its entries grow with $\la$ so that HI remains unchanged (and similar story for the output layer). 
\end{proof}

\begin{figure}[t!]
	\begin{subfigure}{2.1in}
		\centering
		\begin{tikzpicture}[scale=1,every node/.style={scale=1}]
		\node at (0,0) {\includegraphics[scale=0.36]{figs/retrain_naive_acc.pdf}};
		\node at (-2.8,0) [rotate=90,scale=1.]{Test accuracy};
		\node at (0,-1.8) [scale=1.]{Fraction of non-zero};
		\node at (-1.1,-0.7) [scale=.6] {$\V=0$};
		\node at (1.1,-0.7) [scale=.6] {$\W=0$};
		\draw [->,>=stealth] (-1.2,-0.8) -- (-1.45,-0.95);
		\draw [->,>=stealth] (1.1,-0.8) -- (0.85,-0.95);
		\end{tikzpicture}\caption{\small{Test accuracy using MP and standard pruning. }}\label{fig:compare_purining_C10_1_app}
	\end{subfigure}~\begin{subfigure}{2.1in}
		\centering
		\begin{tikzpicture}[scale=1,every node/.style={scale=1}]
		\node at (0,0) {\includegraphics[scale=0.36]{figs/retrain_jacobian_acc.pdf}};
		\node at (-2.8,0) [rotate=90,scale=1.]{Test accuracy};
		\node at (0,-1.8) [scale=1.]{Fraction of non-zero};
		
		\end{tikzpicture}\caption{\small{Test accuracy using HP and standard pruning.}}\label{fig:compare_purining_C10_2_app} 
	\end{subfigure}~\begin{subfigure}{2.1in}
		\centering
		\begin{tikzpicture}[scale=1,every node/.style={scale=1}]
		\node at (0,0) {\includegraphics[scale=0.36]{figs/width_init.pdf}};
		\node at (-2.8,0) [rotate=90,scale=1.]{Remaining output layer };
		\node at (2.8,0) [rotate=90,scale=1.]{Remaining input layer };
		\node at (0,-1.8) [scale=1.]{$\lambda$};
		\end{tikzpicture}\caption{\small{Remaining nonzeros for $\V,\W$ with standard pruning}}\label{fig:compare_purining_C10_3_app} 
	\end{subfigure}
\begin{subfigure}{2.1in}
	\centering
	\begin{tikzpicture}[scale=1,every node/.style={scale=1}]
	\node at (0,0) {\includegraphics[scale=0.36]{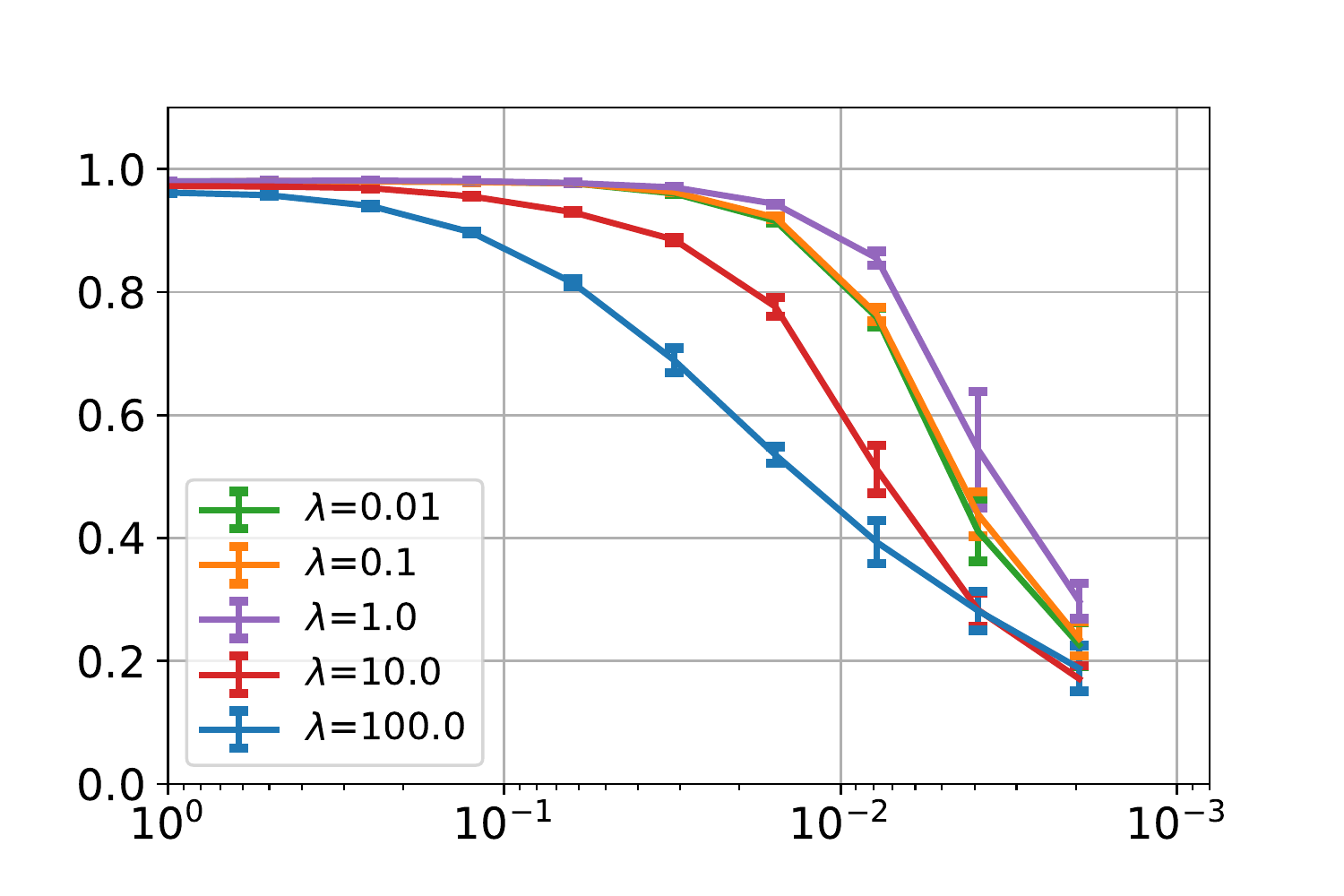}};
	\node at (-2.8,0) [rotate=90,scale=1.]{Test accuracy};
	\node at (0,-1.8) [scale=1.]{Fraction of non-zero};
	\end{tikzpicture}\caption{\small{Test accuracy using MP and layer-wise pruning. }}\label{fig:compare_purining_C10_4_app}
\end{subfigure}~\begin{subfigure}{2.1in}
	\centering
	\begin{tikzpicture}[scale=1,every node/.style={scale=1}]
	\node at (0,0) {\includegraphics[scale=0.36]{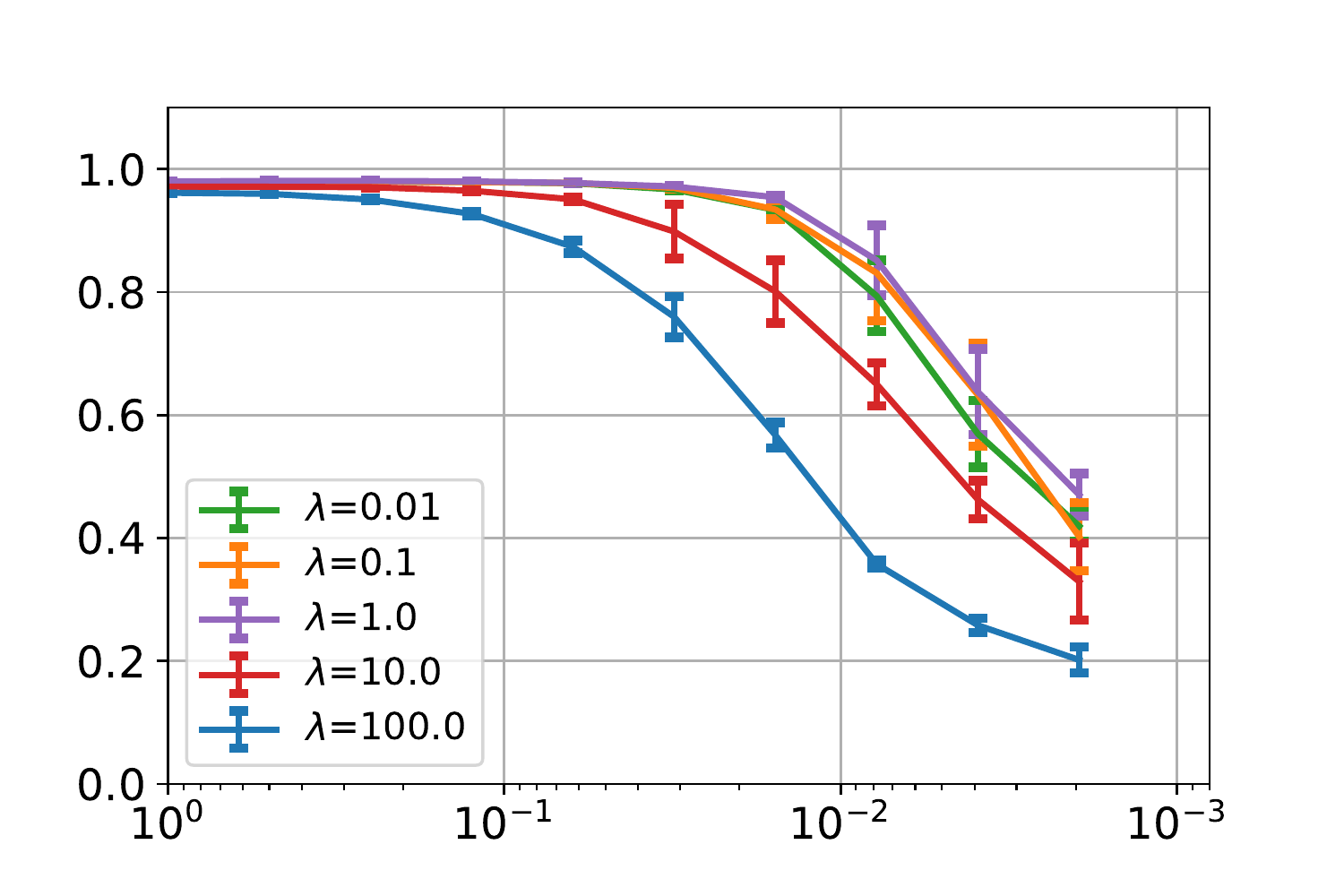}};
	\node at (-2.8,0) [rotate=90,scale=1.]{Test accuracy};
	\node at (0,-1.8) [scale=1.]{Fraction of non-zero};
	
	\end{tikzpicture}\caption{\small{Test accuracy using HP and layer-wise pruning.}}\label{fig:compare_purining_C10_5_app} 
\end{subfigure}~\begin{subfigure}{2.1in}
	\centering
	\begin{tikzpicture}[scale=1,every node/.style={scale=1}]
	\node at (0,0) {\includegraphics[scale=0.36]{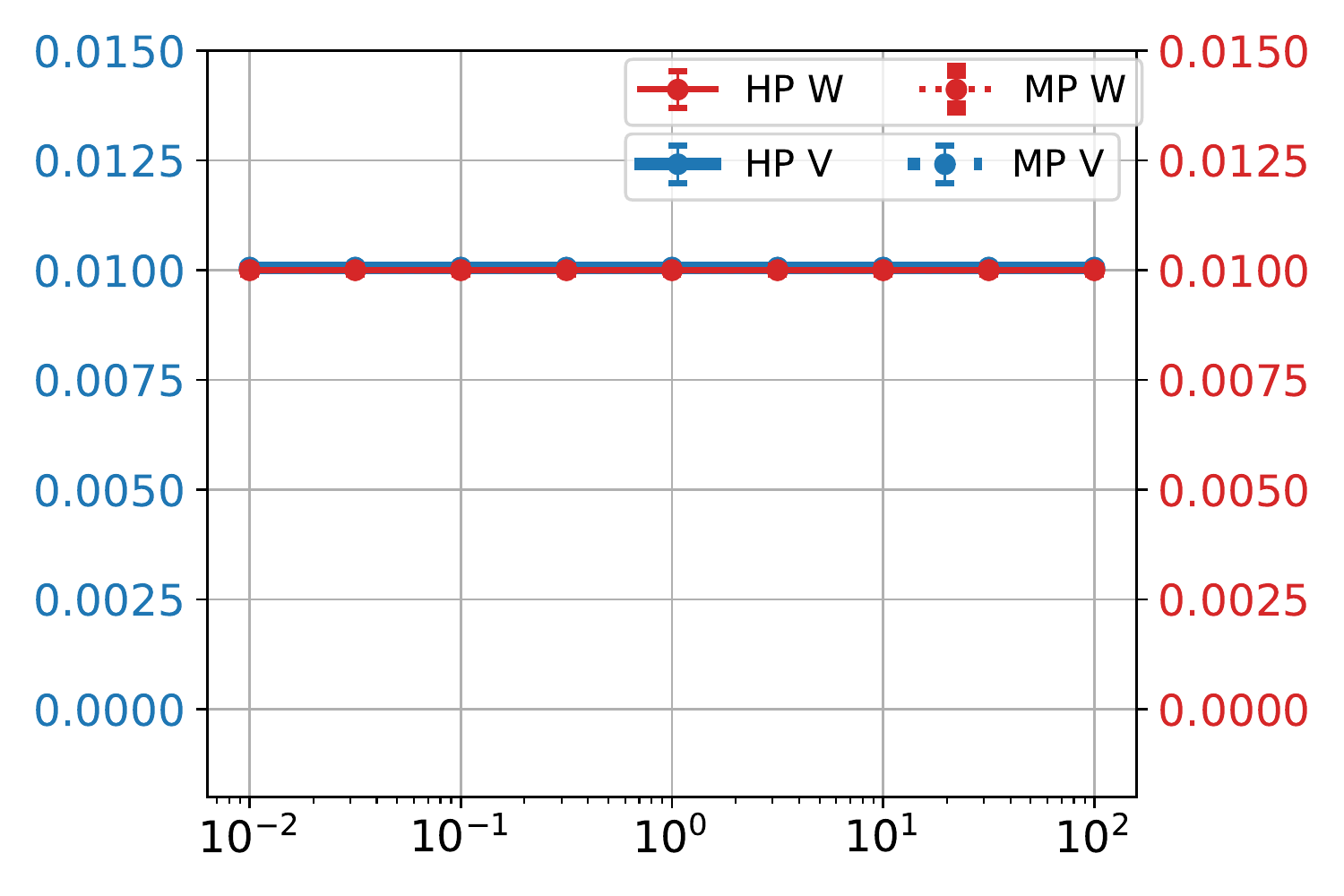}};
	\node at (-2.8,0) [rotate=90,scale=1.]{Remaining output layer };
	\node at (2.8,0) [rotate=90,scale=1.]{Remaining input layer };
	\node at (0,-1.8) [scale=1.]{$\lambda$};
	\end{tikzpicture}\caption{\small{Remaining nonzeros for $\V,\W$ with layer-wise pruning}}\label{fig:compare_purining_C10_6_app} 
\end{subfigure}\caption{\small{This figure compares layer-wise pruning with standard pruning. Figures (a), (b) and (c) are the same figures in Fig.~\ref{fig:compare_pruning_C10}. We use the same setup on (d), (e) and (f) except we use layer-wise pruning instead of standard pruning. Compared to standard MP, layer-wise MP dose not suffer from a full layer dying but the performance is worse when $\la=1$. Moreover, standard HP outperforms layer-wise HP except in the extremely sparse regime (nonzero $\leq 0.2\%$). In this regime, both approaches result in lackluster accuracy.}}\label{fig:compare_pruning_C10_app}\vspace{-12pt}

\end{figure}

\section{Further Experiments and Comparison to Layer-wise Pruning}\label{app layerwise}
In Section~\ref{neural sec} we used standard network pruning which prunes the whole set of weights to a certain sparsity level regardless of which layer they belong. We observed that MP can completely prune a layer when we apply very large or small $\la$-scaling in Fig~\ref{fig:compare_purining_C10_1}. We also showed HP significantly mitigates this problem as it is inherently invariant to $\la$. Layer-wise pruning prunes the exact same fraction of the parameters in each layer individually and it is an alternative way to avoid the {\em{layer death}} problem. Thus, in this section, we compare standard pruning with layer-wise pruning and display the results in Fig.~\ref{fig:compare_pruning_C10_app}.  Fig.~\ref{fig:compare_purining_C10_1_app} and \ref{fig:compare_purining_C10_4_app} show that layer-wise MP mitigates the layer death problem under $\la$-scaling because it keeps the same fraction of nonzero parameters in each layer. However when $\la=1$ the performance of layer-wise MP is worse than standard MP. Note that there is nothing special about $\la=1$ except the fact that input dimension (784) and number of hidden nodes (1024) are close to each other and He initialization results in input and output weights of similar magnitudes.

Fig. \ref{fig:compare_purining_C10_2_app} and \ref{fig:compare_purining_C10_5_app} compare standard HP with layer-wise HP showing that standard HP outperforms layer-wise HP except when the network is extremely spares (fraction of nonzero $\leq 0.2\%$). Our explanation for this behavior is as follows: The weights of certain layers (specifically output layer) are more important, in average, than others (specifically input layer). The standard HP fully takes this into account by jointly pruning the complete set of weights based on importance. In Figure \ref{fig:compare_purining_C10_3_app} it can be seen that, for 1\% sparsity target, standard HP keeps around 50\% of the output layer whereas layer-wise HP keeps exactly the target value 1\% (Fig \ref{fig:compare_purining_C10_6_app}). However the fact that standard HP favors the output layer weights results in input layer getting overly pruned in the extremely sparse regime and in this regime layer-wise pruning has a slight edge. However both methods lead to lackluster accuracy ($\sim 40\%$ accuracy on MNIST) in this regime, thus for practical purposes, it is plausible to say standard HP is better than or equal to layer-wise in all sparsities.


\section{Relaxing Conditions on Convex Gaussian Min-Max Theorem}\label{app further}
The following lemma replaces the compactness constrained with the closedness in CGMT. It also applies to problems with random equality constraints (which is of interest for over-parameterized least-squares) besides regularized form.
\begin{theorem} [Flexible CGMT]\label{thm closed} Let $\psi$ be a function obeying $\lim_{\tn{\w}\rightarrow\infty}\psi(\w)=\infty$. Given a closed set $\Sc$, define
\begin{align}
\Phi_\la(\X)&=\min_{\w\in\Sc}\la\tn{\X\w}+\psi(\w)\\
\phi_\la(\g,\h)&=\min_{\w\in\Sc}\la(\tn{\w}\tn{\g}-\h^T\w)_++\psi(\w),
\end{align}
and
\begin{align}
&\Phi_\infty(\X)=\min_{\w\in\Sc,\X\w=0}\psi(\w)\\
&\phi_\infty(\g,\h)=\min_{\w\in\Sc,\tn{\w}\tn{\g}\leq \h^T\w}\psi(\w).
\end{align}
For all $\la\in[0,\infty)\cup\{\infty\}$, we have that
\begin{itemize}
\item $\Pro(\Phi_\la(\X)<t)\leq2\Pro(\phi_\la(\X)\leq t)$.
\item If $\Sc$ is additionally convex, we additionally have that $\Pro(\Phi_\la(\X)>t)\leq2\Pro(\phi_\la(\X)\geq t)$. Combining with the first statement, this implies that for any $\mu,t>0$
\[
\Pro(|\Phi_\la(\X)-\mu|>t)\leq2\Pro(|\phi_\la(\X)-\mu|\geq t)
\]
\end{itemize}
\end{theorem}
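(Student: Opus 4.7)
The plan is to reduce the claim to the standard compact-set CGMT inequality \eqref{CGMT main} by truncating $\Sc$ to its intersection with a large enough Euclidean ball, and then handle the $\la=\infty$ case via a monotone limit in $\la$. The coercivity condition $\lim_{\tn{\w}\to\infty}\psi(\w)=\infty$ is precisely what lets us drop compactness: sublevel sets of the combined objective must be bounded.

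First I would dispatch the case $\la<\infty$. Fix $t\in\R$ and use coercivity to pick $R_t$ so that $\psi(\w)>t$ whenever $\tn{\w}>R_t$. If $\Phi_\la(\X)<t$, then any near-minimizer $\w^\st$ satisfies $\psi(\w^\st)\leq \la\tn{\X\w^\st}+\psi(\w^\st)=\Phi_\la(\X)<t$, so $\tn{\w^\st}\leq R_t$; the same reasoning applies to $\phi_\la$ because $\la(\tn{\w}\tn{\g}-\h^T\w)_+\geq 0$. Consequently, writing $\Sc_{R_t}:=\Sc\cap\{\w:\tn{\w}\leq R_t\}$, the events $\{\Phi_\la<t\}$ and $\{\phi_\la\leq t\}$ are unchanged if one replaces $\Sc$ by $\Sc_{R_t}$. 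Since $\Sc_{R_t}$ is compact (closed and bounded in $\R^p$), and convex whenever $\Sc$ is (the ball is convex), the classical CGMT inequality on $\Sc_{R_t}$ delivers both bounds for finite $\la$; the strict-vs-nonstrict forms are bridged by taking $\epsilon\downarrow 0$ inside $\Pro(\Phi_\la\leq t-\epsilon)$.

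Next I would extend to $\la=\infty$. Fix $\X,\g,\h$. For each $\w$, both $\la\tn{\X\w}+\psi(\w)$ and $\la(\tn{\w}\tn{\g}-\h^T\w)_++\psi(\w)$ are non-decreasing in $\la$, so $\Phi_\la$ and $\phi_\la$ are non-decreasing in $\la$. Coercivity keeps minimizing sequences bounded as $\la\to\infty$ and extracting a subsequential limit that is feasible for the hard constraint yields $\Phi_\la\uparrow\Phi_\infty$ and $\phi_\la\uparrow\phi_\infty$ pointwise. For the first inequality, $\Phi_\la\leq\Phi_\infty$ gives $\Pro(\Phi_\infty<t)\leq\Pro(\Phi_\la<t)\leq 2\Pro(\phi_\la\leq t)$ for every $\la$; the events $\{\phi_\la\leq t\}$ decrease in $\la$ with intersection $\{\phi_\infty\leq t\}$, so continuity of measure from above sends the right side to $2\Pro(\phi_\infty\leq t)$. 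For the second inequality (convex $\Sc$), the events $\{\Phi_\la>t\}$ are increasing with union $\{\Phi_\infty>t\}$, so $\Pro(\Phi_\infty>t)=\lim_\la\Pro(\Phi_\la>t)\leq 2\lim_\la\Pro(\phi_\la\geq t)\leq 2\Pro(\phi_\infty\geq t)$, where the final step uses the inclusion $\{\phi_\la\geq t\}\subseteq\{\phi_\infty\geq t\}$ for every $\la$.

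The main obstacle will be verifying the pointwise limits $\Phi_\la\to\Phi_\infty$ and $\phi_\la\to\phi_\infty$ cleanly: one must show that a sequence of near-minimizers for $\Phi_{\la_k}$ with $\la_k\to\infty$ has a limit point $\w^\st$ satisfying $\X\w^\st=0$ and $\psi(\w^\st)=\lim\Phi_{\la_k}$, and likewise for $\phi_\la$. Coercivity supplies the boundedness and then continuity (or at least lower semi-continuity) of $\psi$, implicit in the original CGMT setup, gives feasibility and cost attainment; the penalty term $\la_k\tn{\X\w_k}$ being bounded (by $\Phi_{\la_k}-\psi(\w_k)$ staying bounded) forces $\X\w^\st=0$. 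A secondary book-keeping issue is that intersecting $\Sc$ with a Euclidean ball preserves convexity, which is immediate, so the convex branch of the classical inequality transfers without loss.
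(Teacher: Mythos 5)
Your proposal is correct, but it reaches the result by a genuinely different (and somewhat leaner) route than the paper. The paper first proves the $\la=\infty$ statements for \emph{compact} $\Sc$ — via a minimax/duality argument in the convex case (Lemma~\ref{lem convex}) and an analytic limit lemma (Lemma~\ref{lem continuous limit}) in the general case, each combined with the Dominated Convergence Theorem in $\la$ — and only then relaxes compactness to closedness by truncating to $\Sc_r=\Sc\cap\{\tn{\w}\le r\}$ and passing $r\to\infty$ with a second DCT argument, using coercivity to show the indicator events stabilize for large $r$. You instead (i) dispose of all finite $\la$ on the closed set directly with a single $t$-dependent truncation radius $R_t$, observing that coercivity makes the relevant events coincide (in fact only one-sided inclusions are needed in each bullet: $\{\Phi^{\Sc}_\la<t\}\subseteq\{\Phi^{\Sc_{R_t}}_\la<t\}$ and $\{\phi^{\Sc_{R_t}}_\la\le t\}\subseteq\{\phi^{\Sc}_\la\le t\}$, and the analogous pair for the convex bullet), so no limit in $r$ and no DCT are required; and (ii) handle $\la=\infty$ by monotonicity of $\Phi_\la,\phi_\la$ in $\la$, the pointwise limits $\Phi_\la\uparrow\Phi_\infty$, $\phi_\la\uparrow\phi_\infty$ (your coercive-minimizing-sequence argument plays the role of Lemma~\ref{lem continuous limit}, extended to closed sets), and plain continuity of probability measures along the nested events $\{\phi_\la\le t\}\downarrow\{\phi_\infty\le t\}$ and $\{\Phi_\la>t\}\uparrow\{\Phi_\infty>t\}$ — thereby bypassing both the DCT steps and the convex-duality argument entirely. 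What each approach buys: yours is shorter and more elementary, exploiting that only one-sided inclusions and monotone limits are ever needed; the paper's structure isolates reusable compact-set statements (Theorem~\ref{lem cgmt constrained} and its lemmas), and its duality argument gives $\Phi_\infty=\lim_\la\Phi_\la$ for convex compact sets without invoking coercivity at that stage. Two small points you should make explicit when writing this up: choose $R_t$ with a margin (say $\psi(\w)>t+1$ for $\tn{\w}>R_t$) so that near-minimizers witnessing the non-strict event $\{\phi_\la\le t\}$ are captured inside the ball, and note that the pointwise limits use lower semicontinuity of $\psi$ and closedness of $\Sc$, which are indeed part of the CGMT setting you inherit.
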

\begin{proof} As an application of Theorem 3 of \cite{thrampoulidis2015regularized} and Lemma \ref{lem convex} and Lemma \ref{lem general constraint}, these two statements hold for a compact $\Sc$ and compact convex $\Sc$ respectively. We remark that Theorem 3 of \cite{thrampoulidis2015regularized} does not explicitly state $\Pro(\Phi_\la(\X)>t)\leq2\Pro(\phi_\la(\X)\geq t)$. However it is explicitly stated in the proof of this theorem (see Proof of Eq (13) in pg 22). Our goal is extending the proof to closed sets rather than compact. To achieve this, we consider a sequence of problems with the sets 
\[
\Sc_r=\{\x\bgl \tn{\x}\leq r\}\cap \Sc.
\]
$\Sc_r$ is compact thus the advertised inequalities hold for $\Sc_r$. The remaining argument is showing pointwise convergence and applying the Dominated Convergence Theorem as in the proofs of Lemma \ref{lem convex} and Lemma \ref{lem general constraint}. We will argue the result for $\la=\infty$. Finite $\la$ follows essentially the identical argument. Define $\Phi_{\la}^r(\X)=\min_{\w\in\Sc_r,\X\w=0}\psi(\w)$ and $\phi_\la^r(\g,\h)=\min_{\w\in\Sc_r,\tn{\w}\tn{\g}\leq \h^T\w}\psi(\w)$. Fix a matrix $\X$ and define the indicator $E_{\la}^r=1_{\Phi_{\la}^r(\X)< t}$. We claim that $\lim_{r\rightarrow\infty}E_{\la}^r=E_\la$. To see this consider the two cases: Case 1: If original problem is infeasible and $\Phi_{\la}(\X)=\infty$ then $\Phi_{\la}^r(\X)=\infty$ as well thus $\lim_{r\rightarrow\infty}E_{\la}^r=E_\la=0$. Case 2: $\Phi_{\la}(\X)$ is finite. By the divergence assumption on $\psi$, the set of optimal solutions $\w^*$ of the original problem achieving $\Phi_{\la}(\X)$ lie on a bounded $\ell_2$ set. Thus for sufficiently large $r$, $E_{\la}^r=E_\la$ (note that $\Phi_{\la}^r$ is a non-increasing function of $r$). To proceed, applying Dominated Convergence Theorem, this yields
\[
\lim_{r\rightarrow\infty}\E[E_{\la}^r]=\E[E_\la]\iff \Pro(\Phi_\la(\X)<t)=\lim_{r\rightarrow\infty}\Pro(\Phi_\la^r(\X)<t).
\]
Applying the same argument to $\phi_\la^r$ we obtain the desired bound
\begin{align}
\Pro(\Phi_\la(\X)<t)&=\lim_{r\rightarrow\infty}\Pro(\Phi_\la^r(\X)<t)\\
&\leq 2\lim_{r\rightarrow\infty}\Pro(\phi_\la^r(\g,\h)\leq t)\\
&\leq 2\Pro(\phi_\la(\g,\h)\leq t).
\end{align}
Repeating the identical/very similar arguments for the convex case and finite $\la$ (omitted for avoiding repetitions), we conclude the proof. Finally, the combination of upper and lower bounds yield the two sided bound by observing
\[
\Pro(|\Phi_\la(\X)-\mu|>t)=\Pro(\Phi_\la(\X)>\mu+t)+\Pro(\Phi_\la(\X)<\mu-t).
\]
\end{proof}
\subsection{Proof of Constrained CGMT}
\subsubsection{Proof for the convex case}
\begin{lemma} \label{lem convex}Given a convex and compact $\Sc$, define the PO and AO problems
\begin{align}
&\Phi_\infty(\X)=\min_{\w\in\Sc,\X\w=0}\psi(\w)\\
&\phi_\infty(\g,\h)=\min_{\w\in\Sc,\tn{\w}\tn{\g}\leq \h^T\w}\psi(\w).
\end{align}
Suppose $\X,\g,\h\distas\Nn(0,1)$. Then, we have that
\begin{align}
\Pro(\Phi_\infty(\X)> t)\leq2\Pro(\phi_\infty(\g,\h)\geq t).
\end{align}
\end{lemma}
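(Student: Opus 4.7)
The plan is to derive the constrained CGMT from the regularized CGMT (Theorem 3 of \cite{thrampoulidis2015regularized}) by sending the penalty parameter $\la \to \infty$. That theorem, applied to the convex compact $\Sc$, yields $\Pro(\Phi_\la(\X) > t) \leq 2\Pro(\phi_\la(\g,\h) \geq t)$ for every finite $\la$, where $\Phi_\la,\phi_\la$ are the regularized PO/AO functionals introduced in Theorem \ref{thm closed}. All that remains is to identify the limits of these functionals with $\Phi_\infty,\phi_\infty$ and to interchange limit with probability.

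The first step is to establish the pointwise monotone convergence $\Phi_\la(\X) \uparrow \Phi_\infty(\X)$ and $\phi_\la(\g,\h) \uparrow \phi_\infty(\g,\h)$ as $\la \to \infty$. Monotonicity is immediate because $\la\tn{\X\w}$ and $\la(\tn{\w}\tn{\g}-\h^T\w)_+$ are individually non-decreasing in $\la$ for every fixed $\w \in \Sc$. The bound $\Phi_\la \leq \Phi_\infty$ follows by feasibility: any $\w$ with $\X\w = 0$ kills the penalty and gives cost $\psi(\w)$. For the reverse direction in the feasible case ($\Phi_\infty < \infty$), pick minimizers $\w_\la$ of $\Phi_\la$; compactness of $\Sc$ gives a convergent subsequence $\w_{\la_k} \to \w^\star \in \Sc$. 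Since $\psi$ is bounded on $\Sc$ and $\Phi_{\la_k} \leq \Phi_\infty$, the penalty term $\la_k\tn{\X\w_{\la_k}}$ stays bounded, forcing $\X\w^\star = 0$; continuity of $\psi$ then yields $\psi(\w^\star) \leq \liminf_k \Phi_{\la_k}$, so $\Phi_\infty \leq \psi(\w^\star) \leq \liminf_k \Phi_{\la_k}$, matching the upper bound. In the infeasible case ($\Phi_\infty = \infty$), continuity and compactness force $\min_{\w\in\Sc}\tn{\X\w} > 0$, so $\Phi_\la \to \infty$ as required. The same reasoning, applied to the AO constraint $\tn{\w}\tn{\g} \leq \h^T\w$, yields $\phi_\la \uparrow \phi_\infty$.

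The second step passes probabilities through the limit using the monotonicity of events. Since $\Phi_\la \uparrow \Phi_\infty$ pointwise, the events $\{\Phi_\la(\X) > t\}$ form a non-decreasing family whose union equals $\{\Phi_\infty(\X) > t\}$ (any $\omega$ with $\Phi_\infty(\X)(\omega) > t$ has $\Phi_\la(\X)(\omega) > t$ for sufficiently large $\la$), so continuity of probability from below gives $\Pro(\Phi_\infty(\X) > t) = \lim_{\la \to \infty}\Pro(\Phi_\la(\X) > t)$. On the AO side, $\phi_\la \leq \phi_\infty$ pointwise gives $\{\phi_\la \geq t\} \subseteq \{\phi_\infty \geq t\}$ and hence $\Pro(\phi_\la(\g,\h) \geq t) \leq \Pro(\phi_\infty(\g,\h) \geq t)$ for every $\la$. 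Chaining with the regularized CGMT,
\begin{equation*}
\Pro(\Phi_\infty(\X) > t) \;=\; \lim_{\la\to\infty}\Pro(\Phi_\la(\X) > t) \;\leq\; 2\lim_{\la\to\infty}\Pro(\phi_\la(\g,\h) \geq t) \;\leq\; 2\Pro(\phi_\infty(\g,\h) \geq t),
\end{equation*}
which is the advertised inequality.

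The main obstacle is the pointwise convergence step: one must treat the infeasible case separately and must be careful that when $\Phi_\infty < \infty$ the bounded-penalty argument actually forces the subsequential limit $\w^\star$ to lie in the constraint set $\{\w : \X\w = 0\}$; compactness of $\Sc$ (for extracting subsequences and uniform positivity of $\tn{\X\w}$ in the infeasible case) and continuity of $\psi$ are both indispensable. Convexity of $\Sc$ is not used here but is inherited from the hypothesis of the regularized CGMT invoked at the outset.
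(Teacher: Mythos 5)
Your proof is correct, and while it shares the paper's overall skeleton (apply the finite-$\la$ regularized CGMT on the convex compact $\Sc$, then send $\la\to\infty$), the way you justify the limit is genuinely different. The paper's proof of this lemma exploits convexity: it writes $\Phi_\infty(\X)=\max_{a\geq 0}\min_{\w\in\Sc}a\tn{\X\w}+\psi(\w)$ and swaps the min and max (convex-concavity of the Lagrangian over the convex compact $\Sc$) to identify $\Phi_\infty(\X)=\lim_{\la\to\infty}\Phi_\la(\X)$, and likewise for $\phi_\infty$; it then passes limits through probabilities via the Dominated Convergence Theorem applied to the indicators $1_{\Phi_\la(\X)>t}$. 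You instead prove the pointwise monotone convergence $\Phi_\la\uparrow\Phi_\infty$ directly by a compactness/subsequence argument (bounded penalty forces the subsequential limit into $\{\X\w=0\}$, with the infeasible case handled by $\min_{\w\in\Sc}\tn{\X\w}>0$), which never uses convexity of $\Sc$ or $\psi$ — in effect you reprove, by a different argument, what the paper isolates as its Lemma \ref{lem continuous limit} and uses only for the non-convex case in Lemma \ref{lem general constraint}. You also replace DCT by continuity of probability along the monotone events $\{\Phi_\la(\X)>t\}$, and on the AO side you shortcut the limit entirely, using only $\phi_\la\leq\phi_\infty$ to get $\Pro(\phi_\la(\g,\h)\geq t)\leq\Pro(\phi_\infty(\g,\h)\geq t)$. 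What each buys: the paper's duality route is shorter for the convex case and gives the identification of both limits in one stroke; your route is more elementary and uniform, since the same limit argument would serve both the convex and general compact cases, with convexity entering only through the hypothesis of the two-sided regularized CGMT you invoke at the outset (and, as in the paper, that upper-tail inequality is stated inside the proof of Theorem 3 of the CGMT reference rather than in its statement, so it should be cited accordingly).
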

\begin{proof}
Using convex-concavity of $\Lcp(\w,a)=a\tn{\X\w}+\psi(\w)$ we have that
\begin{align}
\Phi_\infty(\X)&=\min_{\w\in\Sc,\X\w=0}\psi(\w)\\
&=\max_{a\geq 0} \min_{\w\in\Sc}a\tn{\X\w}+\psi(\w)\\
&=\lim_{\la\rightarrow\infty}\max_{0\leq a\leq \la} \min_{\w\in\Sc}a\tn{\X\w}+\psi(\w)\\
&=\lim_{\la\rightarrow\infty}\min_{\w\in\Sc}\max_{0\leq a\leq \la} a\tn{\X\w}+\psi(\w)\\
&=\lim_{\la\rightarrow\infty} \Phi_\la(\X).
\end{align}
Note that if the problem is infeasible, both sides yield $\infty$. Similarly using convex-concavity of $\Lca(\w,a)=a(\tn{\w}\tn{\g}+\h^T\w)_++\psi(\w)$, we have
\[
\Phi_\infty(\g,\h)=\lim_{\la\rightarrow\infty} \phi_\la(\g,\h).
\]
Now that we connected the equality constrained problems $\Phi_\infty$ and $\phi_\infty$ to regularized problems, we proceed with establishing a probabilistic bound using CGMT. We remark that Theorem 3 of \cite{thrampoulidis2015regularized} does not explicitly state $\Pro(\Phi_\la(\X)>t)\leq2\Pro(\phi_\la(\X)\geq t)$. However it is explicitly stated in the proof of this theorem (see Proof of Eq (13) in pg 22). Define the indicator function $E_\la=1_{\Phi_\la(\X)> t}$. Observe that, for any choice of $\X$,
\[
\lim_{\la\rightarrow\infty}E_\la=\lim_{\la\rightarrow\infty}1_{\Phi_\la(\X)> t}=1_{\Phi_\infty(\X)> t}.
\]
Note that, if the problem is infeasible, then $\lim_{\la\rightarrow\infty}E_\la=E_\infty=1$. To proceed, we are in a position to apply Dominated Convergence Theorem to find
\begin{align}
&\lim_{\la \rightarrow\infty}\E[E_\la]=\E[E_\infty]\iff\Pro(\Phi_\infty(\X)> t)=\lim_{\la\rightarrow\infty}\Pro(\Phi_\la(\X)> t).
\end{align}
Applying the identical argument on $\phi_{\g,\h}$ to find $\Pro(\phi_\infty(\g,\h)\geq t)=\lim_{\la\rightarrow\infty}\Pro(\phi_\la(\g,\h)\geq t)$, we obtain the desired relation
\begin{align}
\Pro(\Phi_\infty(\X)> t)&=\lim_{\la\rightarrow\infty}\Pro(\Phi_\la(\X)> t)\\
&\leq 2\lim_{\la\rightarrow\infty}\Pro(\phi_\la(\g,\h)\geq t)\\
&= 2\Pro(\phi_\infty(\g,\h)\geq t).
\end{align}
\end{proof}
\subsubsection{Proof for the general case}
\begin{lemma} \label{lem general constraint}Given a compact set $\Sc$, define the PO and AO problems as in Lemma \ref{lem convex}. We have that
\begin{align}
\Pro(\Phi_\infty(\X)<t)\leq2\Pro(\phi_\infty(\g,\h)< t).
\end{align}
\end{lemma}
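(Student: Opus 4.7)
My plan is to mirror the argument of Lemma \ref{lem convex}, but since $\Sc$ is no longer assumed convex I cannot appeal to minimax duality to identify $\Phi_\infty$ (resp.\ $\phi_\infty$) with $\lim_{\la\to\infty}\Phi_\la$ (resp.\ $\lim_{\la\to\infty}\phi_\la$). Instead I will establish these identifications directly from compactness of $\Sc$, apply the standard finite-$\la$ CGMT (Theorem 3 of \cite{thrampoulidis2015regularized}, whose upper-bound direction requires only compactness, not convexity), and then pass to two successive monotone limits.

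The first step is to show $\Phi_\la(\X)\uparrow\Phi_\infty(\X)$ pointwise in $\X$. Monotonicity in $\la$ is immediate from the form $\la\tn{\X\w}+\psi(\w)$. If the feasible set $\{\w\in\Sc:\X\w=0\}$ is empty, compactness of $\Sc$ together with continuity of $\w\mapsto\tn{\X\w}$ yields $c:=\min_{\w\in\Sc}\tn{\X\w}>0$, which forces $\Phi_\la\geq \la c+\min_{\w\in\Sc}\psi(\w)\to\infty=\Phi_\infty$. Otherwise, let $\w_\la\in\Sc$ be any minimizer of $\Phi_\la$; the bound $\Phi_\la\leq \Phi_\infty<\infty$ gives the quantitative constraint-violation decay $\tn{\X\w_\la}\leq \la^{-1}(\Phi_\infty-\min_{\w\in\Sc}\psi(\w))\to 0$, and compactness extracts a subsequential limit $\w^\ast\in\Sc$ with $\X\w^\ast=0$ and $\psi(\w^\ast)\leq \lim_\la\Phi_\la\leq \Phi_\infty$, hence equality. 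The identical argument, with the closed half-space $\{\tn{\w}\tn{\g}\leq\h^T\w\}$ in place of $\{\X\w=0\}$, gives $\phi_\la(\g,\h)\uparrow\phi_\infty(\g,\h)$.

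With both monotone convergences in hand, the probability bound follows by an $\epsilon$-slackening argument. For any fixed $\epsilon>0$, since $\Phi_\la\leq\Phi_\infty$ pointwise,
\[
\Pro(\Phi_\infty(\X)<t-\epsilon)\leq \Pro(\Phi_\la(\X)<t-\epsilon)\leq 2\Pro(\phi_\la(\g,\h)\leq t-\epsilon),
\]
the last step being finite-$\la$ CGMT at level $t-\epsilon$. Sending $\la\to\infty$, monotonicity of $\phi_\la$ makes $\{\phi_\la\leq t-\epsilon\}$ a decreasing family of events with intersection $\{\phi_\infty\leq t-\epsilon\}$, so continuity of probability yields $\Pro(\Phi_\infty<t-\epsilon)\leq 2\Pro(\phi_\infty\leq t-\epsilon)\leq 2\Pro(\phi_\infty<t)$. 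Sending $\epsilon\downarrow 0$ and invoking continuity of probability from below (since $\{\Phi_\infty<t\}=\bigcup_{\epsilon>0}\{\Phi_\infty<t-\epsilon\}$) concludes $\Pro(\Phi_\infty(\X)<t)\leq 2\Pro(\phi_\infty(\g,\h)<t)$.

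The main obstacle is the first step: in the absence of convexity, the identification $\Phi_\la\uparrow\Phi_\infty$ is not automatic from minimax duality and must be argued via subsequential compactness combined with the quantitative decay bound on $\tn{\X\w_\la}$. The remaining steps are routine monotone-convergence bookkeeping, with the strict inequalities in the statement absorbed by the slack $\epsilon$ that is sent to zero at the end.
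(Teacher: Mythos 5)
Your proof is correct, and it shares the paper's overall skeleton (apply the finite-$\la$ CGMT bound $\Pro(\Phi_\la(\X)<t)\leq 2\Pro(\phi_\la(\g,\h)\leq t)$ for compact $\Sc$, then pass to $\la\to\infty$), but you handle both limiting steps by different means. First, you establish the pointwise identifications $\Phi_\la\uparrow\Phi_\infty$ and $\phi_\la\uparrow\phi_\infty$ directly from compactness via the penalization bound $\tn{\X\w_\la}\leq \la^{-1}(\Phi_\infty-\min_{\w\in\Sc}\psi(\w))$ and a subsequential limit; the paper instead invokes a separate result (Lemma \ref{lem continuous limit}), whose proof proceeds by contradiction through uniform continuity and nested sublevel sets $\Cc_\alpha$ — your argument is a leaner derivation of the same fact, and you correctly observe that only monotonicity in $\la$ (not minimax duality) is needed, so non-convexity of $\Sc$ costs nothing. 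Second, to transfer the inequality to $\la=\infty$ the paper applies the Dominated Convergence Theorem to the indicators $1_{\Phi_\la(\X)<t}$, implicitly using pointwise convergence to $1_{\Phi_\infty(\X)<t}$, which is delicate precisely on the boundary event $\{\Phi_\infty(\X)=t\}$ (there $\Phi_\la<t$ can hold for every finite $\la$); your $\epsilon$-slackening combined with continuity of probability along the monotone families $\{\phi_\la\leq t-\epsilon\}$ (from above) and $\{\Phi_\infty<t-\epsilon\}$ (from below) sidesteps this entirely and, as a bonus, delivers the strict-inequality form $2\Pro(\phi_\infty(\g,\h)<t)$ exactly as stated, whereas the paper's own chain terminates at $2\Pro(\phi_\infty(\g,\h)\leq t)$. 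The only cosmetic slip is calling the AO constraint set $\{\w:\tn{\w}\tn{\g}\leq\h^T\w\}$ a half-space — it is a closed convex cone — but your argument only uses that the penalty $(\tn{\w}\tn{\g}-\h^T\w)_+$ is continuous, nonnegative, and vanishes exactly on that set, so nothing breaks.
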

\begin{proof} The proof is similar to that of Lemma \ref{lem convex}. For a general compact set $\Sc$, application of Gordon's theorem yields the one-sided bound
\begin{align}
\Pro(\Phi_\la(\X)<t)\leq2\Pro(\phi_\la(\g,\h)\leq t).
\end{align}
To move from finite $\la$ to infinite, we make use of Lemma \ref{lem continuous limit}. Define the indicator function $E_\la=1_{\Phi_\la(\X)\leq t}$. Using Lemma \ref{lem continuous limit}, for any choice of $\X$, $\lim_{\la\rightarrow\infty}E_\la=\lim_{\la\rightarrow\infty}1_{\Phi_\la(\X)< t}=1_{\Phi_\infty(\X)< t}$. Note again that, if the problem is infeasible, then $\lim_{\la\rightarrow\infty}E_\la=E_\infty=0$. To proceed, we are in a position to apply Dominated Convergence Theorem to find
\begin{align}
&\lim_{\la \rightarrow\infty}\E[E_\la]=\E[E_\infty]\iff\Pro(\Phi_\infty(\X)< t)=\lim_{\la\rightarrow\infty}\Pro(\Phi_\la(\X)< t).
\end{align}
Applying the identical argument on $\phi_{\g,\h}$ to find $\Pro(\phi_\infty(\g,\h)\leq t)=\lim_{\la\rightarrow\infty}\Pro(\phi_\la(\g,\h)\leq t)$, we obtain the desired relation
\begin{align}
\Pro(\Phi_\infty(\X)< t)&=\lim_{\la\rightarrow\infty}\Pro(\Phi_\la(\X)< t)\\
&\leq 2\lim_{\la\rightarrow\infty}\Pro(\phi_\la(\g,\h)\leq t)\\
&= 2\Pro(\phi_\infty(\g,\h)\leq t).
\end{align}
\end{proof}
\begin{lemma}\label{lem continuous limit} Let $\Sc$ be a compact set and $\psi(\cdot)$ be a continuous function and $f(\w)$ be a non-negative continuous function. Then
\[
\lim_{\la\rightarrow\infty}\min_{\w\in\Sc}\la f(\w)+\psi(\w)=\min_{\w\in\Sc,f(\w)=0}\psi(\w)
\]
Thus, setting $f(\w)=\tn{\X\w}$ and $f(\w)=\tn{\w}\tn{\g}-\h^T\w$, we have that
\begin{align*}
&\lim_{\la\rightarrow\infty}\Phi_{\la}(\X)=\Phi_{\infty}(\X)\\
&\lim_{\la\rightarrow\infty}\phi_{\la}(\g,\h)=\phi_{\infty}(\g,\h).
\end{align*}
\end{lemma}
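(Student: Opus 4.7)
The plan is a standard exact-penalty/$\Gamma$-convergence argument in two directions, with compactness of $\Sc$ doing the main work. Write $F(\la)=\min_{\w\in\Sc}\la f(\w)+\psi(\w)$ and $F^\star=\min_{\w\in\Sc,\,f(\w)=0}\psi(\w)$, with the convention $F^\star=+\infty$ when the feasible set is empty. Observe first that $F(\la)$ is non-decreasing in $\la$, since $f\geq 0$. The upper bound $\limsup_{\la\to\infty}F(\la)\leq F^\star$ is immediate: any $\w\in\Sc$ with $f(\w)=0$ gives $\la f(\w)+\psi(\w)=\psi(\w)$, so $F(\la)\leq\psi(\w)$ for every feasible $\w$, hence $F(\la)\leq F^\star$ for all $\la$. (If $F^\star=+\infty$ this step is vacuous.)

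For the reverse inequality $\liminf_{\la\to\infty}F(\la)\geq F^\star$, I would use compactness of $\Sc$ together with continuity of $f$ and $\psi$. For each $\la>0$, since $\la f+\psi$ is continuous on the compact set $\Sc$, a minimizer $\w_\la\in\Sc$ exists. Take any sequence $\la_k\to\infty$; by compactness, extract a subsequence (still denoted $\w_{\la_k}$) with $\w_{\la_k}\to\w^\star\in\Sc$. Split into cases:

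\textbf{Feasible case.} Suppose $F^\star<\infty$. Then $F(\la_k)\leq F^\star$, so $\la_k f(\w_{\la_k})\leq F^\star-\psi(\w_{\la_k})$, and the right-hand side is bounded because $\psi$ is continuous on compact $\Sc$. Dividing by $\la_k$ forces $f(\w_{\la_k})\to 0$, so by continuity $f(\w^\star)=0$, i.e.\ $\w^\star$ is feasible, giving $\psi(\w^\star)\geq F^\star$. Since $\la_k f(\w_{\la_k})\geq 0$, we also have $F(\la_k)=\la_k f(\w_{\la_k})+\psi(\w_{\la_k})\geq\psi(\w_{\la_k})\to\psi(\w^\star)\geq F^\star$. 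Combined with the upper bound, $F(\la_k)\to F^\star$, and since the original sequence was arbitrary, $\lim_{\la\to\infty}F(\la)=F^\star$.

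\textbf{Infeasible case.} Suppose the feasible set is empty, so $F^\star=+\infty$, and I need $F(\la)\to+\infty$. If not, some subsequence $F(\la_k)$ stays bounded above by $M$. Then $\psi(\w_{\la_k})\leq M$ and $\la_k f(\w_{\la_k})\leq M-\inf_{\Sc}\psi$, which is finite by compactness. Hence $f(\w_{\la_k})\to 0$, and extracting a convergent subsequence $\w_{\la_k}\to\w^\star\in\Sc$ gives $f(\w^\star)=0$, contradicting infeasibility.

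I do not anticipate a genuine obstacle here; the only point that needs care is the infeasible case, which is why I would state the lemma (and apply it in Lemma~\ref{lem general constraint}) with the convention $\min_{\varnothing}=+\infty$. The concluding specializations to $f(\w)=\tn{\X\w}$ and $f(\w)=(\tn{\w}\tn{\g}-\h^T\w)_+$ are immediate consequences, matching the definitions of $\Phi_\la,\phi_\la,\Phi_\infty,\phi_\infty$ in Theorem~\ref{thm closed}.
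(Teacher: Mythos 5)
Your proof is correct, and it takes a cleaner route than the paper's. The paper argues by contradiction: it assumes $\lim_\la c_\la < c_\infty$ by some margin $\eps$, invokes uniform continuity of $\psi$ on the compact set $\Sc$ to conclude that the penalized minimizers $\w_\la$ stay at distance $\geq\delta$ from the feasible set $\Cc_0=\{f=0\}\cap\Sc$, then traps $\w_\la$ in shrinking sublevel sets $\Cc_{\Gamma/\la}$, extracts a cluster point via Bolzano--Weierstrass, and derives a contradiction from $\bigcap_{\alpha>0}\Cc_\alpha\subseteq\Cc_0$. You instead run a direct two-sided (exact-penalty/$\Gamma$-convergence style) argument: the inequality $F(\la)\leq F^\star$ is immediate, and for the reverse direction you use the same crucial observation the paper uses — boundedness of $\la f(\w_\la)$ forces $f(\w_\la)\to 0$, so any cluster point is feasible — but then you finish simply with $F(\la_k)\geq\psi(\w_{\la_k})\to\psi(\w^\star)\geq F^\star$, bypassing uniform continuity, the distance bound, and the sublevel-set intersection entirely. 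This buys a shorter proof with fewer moving parts and, unlike the paper, an explicit treatment of the infeasible case (the paper dismisses it in one line, which is fine since $\min_\Sc f>0$ there, but your version spells it out). Two small remarks: your concluding step ``since the original sequence was arbitrary'' really establishes convergence only along the extracted subsequence, but this is harmless because you already noted $F(\la)$ is non-decreasing and bounded above by $F^\star$, so its limit exists and the subsequence pins it down (alternatively, the standard sub-subsequence principle applies); and your replacement of $f(\w)=\tn{\w}\tn{\g}-\h^T\w$ by its positive part is the right reading of the statement, since only the positive part is non-negative and it is exactly what appears in $\phi_\la$, with $\{f=0\}$ matching the constraint $\tn{\w}\tn{\g}\leq\h^T\w$.
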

\begin{proof} 
Since $f$ is continuous, it has closed sub-level sets. Suppose $\{\w\in\Sc\bgl f(\w)=0\}= \emptyset$. Since $\Sc$ is compact, both sides are infinity and the equality holds. To proceed, we assume the problem is feasible. If $\min_{\w\in\Sc}\psi(\w)=\min_{\w\in\Sc,f(\w)=0}\psi(\w)$ again both sides are equal to $\min_{\w\in\Sc}\psi(\w)$ thus we assume  the right-hand side objective is strictly larger than $\min_{\w\in\Sc}\psi(\w)$. Define the sublevel sets $\Cc_{\alpha}=\Sc\cap \{\w\bgl f(\w)\leq \alpha\}$. 

Let $c_\la=\min_{\w\in\Sc}\la f(\w)+\psi(\w)$ and $c_\infty=\min_{\w\in\Sc,f(\w)=0}\psi(\w)$. Let $\w_{\la}=\arg\min_{\w\in\Sc}\la f(\w)+\psi(\w)$ and $\w_\infty=\arg\min_{\w\in\Sc,f(\w)=0}\psi(\w)$ be optimal solutions of regularized and constrained problems achieving $c_\la,c_\infty$ respectively. If the claim is wrong, then for some $\eps>0$ and all $\la>0$, $c_\la\leq c_\infty-\eps$. Since $f$ is nonnegative, this also implies that $\psi(\w_\la)\leq \psi(\w_\infty)-\eps$.

Since $\psi$ is a continuous function, $\psi$ uniformly converges on $\Sc$. Uniform convergence implies that for any $\eps>0$, there exists $\delta>0$ such that for all pairs $\tn{\w-\vb}<\delta$, we have $|\psi(\w)-\psi(\vb)|< \eps$. Conversely, if $|\psi(\w)-\psi(\vb)|\geq \eps$, we have that $\tn{\w-\vb}\geq\delta$. In our context, this means that, for all $\la\geq 0$
\[
\text{dist}(\w_\la,\Cc_0)\geq \delta.
\]
Set $\Gamma=\psi(\w_\infty)-\min_{\w\in\Sc}\psi(\w)>0$. For any $\la\geq 0$, $\la f(\w_\la)\leq \Gamma\implies f(\w_\la)\leq \Gamma/\la\implies \w_\la\in \Cc_{\Gamma/\la}$. This implies that for any choice of $\alpha>0$ (via $\alpha\leftrightarrow \Gamma/\la$), $\Cc_\alpha$ contains points that are $\delta$ away from $\Cc_0$. Note that $\Cc_\alpha$ is a non-decreasing sequence of sets (i.e.~$\Cc_{\alpha_1}\subseteq\Cc_{\alpha_2}$ whenever $\alpha_1\leq \alpha_2$). Via Bolzano–Weierstrass theorem $(\w_\la)_{\la\geq \Gamma}$ contains a convergent subsequence. Index this subsequence by $(\w_{\la_i})_{i=1}^\infty$ and suppose $\bar{\w}=\lim_{i\rightarrow\infty} \w_{\la_i}$. Clearly $\text{dist}(\bar{\w},\Cc_0)\geq \delta$ as distance is a continuous function. Note that $\bar{\w}\in \Cc_{\alpha}$ for any $\alpha>0$ since $\Cc_{\alpha}$ is non-decreasing and compact thus $\Cc_{\alpha}$ contains all the elements of $(\w_{\la_i})_{i=1}^\infty$ after a certain point including its limit. Finally, define $\bar{\Cc}=\lim_{\alpha\rightarrow0^+} \Cc_\alpha=\bigcap_{\alpha>0}\Cc_\alpha$. Clearly $\bar{\w}\in\bar{\Cc}$. This means that $\bar{\Cc}$ contains the element $\bar{\w}$ which is not inside $\Cc_0$. Finally, this leads to contradiction since $\bar{\Cc}\subseteq\Cc_0$. Specifically, if $\bar{\w}\in\bar{\Cc}$, then this implies 
\[
f(\bar{\w})\leq \alpha~\text{for all}~\alpha> 0\implies f(\bar{\w})=0\implies \bar{\w}\in\Cc_0.
\]
This concludes the proof.
\end{proof}



\end{document}